\newcolumntype{H}{>{\setbox0=\hbox\bgroup}c<{\egroup}@{}}
\tikzset{cross/.style={cross out, draw=black, minimum size=2*(#1-\pgflinewidth), inner sep=0pt, outer sep=0pt},
	cross/.default={1pt}}
\newcommand{\ie}{i.\,e.,\ }
\newcommand{\eg}{e.\,g.,\ }
\tikzset{
	font={\fontsize{8pt}{9}\selectfont}}
\tikzset{every picture/.style={/utils/exec={\rmfamily}}}
		\pgfmathsetmacro{\dist}{5pt*\pgfkeysvalueof{/pgf/decoration/distance}/abs(\pgfkeysvalueof{/pgf/decoration/distance})} 
\tikzset{dim/.style args={#1,#2}{decoration={add dim,distance=#2},
		decorate,
		postaction={decorate,decoration={text along path,
				raise=#2,
				text align={align=center},
				text={#1}}}}}
\Crefname{equation}{Equation}{Equations}
\crefname{equation}{Eq.}{Eqs.}
\begin{document}

\title{The Many-to-Many Mapping Between \\the Concordance Correlation Coefficient, \\ and the Mean Square Error}

\author{\name Vedhas Pandit \email 
vedhas.pandit@informatik.uni-augsburg.edu 
	\\
       \addr Chair of Embedded Intelligence for Health Care and Wellbeing,\\ University of Augsburg, Germany
       \AND
       \name Bj\"orn Schuller \email schuller@informatik.uni-augsburg.edu \\
       \addr Chair of Embedded Intelligence for Health Care and Wellbeing,\\ University of Augsburg, Germany\\
       Group on Language, Audio, and Music (GLAM),\\Imperial College London, United Kingdom}


\maketitle

\begin{abstract}
	We derive the mapping between two of the most pervasive utility functions, the mean square error ($MSE$) and the concordance correlation coefficient ($CCC$, $\rho_c$). Despite its drawbacks, $MSE$ is one of the most popular performance metrics (and a loss function);  along with lately $\rho_c$  in many of the sequence prediction challenges. Despite the ever-growing simultaneous usage, \eg inter-rater agreement, assay validation, a mapping between the two metrics is missing, till date. While minimisation of $L_p$ norm of the errors or of its positive powers (\eg $MSE$) is aimed at $\rho_c$ maximisation, we \emph{reason} the often-witnessed ineffectiveness of this popular loss function with graphical illustrations. The discovered formula uncovers not only the counterintuitive revelation that `$MSE_1<MSE_2$' does \emph{not} imply `$\rho_{c_1}>\rho_{c_2}$', but also provides the precise range for the $\rho_c$ metric for a given $MSE$. We discover the conditions for $\rho_c$ optimisation for a given $MSE$; and as a logical next step, for a given set of errors. We generalise and discover the conditions for any given $L_p$ norm, for an even $p$. We present newly discovered, albeit apparent, mathematical paradoxes. The study inspires and anticipates a growing use of $\rho_c$-inspired loss functions \eg $\left|\sfrac{MSE}{\sigma_{XY}}\right|$, replacing the traditional $L_p$-norm loss functions in multivariate regressions.
\end{abstract}

\begin{keywords}
Multivariate Analysis, Concordance, Correlation, $L_p$ norms, Mapping
\end{keywords}

\section{Introduction}
\label{s:intro}

The need to quantify inter-rater, inter-device or inter-method agreement arises often in almost every research field 
\citep{atkinson1998statistical,conroy2003estimation,lombard2002content,deyo1991reproducibility,banerjee1999beyond}.
This includes, for example, a comparison between a gold standard sequence (\eg device measurements) against the prediction sequences from a trained machine learning model, or the annotation sequences from another independent observer.

\newpage
\subsection{Literature survey: Distance and similarity metrics}
For comparisons of this type, 
one of the most popular distance metrics in use today is the mean square error ($MSE$).
It measures the average squared error, \ie the average squared difference between the two variables \citep{willmott2005advantages,fisher1920012}.
However, 
$MSE$ 
requires a further magnitude comparison against the measurements themselves for any meaningful interpretation. 
$MSE$, as a utility function, has also been criticised because of the unboundedness and the convexity of the function \citep{berger2013statistical}. Furthermore, the $MSE$ metric fails to capture correlated variations of the quantities being measured (\ie whether a greater value of one corresponds to a greater value of the other).  Carl Friedrich Gauss, who himself proposed the square of the error as a measure of loss or inaccuracy, too admitted to $MSE$'s shortcomings and arbitrariness \citep{sheynin1979cf}. 
He defended his choice to be simply
\emph{``ein bloss auf Principen der Zweckmassigkeit basierende'' }\cite[p.~371]{gauss1860briefwechsel}, or as \emph{``an appeal to mathematical simplicity and convenience''} \citep[p.~6]{lehmann2006theory}. In summary, while popular, $MSE$ does not serve as a standalone reliable performance metric for a good number of reasons. Other metrics based on $L_p$ norm of the errors, \eg mean absolute error ($MAE$), too suffer from exactly the same problems. 

Given the populations $X:=(x_i)_1^N$ and $Y:=(y_i)_1^N$, we have:
\vspace{-0.3cm}
\begin{align}
L_p 
= {\big[\sum _{i=1}^{N}|x_{i}-y_{i}|^p\big]}^\frac{1}{p}, 
MSE
=\frac{
	\displaystyle
	\sum _{i=1}^{N}(x_{i}-y_{i})^2}{N} 
= \frac{{L_2}^2}{N},
MAE
=\frac{
	\displaystyle
	\sum _{i=1}^{N}\left|x_{i}-y_{i}\right|}{N}
= \frac{L_1^1}{N}.
\label{eqnLpDef}
\end{align}

A quest for a summary statistic--that effectively quantifies the extent of   similarity and association (\ie dependence or joint variability) between two variables--has led researchers to invent several indices. \cite{brockmeier2017quantifying} propose an unsupervised measure to quantify informativeness of various similarity measures when used to compute correlation matrices.
For comparing the two rankings, metrics such as Kendall's Tau \citep{kendall1938new}, Spearman's rank correlation coefficient \citep{spearman1961proof}, Quotient correlation \citep{zhang2008quotient} have been devised. As for the nominal and ordinal classification tasks,  Cohen's kappa coefficient ($\kappa$) \citep{smeeton1985early,galton1892finger,cohen1960coefficient}, intraclass correlation coefficient ($ICC$) \citep{fisher1925statistical,koch2004intraclass}, sequence-centric distance functions \citep{rieck2008linear}, separation distance and rate \citep{hernandez2012unified,collier2016minimax} are some of the popular metrics. 
While a covariance metric quantifies correlated variations of the quantities being measured, 
similar to $MSE$, 
this metric too 
is impossible to interpret without knowing
the relative magnitudes of those measurements.
 A normalised covariance metric, called the Pearson correlation coefficient ($\rho$)  \citep{galton1877typical,galton1877typicala,pearson1895note}, quantifies the strength of the linear relationship between two variables, ignoring the bias and the scale.
 $\rho$ is the covariance of the two variables normalised by the product of their standard deviations. 
%
%
%
%
%
While there exist multiple ways to interpret $\rho$
\citep{lee1988thirteen,taylor1990interpretation,weida1927various,rider1930survey,szekely2007measuring},  $\rho$ essentially represents extent of the linear relationship between two variables. 
%
\begin{align}
&\therefore 
\rho
=
{\frac {\operatorname {cov} (X,Y)}{\sigma _{X}\sigma _{Y}}}
=
{\frac { \sigma_{XY}}{\sigma _{X}\sigma _{Y}}}
=\frac{
	\sum _{i=1}^{n}(x_{i}-{\mu_{X}})(y_{i}-{\mu_{Y}})
}{
	\sqrt {\sum _{i=1}^{n}(x_{i}-{\mu_{X}})^{2}}
	{\sqrt {\sum _{i=1}^{n}(y_{i}-{\mu_{Y}})^{2}}}
},
\label{eqnrho}
\\
\nonumber
\text{where: } 
&
\sigma_{X} =\sqrt{\frac {\sum_{i=1}^{N}(x_{i}-\mu_{X})^{2}}{N}},
\quad
\sigma_{Y} =\sqrt{\frac {\sum_{i=1}^{N}(y_{i}-\mu_{Y})^{2}}{N}}, \\
&\nonumber
\mu_{X}
=\frac{\sum_{i=1}^{N}x_i}{N},
\quad
\mu_{Y}
=\frac{\sum_{i=1}^{N}y_i}{N}, 
\quad
\operatorname{cov(X,Y)}=\sigma_{XY}
=\frac{\sum _{i=1}^{n}(x_{i}-{\mu_{X}})(y_{i}-{\mu_{Y}})}{N}.
\end{align}


While $\rho$ signifies a linear relationship, the $\rho$ measure fails to quantitatively distinguish between a linear relationship and an identity relationship. The $\rho$ measure also fails to quantitatively distinguish between the linear relationship with a constant offset, the one without any offset, and an identity relationship.
In summary, it fails to capture any departure from the 45$^\circ$ (slope = 1) line, \ie any shifts in the scale (slope) and the location (offset). 
Thus, while successful in capturing the precision of the linear relationship, the $\rho$ measure completely misses out on the accuracy. 
The concordance correlation coefficient  ($CCC$ or $\rho_c$) \citep{lin1989concordance} metric goes a step further, and penalises any deviation from the identity relationship, \ie the non-unity scaling and the non-zero bias. 
$\rho_c$ is a product of $\rho$ with the term $C_b$ that penalises such deviations in the scale and the location \citep{lin1989concordance}.  The $C_b$ component captures the accuracy, while the $\rho$ component represents the precision. Formally,
%
\begin{align}
\nonumber
C_b &= 
\frac{2}{\left(v + 
	\frac{1}{v}
	+ u^2\right)}
, 
\hspace{45pt}
\text{where: }&& v=\frac{\sigma_X}{\sigma_Y}=\text{scale-penalty, }
u=\frac{(\mu_X-\mu_Y)}{\sqrt{\sigma_X\sigma_Y}} = \text{shift-penalty.}
\\
%
\nonumber
\implies C_b  &= 
\frac{2}{\left(
	\frac{\sigma_X}{\sigma_Y}
	+ 
	\frac{\sigma_Y}{\sigma_X}
	+ \left(
	\frac{\mu_X-\mu_Y}{\sqrt{\sigma_X\sigma_Y}}
	\right)^2\right)}
&&\phantom{v}=
\displaystyle{
	\frac{2\sigma_X\sigma_Y}{\sigma_X^2 + \sigma_Y^2 + (\mu_X-\mu_Y)^2}
}
.
\\
\tikzmark{begrhoc}
\therefore \quad   \rho_c &:= \rho C_b 
=  \frac{2\rho\sigma_X\sigma_Y}{\sigma_X^2 + \sigma_Y^2 + (\mu_X-\mu_Y)^2}
&&\phantom{v}=
\frac{2\sigma_{XY}}{\sigma_X^2 + \sigma_Y^2 + (\mu_X-\mu_Y)^2}.
\label{eqnrhoc}
\tikzmark{endrhoc}
\end{align}

The concordance correlation coefficient ($\rho_c$) has the following characteristics:
%
%
\begin{align}
\nonumber
-&1 \quad&&\leq\quad -\left|\rho\right| &&\leq\quad  \rho_c \leq\quad \left|\rho\right| \leq\quad 1, \quad\quad\quad sign(\rho_c)=sign(\rho).\\
\nonumber
&\rho_c  &&= \quad 0 &&\text{ if and only if: } \quad
\rho = 0, 
\text{ \ie} 
\sigma_{XY}=0.\\
\nonumber
&\rho_c  &&= \quad \rho \quad &&\text{ if and only if: } \quad
\sigma_X = \sigma_Y \text{ and } \mu_X=\mu_Y.\\
\nonumber
&\rho_c  &&=\quad 1 \quad &&\text{ if and only if: } \quad \rho=1, \quad \sigma_X = \sigma_Y \text{ and } \mu_X=\mu_Y,\\
\nonumber
&\quad&&\quad\quad\text{\ie}&&\text{ if and only if: } \quad x_i=\phantom{-}y_i\quad\forall i : i\in[1,N], i \in \mathbb{N}.\\
\nonumber
\text{Likewise, }\quad &\rho_c  &&=\quad -1 \quad &&\text{ if and only if: } \quad x_i=-y_i\quad\forall i : i\in[1,N], i \in \mathbb{N}.
\end{align}

\subsection{Literature survey: Extensions, generalisations and criticism of $\rho_c$}
The $\rho_c$-measure is based on the distance metric, that is the expected value of the squared difference between the two measurements $X$ and $Y$.  Some have extended the applicability of $\rho_c$ to more than two measurements by proposing new reliability coefficients, \eg the overall concordance correlation coefficient \citep{carrasco2003estimating,barnhart2001modeling,barnhart2002overall}.
Likewise, a more generalised version in terms of the distance function used has also been proposed \citep{king2001robust,king2001generalized}, establishing its similarities with the kappa and weighted kappa coefficients. 
Alternative estimators for evaluating agreement and reproducibility based on the $\rho_c$ have also been proposed \citep{quan1996assessing,st1998evaluating}. Comparing the $\rho_c$ against the previously existing  four intraclass correlation coefficients presented in \citep{shrout1979intraclass,mcgraw1996forming}, Nickerson presents a strong critique of the contributions of the $\rho_c$-metric in evaluating reproducibility \citep{nickerson1997note}. The usability and apparent paradoxes associated with the reliability coefficients have been thoroughly and vehemently debated upon \citep{feinstein1990high,zhao2013assumptions,krippendorff2013commentary}. However, $\rho_c$ remains arguably one of the most popular reproducibility indices, used in a wide range of fields \citep{nishizuka2003proteomic,murtaza2013non,lange1999plurality,ma2013magnetic,lombard2002content,conroy2003estimation}.
%

\subsection{Literature survey: Growing popularity of $\rho_c$ 
}
The popularity of the measure has encouraged researchers to publish macros and software packages likewise \citep{carrasco2013estimation,crawford2007computer}. When it comes to  instance-based ordinal classification, regression, or a sequence prediction task, the machine learning community likewise has begun adapting $\rho_c$ as the performance measure of choice \citep{Trigeorgis16-AFE,Pandit18-TAA,Pandit18-HGI,Schmitt17-OIT}. Take the case of the `Audio/Visual Emotion Challenges' (AVEC) for example. The shift is noticeable, with early challenges using $RMSE$ as the winning criteria, to now $\rho_c$ in those recently held \citep{Ringeval15-POT,Valstar16-POT,Ringeval17-POT,Ringeval18-A2W}. 
Almost without exception, the winners of these challenges have used deep learning models -- which are trained to model the input to output (the raw data or features to prediction) mapping through minimisation of a loss function \citep{bennett2006interplay}. A loss function nominally captures the difference between a prediction from a model and the desired output; its job, consequently, is to encourage a model to drive the prediction of the model close to the desired value.  While the shift in the community to use the $\rho_c$ measure as a performance metric is definitely underway, no attempts have been made to design a loss function specifically tailored to boost $\rho_c$, barring a few lone exceptions \citep{Weninger16-DTR,Pandit19-IKH}. A few recent studies highlight the deterioration of performance through use of inconsistent loss functions (\ie different from the performance metric), and advocate use of a consistent loss function \citep{Pandit19-IKH,atmaja2020evaluation,Trigeorgis16-AFE}. Yet, none 
provides a mathematically rigorous \emph{reasoning} for this often-witnessed phenomenon.

The loss function used in \citealp{Weninger16-DTR,Trigeorgis16-AFE} is directly the $\rho_c$, which is computationally expensive to use at every training step. This is because, the computation of $\rho_c$ necessitates computation of standard deviations of the gold standard and the prediction, covariance between the gold standard and the prediction, the difference between the mean values at every iteration, and latter operations such as squaring, summing, and the division. Also, with $\rho_c$ as the loss function, the partial derivative of $\rho_c$ with respect to the outputs needs to be recalculated as well, to propagate the error down to the input layers using the backpropagation algorithm in neural networks at every step in the training iteration. In this paper, we therefore identify and isolate workable lightweight functions which directly have an impact on the $\rho_c$ metric. 
%
%
We achieve this by reformulating the $\rho_c$ in terms of individual prediction errors.
Recognising the terms that are affected by the error or the prediction `sequence'/ordering alone, we propose a family of candidate loss functions.

\section{Main contributions and organisation of the paper}
The key contribution of this paper is that it 
invalidates
the common notion:  \emph{$MSE$ reduction leads to $\rho_c$ improvement}, 
by deriving the most crucial, yet the missing many-to-many mapping existing between $\rho_c$ and $MSE$ in \Cref{sec_m2mg}. In the section next, \ie in \Cref{sec_m2mm}, we determine the conditions for $\rho_c$ optimisation (\ie for $\rho_{c_{min}}$ and $\rho_{c_{max}}$), 
for a fixed value of $MSE$, and derive the equations for both
$\rho_{c_{min}}$ and $\rho_{c_{max}}$
as a function of $MSE$. 
Using these derived equations, we find the formulations for $\rho_c$ corresponding to the optimised $MSE$  
for any given $L_p$ norm in \Cref{sec_lp} (for any $p\geq 2$). 
Upon establishing the fact that efforts for $MSE$ minimisation do not necessarily yield a superior prediction performance in terms of $\rho_c$ in \Cref{sec_truespan}, we generalise \Cref{sec_m2mm} to any given $L_p$ norm value, for any $p$ that is an even natural number in \Cref{sec_m2mLp}.
We then optimise $\rho_c$ for a special case of a given $MSE$; \ie not only a fixed $MSE$ or $L_p$ norm, but also a fixed set of error values
in \Cref{sec_prob}. 
We supplement our findings with illustrations in \Cref{sec_sewa}. Learning from these insights, we present a family of candidate loss functions in \Cref{sec_cost}. In \Cref{sec_concl}, we summarise our findings and present possible future research directions.


\section{Many-to-many mapping between $MSE$ and $\rho_c$ as a general case}
\label{sec_m2mg}

\begin{theorem}
	For a bivariate population $X:=(x_i)_1^N$, $Y:=(y_i)_1^N$,
	${\rho_c}
	=\left(1+\frac{MSE}{2\sigma_{XY}}\right)^{-1}$.
\end{theorem}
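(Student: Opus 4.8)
The plan is to start from the closed form for $\rho_c$ already established in \Cref{eqnrhoc}, namely $\rho_c = \frac{2\sigma_{XY}}{\sigma_X^2 + \sigma_Y^2 + (\mu_X - \mu_Y)^2}$, invert it, and show that the resulting ratio collapses exactly to $1 + \frac{MSE}{2\sigma_{XY}}$. Taking the reciprocal, the claim becomes equivalent to the single algebraic identity
\begin{align}
\sigma_X^2 + \sigma_Y^2 + (\mu_X - \mu_Y)^2 = 2\sigma_{XY} + MSE,
\nonumber
\end{align}
after which dividing throughout by $2\sigma_{XY}$ and recognising the constant term delivers the stated result.

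First I would expand the $MSE$ definition from \Cref{eqnLpDef}. The key manoeuvre is to split each residual as $x_i - y_i = (x_i - \mu_X) - (y_i - \mu_Y) + (\mu_X - \mu_Y)$, i.e.\ to centre both variables about their own means and carry the mean-difference as a constant offset. Squaring and summing over $i$, the three squared groups reproduce $N\sigma_X^2$, $N\sigma_Y^2$, and $N(\mu_X-\mu_Y)^2$ respectively, while the cross term between the two centred variables yields $-2N\sigma_{XY}$.

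The step I expect to do the real work---and the only place an error could creep in---is verifying that the remaining two cross terms vanish. These are the products of the constant offset $(\mu_X-\mu_Y)$ with each centred variable, and they disappear precisely because $\sum_i (x_i - \mu_X) = 0$ and $\sum_i (y_i - \mu_Y) = 0$ by definition of the sample means. Once these linear terms are eliminated, dividing the surviving sum by $N$ gives $MSE = \sigma_X^2 + \sigma_Y^2 - 2\sigma_{XY} + (\mu_X - \mu_Y)^2$, which is exactly the target identity after rearrangement.

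Finally I would substitute $\sigma_X^2 + \sigma_Y^2 + (\mu_X - \mu_Y)^2 = MSE + 2\sigma_{XY}$ into the inverted form of $\rho_c$, so that $\rho_c^{-1} = \frac{MSE + 2\sigma_{XY}}{2\sigma_{XY}} = 1 + \frac{MSE}{2\sigma_{XY}}$, and invert once more to conclude. Since every operation is a finite-sum manipulation valid for arbitrary real populations, no regularity hypotheses are needed beyond $\sigma_{XY}\neq 0$, which is implicit for the reciprocal to be defined; this is less a genuine obstacle than a bookkeeping exercise, with the vanishing of the offset cross-terms being the single conceptual crux.
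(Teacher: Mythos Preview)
Your proposal is correct and follows essentially the same route as the paper: establish the identity $\sigma_X^2+\sigma_Y^2+(\mu_X-\mu_Y)^2 = MSE + 2\sigma_{XY}$ and substitute it into the denominator of \Cref{eqnrhoc}. The paper simply asserts this identity (citing \Cref{eqnrho} and \cite{lin2000total,lin2002statistical}) rather than deriving it, so your centred-residual expansion and the observation that the offset cross-terms vanish actually supply more detail than the original.
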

\begin{proof}
	\begin{align}
	\sigma_{X}^2+\sigma_{Y}^2+(\mu_X&-\mu_Y)^2
	=
	MSE\hspace{-2pt}+\hspace{-2pt}
	2\sigma_{XY} 
	\quad \quad \quad \because \text{ \Cref{eqnrho}}.
\nonumber
	\\
	\nonumber
	\text{\cite{lin2000total,lin2002statistical} also corrobarate to the equation presented above.}\span\span
	\\
\nonumber
	\text{Also, }
	\quad
	\rho_c
	\quad
	&=
	\frac{2\sigma_{XY}}{\sigma_{X}^2+\sigma_{Y}^2+(\mu_X-\mu_Y)^2} 
	\quad\quad
	\text{ $\because$  \Cref{eqnrhoc}}.\\
	\therefore
	\quad
	\rho_c
	\quad
	&=
	\quad
	\frac{2\sigma_{XY}}{MSE+2\sigma_{XY}}
	\quad\quad\quad\quad
	=
	\quad
	\frac{1}{1+\frac{MSE}{2\sigma_{XY}}},
	\nonumber
	\\
	\text{ \ie}
	\quad
		\mathbf{\rho_c}
		\quad
		&=
		\quad
		\left(1-\frac{MSE}{MSE+2\sigma_{XY}}\right)
		\quad
		=
		\quad
		\left(1+\frac{MSE}{2\sigma_{XY}}\right)^{-1}.
		\label{eqn_cccmsemap}
	\end{align}
\end{proof}
\vspace{-0.75cm}
\section{Why $MSE$ as a loss function fails to improve $\rho_c$}
\label{sec_m2mm}
While the minimisation of $MSE$ loss function and the maximisation of $\rho_c$ performance metric are both directed at achieving the perfect identity relationship between the labels (\ie the gold standard) and the predictions, efforts for the minimisation of $MSE$ do not necessarily translate into the maximisation of $\rho_c$, and vice versa. In this section, we reason and prove this fact mathematically by deriving the conditions and the formulations for $\rho_c$ optimisation at a given $MSE$ (\ie given the error-set $L_2$-norm). 
To this end, we find the conditions and formulations for minimum and maximum possible values of $\rho_c$ at a given $MSE$, by making use of the many-to-many mapping between $\rho_c$ and $MSE$ we have derived in \Cref{eqn_cccmsemap}.
\subsection{$\rho_c$ optimisation, given the error-set $L_2$-norm or the $MSE$}
Inspired by the discovery that the predictions with identical $MSE$ can map to different $\rho_c$ values, the maximum and minimum $\rho_c$ for a constant $MSE$ are found next. The problem statement is, thus:
%

\textit{
	Given (1) a gold standard time series, $G:=(g_i)_1^N$, and (2) a fixed $MSE$ value, find 
	the set(/s) of error values $E:=(e_i)_1^N$ that achieve maximisation and minimsaition of $\rho_c$.
}

\begin{theorem}
	\label{thm41}
	For a given $MSE$, $\rho_c$ is maximised when the errors amounting to $MSE$ are distributed in the same ratio as of the corresponding deviations of gold standard around the mean gold standard.
	%
	That is,
	\begin{alignat}{2}
	\label{eqn_rhocmax} 
		\rho_{{c}_{max}}
		=\frac{2\left(1+{\sqrt{\frac{MSE}{{\sigma_G}^2}}}\right)}{1+\left(1+{\sqrt{\frac{MSE}{{\sigma_G}^2}}}\right)^2},
		\hspace{3pt}
		\text{ when }
		\hspace{3pt} 
		\begin{array}{ll}
		e_i=\left|\sqrt{\frac{ MSE}{{\sigma_G}^2}}\right|\cdot{(g_{i}-\mu_G)},
		\\
		\quad \quad \quad \quad 
		\quad 
		\forall i : i\in[1,N], i \in \mathbb{N}.\end{array}
	\end{alignat}
\end{theorem}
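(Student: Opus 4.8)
The plan is to lean entirely on the closed form from the preceding theorem, ${\rho_c} = \left(1 + \frac{MSE}{2\sigma_{XY}}\right)^{-1}$, and convert the optimisation of $\rho_c$ into an optimisation of the single scalar $\sigma_{XY}$. Writing $f(s) = \left(1 + \frac{MSE}{2s}\right)^{-1} = \frac{2s}{2s + MSE}$ and differentiating gives $f'(s) = \frac{2\,MSE}{(2s+MSE)^2} > 0$, so for a fixed $MSE$ the map $\sigma_{XY}\mapsto\rho_c$ is strictly increasing. Hence maximising $\rho_c$ at a given $MSE$ is exactly maximising the covariance $\sigma_{XY}$ over all admissible error-sets $E$.

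First I would re-express $\sigma_{XY}$ in terms of the fixed gold standard $G$ and the free errors $E$. Taking the prediction as $y_i = g_i + e_i$ (so $e_i$ is the signed excess of the prediction over the gold standard, which leaves $MSE = \frac{1}{N}\sum_i e_i^2$ unchanged), we have $\mu_Y = \mu_G + \mu_E$ and $y_i - \mu_Y = (g_i-\mu_G)+(e_i-\mu_E)$. A direct expansion then gives $\sigma_{XY} = \sigma_G^2 + \frac{1}{N}\sum_{i=1}^{N}(g_i-\mu_G)(e_i-\mu_E)$, and since $\sum_{i=1}^{N}(g_i-\mu_G)=0$ the term in $\mu_E$ vanishes, leaving $\sigma_{XY} = \sigma_G^2 + \frac{1}{N}\sum_{i=1}^{N}(g_i-\mu_G)\,e_i$. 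Because $\sigma_G^2$ is fixed by the problem, the entire maximisation collapses to maximising the linear functional $\sum_{i=1}^{N}(g_i-\mu_G)\,e_i$ of the error-set.

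Next I would impose the $MSE$ constraint $\sum_{i=1}^{N} e_i^2 = N\cdot MSE$ and invoke the Cauchy--Schwarz inequality $\sum_{i=1}^N (g_i-\mu_G)\,e_i \le \sqrt{\sum_{i=1}^N (g_i-\mu_G)^2}\,\sqrt{\sum_{i=1}^N e_i^2}$, with equality precisely when the error vector is collinear with the centred gold standard, i.e. $e_i = \lambda\,(g_i-\mu_G)$, the maximising direction being $\lambda>0$. Substituting this form into the constraint, $\lambda^2\sum_i(g_i-\mu_G)^2 = \lambda^2 N\sigma_G^2 = N\cdot MSE$, pins down $\lambda = \sqrt{MSE/\sigma_G^2}$, exactly the multiplier (the $\left|\sqrt{MSE/\sigma_G^2}\right|$ factor) in the statement. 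Feeding $\sigma_{XY} = \sigma_G^2(1+\lambda)$ and $MSE = \lambda^2\sigma_G^2$ into $\rho_c = \frac{2\sigma_{XY}}{2\sigma_{XY}+MSE}$ and using $2+2\lambda+\lambda^2 = 1+(1+\lambda)^2$ yields $\rho_{c_{max}} = \frac{2(1+\lambda)}{1+(1+\lambda)^2}$, which is the claimed formula once $\lambda$ is written out.

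The main obstacle I anticipate is not the algebra but getting the variational set-up and the signs right: one must recognise that the $MSE$ constraint confines $E$ to a sphere while the objective is linear in $E$, so the optimum sits on the boundary with $E$ parallel to the centred gold standard, which Cauchy--Schwarz (equivalently a Lagrange-multiplier argument) makes rigorous. Care is needed to pick the maximising rather than the minimising direction, hence the positive $\lambda$ and the absolute value in the statement, and to confirm the monotonicity of $\rho_c$ in $\sigma_{XY}$ so that ``largest covariance'' genuinely means ``largest $\rho_c$''; the companion minimisation follows identically by taking the opposite sign of $\lambda$.
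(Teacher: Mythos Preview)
Your proposal is correct and follows the same overall architecture as the paper: reduce $\rho_c$-maximisation at fixed $MSE$ to $\sigma_{XY}$-maximisation via the identity $\rho_c=\bigl(1+\tfrac{MSE}{2\sigma_{XY}}\bigr)^{-1}$, rewrite $\sigma_{XY}=\sigma_G^2+\tfrac1N\sum_i(g_i-\mu_G)e_i$, and then optimise the linear part over the sphere $\sum_i e_i^2=N\cdot MSE$. The one genuine difference is the tool used at this last step: the paper sets up a Lagrange multiplier system $\nabla_{d_i}\mathcal{L}=0$ and solves $y_{z_i}-2\lambda d_i=0$ together with the constraint to obtain $d_i=\pm\sqrt{MSE/\sigma_G^2}\,y_{z_i}$, whereas you reach the same conclusion in one line via Cauchy--Schwarz and its equality case. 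Your route is more elementary and immediately certifies global optimality (no need to argue that the stationary point is a maximum), while the paper's Lagrange formulation has the advantage of generalising mechanically to the later $L_p$ case in \Cref{sec_m2mLp}, where the constraint $\sum_i d_i^k=N\cdot MkE$ is no longer a sphere and Cauchy--Schwarz no longer applies. Your explicit monotonicity check $f'(s)=\tfrac{2\,MSE}{(2s+MSE)^2}>0$ is a nice addition that the paper leaves implicit.
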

\begin{proof}
	Let the prediction and the gold standard sequence be 
	$X:=(x_i)_1^N$ and $Y:=(y_i)_1^N$, not necessarily in that order. 
	Note that, as the formula for $\rho_c$ is symmetric with respect to $X$ and $Y$. As a result, note that which variable represents what sequence does not matter, so far as $\rho_c$ computation is concerned. 
Because
	$
	\rho_c
	=\left(1+\frac{MSE}{2\sigma_{XY}}\right)^{-1}\nonumber
	$, 
	$\rho_c$ optimisation at a given $MSE$ necessitates $\sigma_{XY}$ optimisation.
		 
	\begin{align}
	\setlength{\jot}{10pt}
	\text{Let} \quad d_i&:=x_i-y_i  
		\quad 
	\text{ and }  
		\quad  
	\mu_{D}:=\frac{1}{N}\sum_{i=1}^{N}d_i
	\qquad
	\implies
	\qquad
	\mu_{D} =\mu_X-\mu_Y, 
	\quad
	{MSE}:={\frac {1}{N}}\sum _{i=1}^{N}d_i^{2}.  
	\label{yzdef}\\
	%
\nonumber
	\therefore
	N\sigma_{XY}&=\sum_{i=1}^{N}(x_i-\mu_X)\cdot(y_i-\mu_Y)
	=\sum_{i=1}^{N}(y_i+d_i-\mu_{Y}-\mu_{D})\cdot(y_i-\mu_Y),\\
	&=\sum_{i=1}^{N}(y_i-\mu_Y)^2
	+\sum_{i=1}^{N}d_i\cdot(y_i-\mu_Y)
	-\sum_{i=1}^{N}\mu_D\cdot(y_i-\mu_Y).
\nonumber
	\\
	\text{If } y_i-\mu_Y&:=y_{z_i} 
	\implies
	\sum_{i=1}^{N} \mu_D\cdot
	y_{z_i} 
	= 0 
	\left(\because 
	\sum_{i=1}^{N} (y_i-\mu_Y)=
	N\mu_Y-N\mu_Y
	=0 
	\right).
\nonumber
	\\
		\therefore
		N\cdot\sigma_{XY}
		&=
		\sum_{i=1}^{N}{y_{z_i}}^2
		+\sum_{i=1}^{N}d_i{y_{z_i}}.
	\label{eqncovmax}
	\end{align}
	Thus, 
	we need to maximise $N\cdot\sigma_{XY}$ as given by \Cref{eqncovmax} by tuning 
	$D:=(d_i)_1^N$, while satisfying the condition $\sum_{i=1}^{N}d_i^2-N\cdot MSE=0$ (
	$\because$ \Cref{yzdef}). That is, 
	%
	\begin{alignat}{2}
	\nonumber
	\text{maximise: }\quad &f(d_1,d_2,\cdots,d_N)&&= N\cdot\sigma_{XY}
	=
	\sum_{i=1}^{N}{y_{z_i}}^2
	+\sum_{i=1}^{N}d_i{y_{z_i}},
	\\
	\text{subject to: }\quad &g(d_1,d_2,\cdots, d_N)&&= \sum_{i=1}^{N}d_i^2-N\cdot MSE=0.\nonumber
	\end{alignat}
	Auxiliary Lagrange function	${\mathcal {L}}(d_1,d_2,\cdots,d_N,\lambda )=f(d_1,d_2,\cdots,d_N)-\lambda \cdot g(d_1,d_2,\cdots,d_N)$ is given by:
	%
	%
	\begin{align}
\nonumber
	\mathcal {L} &=\sum_{i=1}^{N}{y_{z_i}}^2
	+\sum_{i=1}^{N}d_i{y_{z_i}}
	-\lambda \left(\sum_{i=1}^{N}d_i^2-N\cdot MSE\right).\\
	\therefore\nabla_{d_1,d_2,\cdots,d_N,\lambda}
	{\mathcal {L}}
	=&0 
	\Leftrightarrow 
	{\begin{cases}
		{y_{z_i}}
		-2\lambda d_i=0 \quad \quad \forall  i \in \mathbb{N} : i \in [1,N].
		\\
		\sum_{i=1}^{N}d_i^2-N\cdot MSE=0.
		\end{cases}}\label{lagrangecondsp}
	%
	\end{align}
	\begin{align}
\nonumber
	\therefore 
	{y_{z_i}}-2\lambda d_i=0 
	\quad
	&\forall  i\in \mathbb{N} : i \in [1,N]
	\quad
	\text{ and }
	\quad
	\sum_{i=1}^{N}d_i^2-N\cdot MSE=0.
	\\
\nonumber
	\therefore d_i=\frac{y_{z_j}}{2\lambda}
	\quad
	&	\forall  i\in \mathbb{N} : i \in [1,N]
	\quad
	\implies
	\quad
	\sum_{i=1}^{N}{y_{z_i}}^2 =4\cdot\lambda^2\cdot N\cdot MSE.
	\\
	\span
	\hspace{-30pt}
	\therefore
	d_i=\pm\sqrt{\frac{N\cdot MSE}{\sum_{j=1}^{N}{y_{z_i}}^2}}\cdot{y_{z_j}}
	=\pm\sqrt{\frac{MSE}{\sigma_G^2}}\cdot{y_{z_j}}.
	\label{MSEDistr}
	\end{align}
	where: ${\sigma_G^2}$=standard deviation of the gold standard $Y$ $:=\frac{1}{N}\sum_{i=1}^{N}{(y_{i}-\mu_Y)}^2=\frac{1}{N}\sum_{i=1}^{N}{y_{z_i}}^2$.
	
	From \Cref{eqncovmax}, 
	$N\sigma_{XY}$ is maximised when 
	$d_i$ and ${y_{z_j}}$ have identical signs. That is,
	\begin{alignat}{2}
\nonumber
		d_i&=\left|\sqrt{\frac{N\cdot MSE}{\sum_{j=1}^{N}{y_{z_i}}^2}}\right|\cdot{y_{z_j}}
		=\left|\sqrt{\frac{MSE}{\sigma_G^2}}\right|\cdot{y_{z_j}}
	\end{alignat}
	Thus, $\rho_c$ is maximised when $MSE$ is composed of the errors (\ie \{$d_i$\}) that are equally proportional to the deviations of the gold standard from the mean value (\ie $\{y_{z_i}\}:=\{y_i-\mu_{Y}\}$), and are of the same sign as of that deviations  (\ie signs of $\{y_{z_i}\}$) correspondingly.
	With the understanding that the square-root sign denotes a positive square root, from \Cref{eqncovmax} we have:
	\begin{align}
\nonumber
	\sigma_{{XY}_{max}}
	&=\frac{1}{N}\Bigg(\sum_{i=1}^{N}{y_{z_i}}^2\Bigg(1+\sqrt{\frac{MSE}{\sigma_G^2}}\Bigg)\Bigg)
	=\frac{1}{N}\Bigg(N\cdot \sigma_G^2\Bigg(1+\sqrt{\frac{MSE}{\sigma_G^2}}\Bigg)\Bigg),
	\\
\nonumber
	&=\sigma_G^2+{\sqrt{\sigma_G^2\cdot MSE}}
	\end{align}
	%
	Thus, 
	from \Cref{eqn_cccmsemap}:
	\begin{align}
\nonumber
	\rho_{{c}_{max}}
	\quad
	&
	=
	\quad
	\left(1+\frac{MSE}{2\cdot(\sigma_G^2+{\sqrt{\sigma_G^2\cdot MSE}})}\right)^{-1}
	&&
	=
	\quad
	\frac{2\cdot(\sigma_G^2+{\sqrt{\sigma_G^2\cdot MSE}})}{MSE+2\cdot(\sigma_G^2+{\sqrt{\sigma_G^2\cdot MSE}})},
	\\
\nonumber
	&
	=
	\quad
	\frac{2\cdot\left(1+{\sqrt{\frac{MSE}{\sigma_G^2}}}\right)}{\frac{MSE}{\sigma_G^2}+2\left(1+{\sqrt{\frac{MSE}{\sigma_G^2}}}\right)}
	&&	
	=
	\quad
		\frac{2\left(1+{\sqrt{\frac{MSE}{\sigma_G^2}}}\right)}{1+\left(1+{\sqrt{\frac{MSE}{\sigma_G^2}}}\right)^2}.
	\end{align}
\end{proof}
Likewise (cf. \Cref{sec_app0_rhomin}), the condition and formulation for  $\rho_c$ minimisation at a given $MSE$ are presented next. 


\begin{theorem}
		\label{thm42}
	For a given $MSE$, $\rho_c$ is minimised when the errors amounting to $MSE$ are distributed in the same ratio as of the corresponding deviations of gold standard around the mean gold standard, with an opposite sign.		
	%
	That is,
	\begin{alignat}{2} 
	\label{eqn_rhocmin}
		\rho_{{c}_{max}}
		=\frac{2\left(1-{\sqrt{\frac{MSE}{{\sigma_G}^2}}}\right)}{1+\left(1-{\sqrt{\frac{MSE}{{\sigma_G}^2}}}\right)^2},
		\hspace{3pt}
		\text{ when }
		\hspace{3pt} 
		\begin{array}{ll}
		e_i=-\left|\sqrt{\frac{ MSE}{{\sigma_G}^2}}\right|\cdot{(g_{i}-\mu_G)},
		\\
		\quad \quad \quad \quad 
		\quad 
		\forall i : i\in[1,N], i \in \mathbb{N}.\end{array}
	\end{alignat}
\end{theorem}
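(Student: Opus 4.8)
The plan is to mirror the argument of \Cref{thm41} almost verbatim, since the only substantive difference between maximisation and minimisation is the direction in which we drive the covariance $\sigma_{XY}$. First I would invoke the mapping $\rho_c=\left(1+\frac{MSE}{2\sigma_{XY}}\right)^{-1}$ from \Cref{eqn_cccmsemap}. Every admissible configuration satisfies $\sigma_X^2+\sigma_Y^2+(\mu_X-\mu_Y)^2=MSE+2\sigma_{XY}\geq 0$, so we always sit on the branch $MSE+2\sigma_{XY}\geq 0$, where $\rho_c=\frac{2\sigma_{XY}}{MSE+2\sigma_{XY}}$ is strictly increasing in $\sigma_{XY}$ (its derivative is $\frac{2\,MSE}{(MSE+2\sigma_{XY})^2}>0$). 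Hence, for a fixed $MSE$, \emph{minimising} $\rho_c$ is equivalent to \emph{minimising} $\sigma_{XY}$ — the exact mirror image of \Cref{thm41}.

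Next I would reuse the decomposition \Cref{eqncovmax}, namely $N\sigma_{XY}=\sum_{i=1}^{N}{y_{z_i}}^2+\sum_{i=1}^{N}d_i y_{z_i}$ with $y_{z_i}:=y_i-\mu_Y=g_i-\mu_G$ and $d_i:=x_i-y_i=e_i$, subject to $\sum_{i=1}^{N}d_i^2=N\cdot MSE$. The first sum is the constant $N\sigma_G^2$, so the task reduces to minimising $\sum_{i=1}^{N}d_i y_{z_i}$ over the sphere $\sum_{i=1}^{N}d_i^2=N\cdot MSE$. The stationarity condition \Cref{lagrangecondsp} applies unchanged, forcing every $d_i$ to be the common multiple $d_i=y_{z_i}/(2\lambda)$, and the constraint pins down $d_i=\pm\sqrt{MSE/\sigma_G^2}\,y_{z_i}$ as in \Cref{MSEDistr}. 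Since the sphere is compact and the objective continuous, the minimum is attained at one of these two critical points; evaluating $\sum_{i=1}^{N}d_i y_{z_i}=\pm\sqrt{MSE/\sigma_G^2}\sum_{i=1}^{N}{y_{z_i}}^2$ shows it occurs for the \emph{negative} sign, i.e. $e_i=-\left|\sqrt{MSE/\sigma_G^2}\right|\,(g_i-\mu_G)$ — errors proportional to, but opposite in sign to, the gold-standard deviations, which is precisely the stated distribution.

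Finally I would substitute this minimiser back. With the negative sign, $\sigma_{{XY}_{min}}=\sigma_G^2-\sqrt{\sigma_G^2\cdot MSE}=\sigma_G^2\big(1-\sqrt{MSE/\sigma_G^2}\big)$, which one should confirm still respects $MSE+2\sigma_{XY}\geq 0$ (it does, by AM–GM: $\sigma_G^2+\tfrac{MSE}{2}\geq\sqrt{2\sigma_G^2 MSE}>\sqrt{\sigma_G^2 MSE}$, so the whole attainable range of $\sigma_{XY}$ stays on the increasing branch). Plugging $\sigma_{{XY}_{min}}$ into \Cref{eqn_cccmsemap} and running the identical simplification as at the end of \Cref{thm41}, only with the sign of $\sqrt{MSE/\sigma_G^2}$ flipped, delivers the claimed $\rho_{c_{min}}=\frac{2\left(1-\sqrt{MSE/\sigma_G^2}\right)}{1+\left(1-\sqrt{MSE/\sigma_G^2}\right)^2}$.

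I expect the only genuinely non-clerical step to be the sign-selection argument: one must verify that the admissible range of $\sigma_{XY}$ keeps us strictly away from the pole $\sigma_{XY}=-MSE/2$, so that minimising $\rho_c$ really does correspond to the negative-sign critical point (and that this critical point is a true global minimum over the constraint sphere, not a saddle). Everything else is a transcription of \Cref{thm41} with a single sign reversed.
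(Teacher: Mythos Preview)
Your proposal is correct and follows essentially the same route as the paper's own proof (which appears in \Cref{sec_app0_rhomin}): reduce $\rho_c$-minimisation at fixed $MSE$ to $\sigma_{XY}$-minimisation via \Cref{eqn_cccmsemap}, re-use the Lagrange stationarity conditions \Cref{lagrangecondsp}/\Cref{MSEDistr} to obtain $d_i=\pm\sqrt{MSE/\sigma_G^2}\,y_{z_i}$, select the negative sign, and substitute back. You are in fact slightly more careful than the paper, since you explicitly justify (i) that $\rho_c$ is monotone in $\sigma_{XY}$ on the admissible branch $MSE+2\sigma_{XY}\geq 0$ and (ii) that the negative-sign critical point remains on that branch; the paper simply asserts these reductions without comment.
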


\begin{remark}
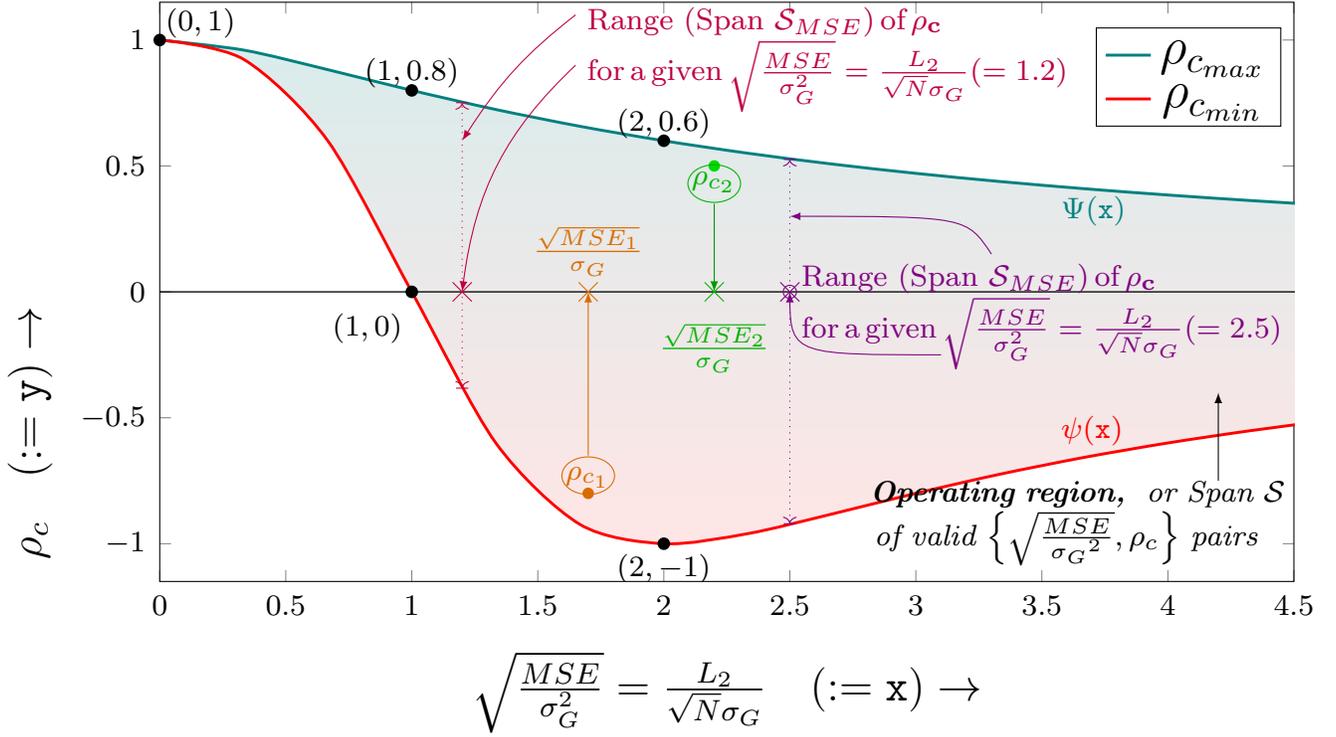
\begin{figure}[!t]
	\centering
	\begin{tikzpicture}
		[ every node/.style={scale=1.4},
		]
	\begin{axis}[
	scale=2.2,
	domain=0:50,
	samples=151,
	smooth,
	no markers,
	xlabel={$\sqrt{\frac{MSE}{\sigma_G^2}}
		=\frac{L_2}{\sqrt{N}\sigma_{G}}
		\quad
		(:=\mathtt{x})
		\rightarrow$ },
	ylabel={$\rho_c
		\quad
		(:=\mathtt{y})		
		\rightarrow$},
	xmin=0,xmax=4.5,
	ymin=-1.15,ymax=1.15,
	unit vector ratio*=0.8 0.8,
	x label style={below,font=\large},
	y label style={left,font=\large},
	legend style={at={
			(axis cs: 4.45, 1.05) 
		},anchor=north east, font=\fontsize{14}{5}\selectfont}, 
	]
	\addplot 
	+[name path=rhomax, line width=1.1pt,teal] 
	{2*(1+x)/(1+(1+x)^2)}; 
	\label{graph1}
	\addplot 
	+[name path=rhomin, line width=1.1pt, red] 
	{2*(1-x)/(1+(1-x)^2)};
	\label{graph2},
	\addplot[shade, top color=teal!15, bottom color=red!10] fill between[of=rhomax and rhomin];
	\addplot [black, solid, line width=0.5]
	{0};
	\node[fill,circle,inner sep=1.2pt,label={[xshift=0pt, yshift=-5pt]above:$(2,0.6)$}] at (axis cs: 2,0.6) {};
	\node[fill,circle,inner sep=1.2pt,label={[xshift=0pt, yshift=-5pt]above:$(1,0.8)$}] at (axis cs: 1,0.8) {};
	\pgfplotsset{
		after end axis/.code={
			\node[fill,circle,inner sep=1.2pt,label={[xshift=11pt, yshift=-5pt]$(0,1)$}] at (axis cs: 0,1) {};
		}
	}
	\node[fill,circle,inner sep=1.2pt,label={[yshift=3pt]below:$(2,-1)$}] at (axis cs: 2,-1) {};
	\node[fill,circle,inner sep=1.2pt,label={[xshift=-12pt, yshift=0pt]below:$(1,0)$}] at (axis cs: 1,0) {};
	\draw[violet, dotted] 
	[|<->|] 
	(axis cs: 2.5,-0.9231)--(axis cs: 2.5,0.5283);  
	%
	\node[label={
		[xshift=0pt, yshift=0pt,rotate=0,
		align=left,violet]
		Range (Span $\mathcal{S}_{MSE}$)\,of\,$\mathbf{\rho_c}$\,\\[0mm]for\,a\,given\,$\sqrt{\frac{MSE}{\sigma_{G}^2}}
		={\frac{L_2}{\sqrt{N}\sigma_{G}}}
		(=2.5)
		$ 
	}] at 
	(axis cs: 3.5,-0.42) {}; 
	\node[fill,cross=3pt,violet,inner sep=1.2pt] at 
	(axis cs: 2.5,0) {}; 
	\draw[violet] 
	(axis cs: 2.5,0)   
	circle (2.5pt);
	\draw[-{latex[scale=3.0]},violet] 	
	(axis cs: 3.1,-0.25)	.. controls 	(axis cs: 2.5,-0.25)	.. (axis cs: 2.5,0); 
	\draw[-{latex[scale=3.0]},violet] 	
	(axis cs: 3.3,0.15)	.. controls 	(axis cs: 3.2,0.3)	.. (axis cs: 2.5,0.3);
	\node[fill,circle,black!15!orange,inner sep=1.1pt,label={[xshift=0pt, yshift=-4pt,black!15!orange]$\rho_{c_1}$}] at (axis cs: 1.7,-0.8) {};
	\draw [black!15!orange] (axis cs: 1.7,-0.8+0.07) ellipse (0.35cm and 0.25cm);
	\node[fill,circle,black!15!green,inner sep=1.1pt,label={[xshift=0pt, yshift=4pt,black!30!green]below:$\rho_{c_2}$}] at (axis cs: 2.2, 0.5) {};
	\draw [black!30!green] (axis cs: 2.2, 0.5-0.07) ellipse (0.35cm and 0.25cm);
	\node[fill,cross=3pt,black!15!orange,inner sep=1.2pt,label={[xshift=0pt, yshift=-2pt,black!15!orange]above:$\frac{\sqrt{MSE_1}}{\sigma_G}$}] at 
	(axis cs: 1.7,0) {}; 
	\node[fill,cross=3pt,black!30!green,inner sep=1.2pt,label={[xshift=0pt, yshift=-2pt,black!30!green]below:$\frac{\sqrt{MSE_2}}{\sigma_G}$}] at 
	(axis cs: 2.2,0) {}; 
	\draw[-{latex[scale=3.0]},black!15!orange] 	(axis cs: 1.7,-0.65)--(axis cs: 1.7,0);
	\draw[-{latex[scale=3.0]},black!40!green] 	(axis cs: 2.2,0.35)--(axis cs: 2.2,0);	
	\node[label=
	{
		[xshift=0pt, yshift=0pt,text width=7cm,align=center]
		\textit{
			\textbf{
				Operating region,
			}
			or Span $\mathcal{S}$\\[0mm] of valid $\left\{\sqrt{\frac{MSE}{ {\sigma_{G}}^2} },\rho_c\right\}$ pairs 	
		}
	}
	] 
	at (axis cs: 3.6,-1.2)  
	{};
	\draw[-{latex[scale=3.0]}] 
	(axis cs: 4.2,-0.75)--(axis cs: 4.2,-0.4); 
	\legend{$\rho_{c_{max}}$ ,$\rho_{c_{min}}$ }
	\draw[purple,dotted] 
	[|<->|] 
	(axis cs: 1.2,-0.3846)--(axis cs: 1.2,0.7534);  
	\node[label={
		[xshift=0pt, yshift=0pt,rotate=0,
		align=left, purple]
		Range (Span $\mathcal{S}_{MSE}$)\,of\,$\mathbf{\rho_c}$\,\\[0mm]for\,a\,given\,$\sqrt{\frac{MSE}{\sigma_{G}^2}}
		={\frac{L_2}{\sqrt{N}\sigma_{G}}}
		(=1.2)
		$ 
	}] at 
	(axis cs: 2.65,0.6) {}; 
	\node[fill,cross=3pt,purple,inner sep=1.2pt] at 
	(axis cs: 1.2,0) {}; 
	\draw[purple] 
	(axis cs: 1.2,0) {}; 
	circle (2.5pt);
	\draw[-{latex[scale=3.0]}, purple] 	
	(axis cs: 1.65,0.9) to [bend right=20] (axis cs: 1.2,0);
	\draw[-{latex[scale=3.0]}, purple] 	
	(axis cs: 1.65,1.1) to [bend right=10] (axis cs: 1.2,0.6);	
	\node[inner sep=1.2pt,label={[xshift=0pt, yshift=0pt,teal]right:$\Psi(\mathtt{x})$}] at (axis cs: 3.5,0.32) {};
	\node[inner sep=1.2pt,label={[xshift=0pt, yshift=0pt,red]right:$\psi(\mathtt{x})$}] at (axis cs: 3.5,-0.55) {};
	\end{axis}
	\end{tikzpicture}
	\caption{Range of $\rho_c$ for a given $MSE$ in proportion to  $\sigma_G^2$, (\ie to  the standard deviation of the gold standard). 
		Note that $\rho_{c_1}$ can be  $<\rho_{c_2}$, even though $MSE_{1}<MSE_{2}$. 
		The span $\mathcal{S}$ of valid  $\left\{\sqrt{\frac{MSE}{\sigma_{G}^2}},\rho_c\right\}=\left\{\mathtt{x},\mathtt{y}\right\}$ pairs is constrained by 
		$\Psi(\mathtt{x})=\frac{2\times(1+\mathtt{x})}{1+(1+\mathtt{x})^2}$ and $\psi(\mathtt{x})=\frac{2\times(1-\mathtt{x})}{1+(1-\mathtt{x})^2}$. 
		\label{fig_maxminrho}}
\end{figure}
%
%
%

Thus, in a two dimensional space $\mathbb{R}^2:=(\mathtt{X},\mathtt{Y})$, where $\mathtt{x}=\left|\sqrt{\frac{MSE}{\sigma_G^2}}\right|$, $\mathtt{y}=\rho_c$ (cf. \Cref{fig_maxminrho}), 
\begin{flalign}
\hspace{18pt}
&
\label{eqn_psi1}
\mathit{\bullet}
\hspace{6pt}
\max_\mathtt{x}(\mathtt{y}):=\rho_{c_{max}}&&=
\quad
	\Psi(\mathtt{x}):=\frac{2\times(1+\mathtt{x})}{(1+(1+\mathtt{x})^2)}
\quad=\quad
\Upsilon(1+\mathtt{x}),\quad \text{and} &\\
\hspace{18pt}
&
\label{eqn_psi2}
\mathit{\bullet}
\hspace{6pt}
\min_\mathtt{x}(\mathtt{y}):=\rho_{c_{min}}&&=
\quad
	\psi(\mathtt{x}):=\frac{2\times(1-\mathtt{x})}{(1+(1-\mathtt{x})^2)}
\quad=\quad
\Upsilon(1-\mathtt{x}),\quad \text{where } \Upsilon(\mathtt{t}):=\frac{2\times\mathtt{t}}{1+\mathtt{t}^2}.&
\end{flalign}
\begin{itemize}
	\item
$\rho_{c_{min}}$ degrades to $-1$ while $\rho_{c_{max}}$ has only degraded to $0.6$. 
That is, 
while $MSE=0$ does translate to a perfect identity relationship, and consequently $\rho_c=1$, the $\rho_{c_{min}}$ degradation is lot quicker than that for $\rho_{c_{max}}$
with increasing $MSE$.
	\item
	$\rho_{c_1}<$
	$\rho_{c_2}$
	even though 
	for the corresponding mean square errors,
	$MSE_{1}<MSE_{2}$. That is, \emph{the $MSE$ reduction does not automatically translate to $\rho_c$ improvement}. 
\end{itemize} 
\end{remark}
In summary, as per \Cref{thm41,thm42},
the $\rho_c$ can vary between 
$\frac{2\left(1-{\sqrt{\frac{MSE}{\sigma_G^2}}}\right)}{1+\left(1-{\sqrt{\frac{MSE}{\sigma_G^2}}}\right)^2}$
and
$\frac{2\left(1+{\sqrt{\frac{MSE}{\sigma_G^2}}}\right)}{1+\left(1+{\sqrt{\frac{MSE}{\sigma_G^2}}}\right)^2}$ 
for given $MSE$ -- depending on how $MSE$ is split into its constituent errors.
For the sake of completeness, we note here that even if the constituent error-set that makes $MSE$ is known fully, $\rho_c$ cannot be estimated. The knowledge of not only the values of the constituent errors, but also their sequence is a prerequsite for estimating $\rho_c$, as we prove later in \Cref{sec_prob}. 
%
\section{Why other $L_p$-norms fail as a loss function (even more spectacularly than $MSE$)}
\label{sec_lp}
\newcommand{\explainmpe}{
	We intentionally avoid  using the term $MpE$ for `Mean $p$-Powered Error' (although more consistent with the term $L_{\mathbf{p}}$), since $MPE$ is more popularly the `Mean Percentage Error' in the literature.}
For error reduction, one can also use $MAE$, or mean `Mean $k$-Powered Error (\ie $MkE$\footnote{\explainmpe}) in general instead of $MSE$, \ie choosing an optimal $k$ that could be bigger or smaller than 2.
\begin{align}
\text{Because }
MkE := \frac{\left(\sum_{i=1}^{N}|d_i|^k\right)}{N}=\frac{L_k^k}{N}  \quad \implies 
L_k
= N^\frac{1}{k}\cdot MkE^\frac{1}{k}
.
\label{eqn_mke}
\end{align}

Similar to $MSE$, while the minimisation of $MkE$ (\ie minimisation of the errors)and the maximisation of $\rho_c$ are both directed at achieving the perfect identity relationship between the labels (\ie the gold standard) and the predictions, the efforts for minimisation of $MkE$ do not necessarily translate into maximisation of $\rho_c$, and vice versa. In this section, we reason and prove this fact mathematically. 

While no known direct mapping exists between $MkE$ (for $k\neq2$) and $\rho_c$, the many-to-many mapping existing between $MSE$ and $\rho_c$ (established in \Cref{sec_m2mg}), and the inequality relationship existing between $MkE (k\neq2)$ and $MSE$ can be used to establish the formulations and conditions for the minimum and maximum possible values of $\rho_c$ at the given $MkE$, through optimisation of $MSE$.


\subsection{$\rho_c$ optimisation using $MSE$ optimisation, given the error-set $L_p$-norm, 	$p>0$}
\vspace{-12pt}
\begin{alignat}{4}
\text{For } \hspace{6pt}
&  0<r<p
\text{ as per \Cref{sec_app3}: }
L_p && \leq  L_r  &&&& \leq  N^{\frac{p-r}{pr}} L_p .\label{eqn_lprel}
\\
\therefore 
\text{For } \hspace{6pt}
&k>2: \hspace{110pt} 
L_{k}   &&\leq   L_{2}
&&&&\leq   N^{\frac{k-2}{2k}} \cdot L_{k}
\nonumber
\\
\therefore\hspace{6pt}
&\text{From \Cref{eqn_mke}: } N^\frac{1}{k}\cdot MkE^\frac{1}{k}   &&\leq   N^\frac{1}{2}\cdot MSE^\frac{1}{2}
&&&&\leq   N^{\frac{k-2}{2k}}  N^\frac{1}{k}\cdot MkE^\frac{1}{k}.
\nonumber
\\
\implies
\hspace{6pt}
&
\tikzmark{start1real}
\hspace{90pt}N^{\frac{2-k}{2k}}\cdot MkE^\frac{1}{k}   &&\leq   MSE^\frac{1}{2}
&&&&\leq  MkE^\frac{1}{k}. 
\tikzmark{end1real}
\nonumber
\\
\label{eqnl2rel}
\tikzmark{start1}
\text{\ie }
\hspace{6pt}
&\sqrt{MSE_{min}}   &&\leq   
\sqrt{MSE}=
\theta\cdot &&\sqrt{MSE_{min}}
&&\leq \theta_{max}\cdot \sqrt{MSE_{min}},
\\
\nonumber
\text{where: }
\hspace{6pt}
&\sqrt{MSE_{min}}
=\frac{L_k}{\sqrt{N}}
= N^{\frac{2-k}{2k}}\cdot MkE^\frac{1}{k},&& \hspace{12pt}\text{ and }
&&
\sqrt{MSE_{max}} &&=\theta_{max}\cdot \sqrt{MSE_{min}},
\\
\nonumber
\text{and }
\hspace{6pt}
&
\theta_{min}= \hspace{18pt} 1   &&\leq  
\hspace{35pt} 
\theta
:= &&\frac{N^{\frac{1}{2}}MSE^{\frac{1}{2}}}{N^{\frac{1}{k}}MkE^{\frac{1}{k}}}
&&\leq N^{\frac{k-2}{2k}}=\theta_{max}.
\tikzmark{end1}
\\
\nonumber
\rlap{\hspace{-1.5cm}Similarly (cf. \Cref{sec_app01_kl2}), for $k\in(0,2)$: }
\\
\label{eqnl2rellek}
&
\tikzmark{start2}
\sqrt{MSE_{min}} &&\leq   
\sqrt{MSE}=
\theta\cdot &&\sqrt{MSE_{min}}
&&\leq \theta_{max}\cdot \sqrt{MSE_{min}},
\\
\nonumber
\text{where: }
\hspace{6pt}
&\sqrt{MSE_{min}}
=\frac{L_k}{N^{\frac{1}{k}}}
= 
MkE^\frac{1}{k}&& \hspace{12pt} { and }
&&\sqrt{MSE_{max}} &&=\theta_{max}\cdot \sqrt{MSE_{min}},
\\
\nonumber
\text{and }
\hspace{6pt}
&\theta_{min}= \hspace{18pt} 1   &&\leq  
\hspace{35pt} 
\theta
:=&& \frac{MSE^{\frac{1}{2}}}{MkE^{\frac{1}{k}}}
&&\leq N^{\frac{2-k}{2k}}=\theta_{max}.
\tikzmark{end2}\\
\therefore\text{For }
&\hspace{6pt} k=1 : \hspace{12pt}
\sqrt{MSE_{min}}   &&\leq 
\sqrt{MSE}=\theta\cdot &&\sqrt{MSE_{min}} &&\leq  \theta_{max}\cdot \sqrt{MSE_{min}},
\\
\nonumber
\text{where: }
&\hspace{6pt}
\sqrt{MSE_{min}}=\frac{L_1}{N} = MAE && \hspace{12pt}\text{ and }
&&\sqrt{MSE_{max}}&&=\theta_{max}\cdot \sqrt{MSE_{min}},
\\
\nonumber
\text{and }
&\hspace{6pt}
\theta_{min}= \hspace{12pt} 1   &&\leq  \hspace{35pt} \theta
&&&&\leq \sqrt{N}=\hspace{12pt}\theta_{max}.
\end{alignat}

The lower the $\theta$ (\ie the lower the $MSE$), the higher is the maximum theoretical limit for $\rho_c$ at the given $MkE$; $\rho_{{c}_{max}}$ being a monotonic function of $MSE$ (cf. \Cref{fig_maxminrho}). However, the same cannot be said for the minimum theoretical limit for $\rho_c$ at the given $MkE$,  since $\rho_{{c}_{min}}$ is not a monotonic function of $MSE$ (cf. \Cref{fig_maxminrho}). Notice that attaining these theoretical limits (\ie $\rho_{{c}_{max}}$ and $\rho_{{c}_{min}}$) is subject to also meeting simultaneously the conditions dictated by \Cref{thm41,thm42} at the given $MkE$ with the given gold standard, 
which can not be guaranteed to be true of any gold standard as a general case. Thus, for the sake of clarity, we denote these theoretical limits at given $L_k$ with $\rho_{{c}_{max'}}$  and $\rho_{{c}_{min'}}$ respectively.
%
%
%
\begin{alignat}{2}
\label{eqnrhocmaxmink}
\therefore
\text{From \Cref{eqnl2rel}, for } k\geq 2:  
\quad
&
	\rho_{{c}_{max'}}= 
	\frac{2\Big(
		1+{\frac{L_k}{\sqrt{N}\sigma_G}}
		\Big)}{1+\Big(1+
		{\frac{L_k}{\sqrt{N}\sigma_G}}
		\Big)^2}
	=
	\Psi\left(\frac{L_k}{\sqrt{N}\sigma_G}\right),
\nonumber
\\
\text{ and }\quad
&
	\rho_{{c}_{min'}}
	=
	\frac{2\Big(
		1-{\theta\frac{L_k}{\sqrt{N}\sigma_G}}
		\Big)}{1+\Big(1-
		{\theta\frac{L_k}{\sqrt{N}\sigma_G}}
		\Big)^2} 
	=\psi\left(\frac{\theta\cdot L_k}{\sqrt{N}\sigma_G}\right),
	\theta\in\left[1,N^{\frac{k-2}{2k}}\right].
\\
\text{From \Cref{eqnl2rellek}, for } k \in (0,2]: 
\quad
&
	\rho_{{c}_{max'}}=
	\Psi\left(\frac{L_k}{N^\frac{1}{k}\sigma_G}\right),
	\rho_{{c}_{min'}}=
	\psi\left(\frac{\theta\cdot L_k}{N^\frac{1}{k}\sigma_G}\right),
	\theta\in\left[1,N^{\frac{2-k}{2k}}\right].
\label{eqnrhocmaxmink2}
\end{alignat}
%
%
Thus, from \Cref{eqnrhocmaxmink,eqnrhocmaxmink2}, irrespective of the value of $k>0$ (\ie whether $k\geq2$ or $k\leq2$):
\begin{alignat}{1}
\tikzmark{beg-psi}
\rho_{{c}_{max'}}=\Psi(\mathtt{x}), 
\mathtt{x}\in[0,\infty),
\text{ and }
\rho_{{c}_{min'}}=
\begin{cases}
\psi(\theta_{max}\cdot\mathtt{x}), &
\hspace{-6pt} 
\mathtt{x}\in\big[0,\frac{2}{\theta_{max}}\big].\\
-1=\psi(\theta_0\cdot \mathtt{x}),
&\hspace{-6pt} \mathtt{x}\in\big[\frac{2}{\theta_{max}},2\big],
\hspace{6pt} \theta_0:=\frac{2}{\mathtt{x}}, 
\hspace{6pt}
\theta_{max}:=N^{\left|\frac{k-2}{\mathtt{2k}}\right|}\\
\psi(\mathtt{x}),  &\hspace{-6pt} \mathtt{x}\in[2,\infty).
\end{cases}
\tikzmark{end-psi}
\label{eqn_generalmaxmin}
\\
\nonumber
\text{in the $\mathbb{R}^2=\left\{\mathtt{X},\mathtt{Y}\right\}=\left\{\frac{L_k}{\sqrt{N}\sigma_G},\rho_c\right\}$ space for $k\geq2$, and in the $\left\{\mathtt{X},\mathtt{Y}\right\}=\left\{\frac{L_k}{N^{\frac{1}{k}}\sigma_G},\rho_c \right\}$ space for $k\leq2$.}\span
\end{alignat}

For the sake of completeness, we note here
that the 
range for $\rho_{c}$
at a given $MkE$ is typically even smaller than the one dictated by \Cref{eqn_generalmaxmin} above (as we establish later in
\Cref{sec_truespan}). 
%
%
%
%
%
%
%
%
\newcommand{\maemseboundaries}{
$
	\begin{aligned}
	\text{When }& e_i=\pm e_j, &\forall i,j\in[1,N], & \quad i,j\in\mathbb{N} &&\implies 
	{MAE}=|e_i|, \quad MSE=e_i^2
	&&\implies\mathop{\theta}_{k=1}:=\frac{\sqrt{MSE}}{MAE}=1.
	\nonumber
	\\
	\text{When }& e_j=0, & \exists!i, \forall j\neq i, i,j\in[1,N], & \quad i,j\in\mathbb{N} &&\implies 
	{MAE}=\frac{|e_i|}{N},\quad MSE={\frac{e_i^2}{N}}
	&&\implies\mathop{\theta}_{k=1}:=\frac{\sqrt{MSE}}{MAE}=\sqrt{N}.\nonumber
	\end{aligned}
$
}
This is because the  boundary conditions for $\theta$ (\eg $\theta=\frac{\sqrt{MSE}}{MAE}=\{1,\sqrt{N}\}$) are met only when the error coefficients are constant valued 
-- either entirely, or except at one instance \citep{willmott2005advantages} \footnote{\maemseboundaries}. 
Simultaneously, $\rho_{{c}_{max'}}$ and $\rho_{{c}_{min'}}$ are obtained if and only if the error coefficients are in the same ratio as of the deviations of the corresponding gold standards (cf. \Cref{thm41,thm42}), which forces the gold standard to be constant valued likewise -- \ie either entirely, or except at one instance -- which is not true in general. Thus, assuming $\rho_{c_max'}$ from \Cref{eqn_generalmaxmin} to be the true maximum limit of $\rho_c$ at any given $L_k$ norm is equivalent of defining the gold standard to be constant-valued for at least $N-1$ instances.

Nonetheless, plotting the incorrect span of valid $\{\rho_c, L_k\}$  pairs as dictated by \Cref{eqn_generalmaxmin} gives us a few new insights still -- as to how assuming even this incorrect line of argument 
(\ie a more optimistic  $\rho_{c_{max}}$ as a function of $MkE$, and consequently, $\rho_c$ maximisation through $MkE$ minimisation) leads us to a more discouraging end-result to the contrary. We discuss in \Cref{sec_m2mLp} the derivations for true span $\mathcal{S}$ of valid $\left\{\rho_{c},L_p\right\}$ pairs.

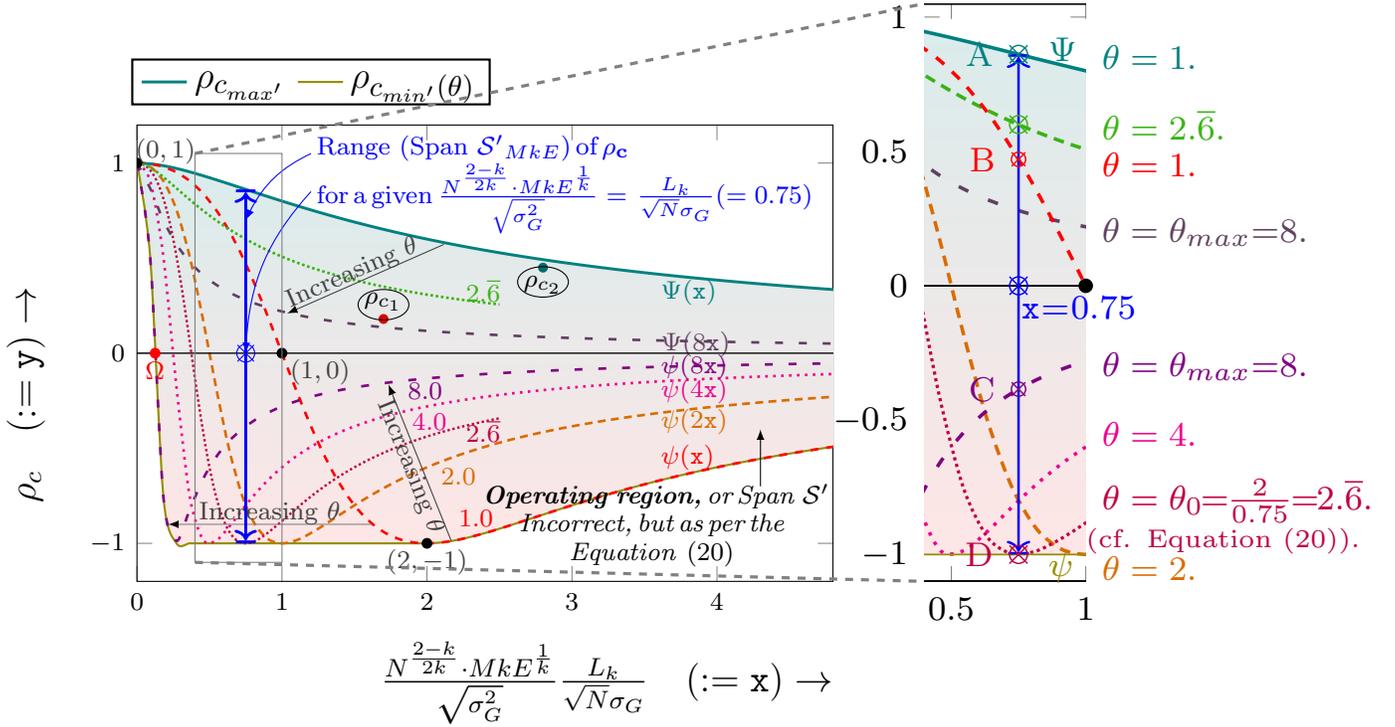
\begin{figure}[!t]
		\vspace{-1.3cm}
		\hspace{1.2cm}
	\begin{tikzpicture}
	[thick,scale=1.35, every node/.style={scale=1.2},
	trim left=(ax1.south west),trim right=(ax2.south east),
	declare function={
		func(\x)= (\x < 0.25) * 2*((1-8*x))/(1+(1-8*x)^2)   +
		and(\x >= 0.25, \x < 2) * (-1)     +
		(\x >= 2) * (2*(1-x)/(1+(1-x)^2))
		;
	}]
	\begin{axis}[
	name=ax1,
	domain=0:50,
	samples=551,
	smooth,
	no markers,
	xlabel={$
		\frac{N^\frac{2-k}{2k}\cdot MkE^\frac{1}{k} }{\sqrt {\sigma_G^2} } 
		\frac{L_k}{\sqrt{N}\sigma_G }
		\quad
		(:= \mathtt{x})
		\rightarrow$   },
	ylabel={$
		\rho_c 
		\quad
		(:= \mathtt{y})
		\rightarrow$},
	scale=1.,
	xmin=0,xmax=4.8,
	ymin=-1.2,ymax=1.2,
	unit vector ratio*=0.8 1.05,
	x label style={below,font=\large},
	y label style={left,font=\large},
	legend columns=2, 
	legend style={
		at={
			(axis cs: 1.2,1.55)
		},
		anchor=north,
		/tikz/column 2/.style={
			column sep=2pt,
		},
		font=\fontsize{13}{5}\selectfont},
	]
	\node[fill,circle,black!15!red,inner sep=1.1pt,label={[xshift=0pt, yshift=-3pt,black
		]above:$\rho_{c_1}$}] at (axis cs: 1.7, 0.18) {};
	\node[fill,circle,black!15!teal,
	inner sep=1.1pt,label={[xshift=0pt, yshift=3pt,black
		]below:$\rho_{c_2}$}] at (axis cs: 2.8, 0.45) {};
	\draw [black
	] (axis cs: 1.7,0.18+0.075) ellipse (0.25cm and 0.15cm);
	\draw [black,
	] (axis cs: 2.8,0.45-0.075) ellipse (0.25cm and 0.15cm);
	\addplot 
	+[name path=rhomax, line width=1.1pt,teal] 
	{2*(1+x)/(1+(1+x)^2)}; 
	\label{graphkmax}
	\addplot +[name path=rhominall,olive,thick] {func(x)};
	\addplot[domain=0:50, red, line width=1,dashed] {2*(1-x)/(1+(1-x)^2)};
	\addplot[ black!15!orange, line width=1,densely dashed] {2*(1-2*x)/(1+(1-2*x)^2)};
	\addplot[purple, line width=0.9,densely dotted, domain=0:2.5] {2*(1-2.66667*x)/(1+(1-2.66667*x)^2)};
	\addplot[teal!50!purple, line width=1,loosely dashed] {2*(1+8.0*x)/(1+(1+8.0*x)^2)};
	\addplot[green!70!purple, line width=1,densely dotted, domain=0:2.5] {2*(1+2.66666666*x)/(1+(1+2.66666666*x)^2)};
	\addplot[ magenta, line width=1,dotted] {2*(1-4*x)/(1+(1-4*x)^2)};
	\addplot[ violet, line width=1,loosely dashed] {2*(1-8*x)/(1+(1-8*x)^2)};
	\node[inner sep=1.2pt,label={[xshift=0pt, yshift=0pt,teal]right:$\Psi(\mathtt{x})$}] at (axis cs: 3.5,0.32) {};
	\node[inner sep=1.2pt,label={[xshift=0pt, yshift=0pt,teal!50!purple]right:$\Psi(8\mathtt{x})$}] at (axis cs: 3.5,0.05) {};
	\node[inner sep=1.2pt,label={[xshift=0pt, yshift=0pt,red]right:$\psi(\mathtt{x})$}] at (axis cs: 3.5,-0.55) {};
	\node[inner sep=1.2pt,label={[xshift=0pt, yshift=0pt,black!15!orange]right:$\psi(2\mathtt{x})$}] at (axis cs: 3.5,-0.36) {};
	\node[inner sep=1.2pt,label={[xshift=0pt, yshift=0pt,magenta]right:$\psi(4\mathtt{x})$}] at (axis cs: 3.5,-0.2) {};
	\node[inner sep=1.2pt,label={[xshift=0pt, yshift=0pt,violet]right:$\psi(8\mathtt{x})$}] at (axis cs: 3.5,-0.074) {};
	\addplot[shade, top color=teal!15, bottom color=red!10] fill between[of=rhomax and rhominall];
	\addplot [black, solid, line width=0.5] 
	{0};
	\node[fill,circle,inner sep=1.2pt,label={[xshift=9pt, yshift=-6pt, darkgray]above: $(0,1)$}] at (axis cs: 0,1) {};
	\node[fill,circle,inner sep=1.2pt,label={[yshift=4pt,darkgray]below:$(2,-1)$}] at (axis cs: 2,-1) {};
	\node[fill,circle,inner sep=1.2pt,label={[xshift=12pt, yshift=4pt,darkgray]below:$(1,0)$}] at (axis cs: 1,0) {};
	\node (C) at (axis cs: 1.7, -0.9) {};
	\node (D) at (axis cs: 0.13, -0.9) {};
	\path[-{latex[scale=3.0]}, draw=gray]
	(C) edge node[sloped, anchor=right, above, xshift=0pt,yshift=-4pt, darkgray] {Increasing
		$\theta$
	}  
	(D);
	\node (A) at (axis cs: 2.2, -1.05) {};
	\node (B) at (axis cs: 1.7, -0.1) {};
	\path[-{latex[scale=3.0]},draw=darkgray]
	(A) edge node[sloped, anchor=left, below, xshift=0pt,yshift=4pt, darkgray] {Increasing
		$\theta$ 
	}  	(B);
	\node (E) at (axis cs: 2.2, 0.6) {};
	\node (F) at (axis cs: 0.95, 0.18) {};
		\path[-{latex[scale=3.0]}, draw=darkgray]
		(E) edge node[anchor=left, sloped, xshift=-3pt,yshift=-4pt,above, darkgray] {Increasing 
			$\theta$
		}  (F);
	\node[
	label={[violet]right
		:8.0}
	] at (axis cs: 1.7, -0.2) {};
	\node[
	label={[magenta]right
		:4.0}
	] at (axis cs: 1.73, -0.36) {};
	\node[
	label={[purple]right
		:$2.\overline{6}$}
	] at (axis cs: 2.1, -0.41) {};
	\node[
	label={[green!70!purple]right
		:$2.\overline{6}$}
	] at (axis cs: 2.1, 0.33) {};
	\node[
	label={[black!15!orange]right
		:2.0}
	] at (axis cs: 1.93, -0.65) {};
	\node[
	label={[red]right
		:1.0}
	] at (axis cs: 2.05,-0.87) {};
	\node[label=
	{
		[text width=5cm, align=center]
		\textit{
			\textbf{Operating region,}\,or\,Span $\mathcal{S'}$\\[0mm]
			Incorrect,\,but\,as\,per\,the\\[0mm] \Cref{eqn_generalmaxmin}
		}
	}
	] 
	at (axis cs: 3.55,-1.25)  
	{};  
	\draw[-{latex[scale=3.0]}] (axis cs: 4.3,-0.7)--(axis cs: 4.3,-0.4);
	\draw[blue, line width=1.1pt] 
	[|<->|] 
	(axis cs: 0.75,-1) -- (axis cs: 0.75,0.8615) ;
	%
	\node[label={
		[xshift=0pt, yshift=0pt,rotate=0,
		align=left, blue]
		Range (Span $\mathcal{S'}_{MkE}$)\,of\,$\mathbf{\rho_c}$\,\\[0mm]for\,a\,given\,$\frac{N^\frac{2-k}{2k}\cdot MkE^\frac{1}{k} }{\sqrt {\sigma_G^2}}=$
		$\frac{L_k}{\sqrt{N}\sigma_G}
		(=0.75)
		$ 
	}] at 
	(axis cs: 2.95,0.5) {};
	\node[fill,cross=3pt,blue,inner sep=1.2pt] at 
	(axis cs: 0.75,0) {};
	\draw[blue] 
	(axis cs: 0.75,0) circle (2.5pt);
	\node[fill,circle,inner sep=1.2pt,red,label={[xshift=0pt, yshift=3pt, red]below: $\Omega$}] at (axis cs: .125,0){};
	\draw[-{latex[scale=5.0]},blue] 	
	%
	(axis cs: 1.2,0.85) to [bend right=20] (axis cs: 0.75,0);
	\draw[-{latex[scale=3.0]},blue] 	
	%
	(axis cs: 1.2,1.05) to [bend right=20] (axis cs: 0.75,0.7);
	\legend{$\rho_{c_{max'}}$ ,$\rho_{c_{min'}(\theta)}$ }
	\coordinate (c1) at (axis cs:0.4,1.05);
	\coordinate (c2) at (axis cs:0.4,-1.1);
	\draw [gray] (c1) rectangle (axis cs:1,-1.1);
	\end{axis}
	\begin{axis}[name=ax2,
	at={($(ax1.south east)-(1.5cm,0)$)},
	xtick distance=0.5,
	ytick distance=0.5,
	xtick={0.5,1},
	domain=0:50,
	samples=551,
	smooth,
	no markers,
	scale=1.,
	xmin=0.4,xmax=1.0,
	ymin=-1.1,ymax=1.05,
	unit vector ratio*=1.0 1.0,
	x label style={below,font=\large},
	y label style={left,font=\large},
	y axis line style= { draw =none },
	x axis line style={draw opacity=1},
	set layers,
	cell picture=true,
	]
	\addplot
	+[name path=rhomax2,
	domain=0:50, teal, line width=1] {2*(1+x)/(1+(1+x)^2)}; 
		\addplot[teal!50!purple, line width=1,loosely dashed] {2*(1+8*x)/(1+(1+8*x)^2)};
		\addplot[green!70!purple, line width=1,densely dashed] {2*(1+2.66666*x)/(1+(1+2.6666666*x)^2)};
	\addplot[domain=0:50, red, line width=1,dashed] {2*(1-x)/(1+(1-x)^2)};
	\addplot[ black!15!orange, line width=1,densely dashed] {2*(1-2*x)/(1+(1-2*x)^2)};
	\addplot[purple, line width=0.9,densely dotted] {2*(1-2.66667*x)/(1+(1-2.66667*x)^2)};
	\addplot[ magenta, line width=1,dotted] {2*(1-4*x)/(1+(1-4*x)^2)};
	\addplot[ violet, line width=1,loosely dashed] {2*(1-8*x)/(1+(1-8*x)^2)};
	\addplot+[domain=0:50, name path=rhominall2,line width=0.5, olive, solid]
	{func(x)};
	\addplot [black, solid, line width=0.5] 
	{0}; 
	\addplot [black, solid, line width=0.7] 
	{1.05}; 
	\addplot [black, solid, line width=0.7] 
	{-1.1}; 
	\addplot[shade, top color=teal!15, bottom color=red!10] fill between[of=rhomax2 and rhominall2];
	\draw[blue, line width=0.7] 
	[|<->|] 
	(axis cs: 0.75,-1)--(axis cs: 0.75,0.8615); 
	\node[fill,cross=2.5pt,blue,inner sep=1.2pt,label={[blue,xshift=-6pt, yshift=-3pt]right:
	}] at 
	(axis cs: 0.75,0) {};
	\draw[blue] 
	(axis cs: 0.75,0)
	circle (2.5pt);
	\node[fill,cross=2.5pt,inner sep=1.2pt,teal,label={[teal,xshift=0pt, yshift=0pt]left:A}] at 
	(axis cs: 0.75,0.8615)
	{};
	\draw[teal] 
	(axis cs: 0.75,0.8615)
	circle (2.5pt);
	\node[fill,cross=2pt,inner sep=1.2pt,red,label={[red,xshift=0pt, yshift=0pt]left:B}] at 
	(axis cs: 0.75,0.4706)
	{};
	\draw[red] 
	(axis cs: 0.75,0.4706)
	circle (2pt);
	\node[fill,cross=2pt,inner sep=1.2pt,violet,label={[violet,xshift=0pt, yshift=0pt]left:C}] at 
	(axis cs: 0.75,-0.3846)
	{};
	\draw[violet] 
	(axis cs: 0.75,-0.3846)
	circle (2pt);
	\node[fill,cross=2.5pt,inner sep=1.2pt,purple,label={[purple,xshift=0pt, yshift=0pt]left:D}] at 
	(axis cs: 0.75,-1)
	{};
	\draw[purple] 
	(axis cs: 0.75,-1)
	circle (2.5pt);
	\node[fill,cross=2.5pt,inner sep=1.2pt,green!70!purple] at 
	(axis cs: 0.75,0.6)
	{};
	\draw[green!70!purple] 
	(axis cs: 0.75,0.6)
	circle (2.5pt);
	\pgfplotsset{
		after end axis/.code={
			\node[fill,circle,inner sep=1.2pt,label={[xshift=6pt, yshift=-6pt, darkgray]}] at (axis cs: 1,0) {};
			\node[anchor=west,right,olive] at 
(axis cs: 0.8,-1.05){$\psi$};
			\node[anchor=west,right,teal] at 
			(axis cs: 0.8,0.88){$\Psi$};
			\node[anchor=west,right,teal] at 
			(axis cs: 1,0.85){$\theta=1$.};
			\node[anchor=west,right,red] at (axis cs: 1,0.45){$\theta=1$.};
			\node[anchor=west,right,green!70!purple] at (axis cs: 1,0.6){$\theta=2.\overline{6}.$};
			\node[anchor=west,right,teal!50!purple] at (axis cs: 1,0.2){$\theta=\theta_{max}$=8.};
			\node[anchor=west,right,blue] at (axis cs: .7,-0.08){$\mathtt{x}\hspace{-2pt}=\hspace{-2pt}0.75$};
			\node[anchor=west,right,violet] at (axis cs: 1,-0.3){$\theta=\theta_{max}$=8.};
			\node[anchor=west,right,magenta] at (axis cs: 1,-0.55){$\theta=4$.};
			\node[anchor=west,right,purple, align=right
			] at (axis cs: 1,-0.8){$\theta=\theta_0\hspace{-3pt}=\hspace{-3pt}\frac{2}{0.75}\hspace{-3pt}
				=\hspace{-3pt}2.\overline{6}.$};
			\node[anchor=west,right,purple, text width=3cm,font=\fontsize{6pt}{6pt}\selectfont, align=right] at (axis cs: 0.6,-0.95){
				(cf. \Cref{eqn_generalmaxmin}).
			};
			\node[anchor=west,right,black!15!orange] at (axis cs: 1,-1.05){$\theta=2$.};
		}
	}	
	\end{axis}
	\draw [dashed, line width=1.2,gray] (c1) -- (ax2.north west);
	\draw [dashed, line width=1.2,gray] (c2) -- (ax2.south west);
	\end{tikzpicture}
	\vspace{-0.9cm}
	\caption{Range of $\rho_c$ as per \Cref{eqn_generalmaxmin}, 
		as a function of given $MkE=\frac{L_k}{N}, k>2$ with respect to the gold standard consisting of 
		$N$ samples, standard deviation of $\sigma_G^2$, $1\leq\theta\leq {N}^{\frac{k-2}{2k}}$.
		Notice the increase in the operating region compared to \Cref{fig_maxminrho} to $\mathcal{S'}$, due to $\rho_{c_{min'}}=\psi(\theta\cdot\mathtt{x})$ with increasing $\theta$. 
		While each point in $\mathcal{S'}$ maps to a unique $\left\{L_k,\rho_c\right\}$ pair, each maps to infinitely many $\{L_2\}$ or $\{MSE\}$ values  -- except those points lying exactly on the $\rho_{{c}_{max'}}$ $\rho_{{c}_{min'}}$ curves which map to only one $MSE$. 
		Similar to \Cref{fig_maxminrho}, 
		$MkE_1\leq MkE_2$ does not guarantee $\rho_{c_1}\leq \rho_{c_2}$. 
		For the sake of completeness, we note further that the true $\rho_{{c}_{max}}$ and consequently, the true span $\mathcal{S}$ is even smaller than the one shown above (cf. \Cref{sec_truespan}).
		\label{fig_maxminrhoLp}}
\end{figure}

\begin{remark}
	Thus, for a two dimensional space $\mathbb{R}^2:=(\mathtt{X},\mathtt{Y})$, where $\mathtt{x}=\left|\frac{L_k}{\sqrt{N}\sigma_G}\right|$, $\mathtt{y}=\rho_c$ (cf. \Cref{fig_maxminrhoLp}), 
	\begin{itemize}
		\item
		\newcommand{\MSEsmallenough}{That is, for $MSE\leq\sigma_{G}^2$ (cf. $\sqrt{\sfrac{MSE}{\sigma_{G}^2}}\in[0,1]$ in \Cref{fig_maxminrho})}
		While reducing $MSE$ does not always guarantee $\rho_c$ improvement (cf. \Cref{sec_m2mg}),
		$MSE$ reduction through enough number of iterations
		to a small enough value \footnote{\MSEsmallenough} guarantees a positive $\rho_c$ nonetheless. However, 
		a negative $\rho_c$ remains a possibility for $L_k$ as small as $\frac{\sqrt{N}\sigma_{G}}{N^{\frac{k-2}{2k}}}$ with $\theta=\theta_{max}$ (cf. \Cref{fig_maxminrhoLp}). While $N$ typically is very large, even for $N$ as small as 64 (\ie $\sqrt{N}=8$), this translates to the 
		rapid deterioration of $\rho_{c_{min'}}$ from point $(0,1)$ to point $\Omega$ \ie $(0.125,0)$ -- as opposed to $(1,0)$. 
		\item
		Higher the $N$, more diminished is the effect of reduction of $MkE$ in terms of improving $\rho_c$,
		\ie quicker the deterioration of $\rho_c$ from $\rho_{c_{min}}=1$, \ie point $(0,1)$ to  $\rho_{c_{min}}=-1$. 
		\item
		Increase in $N$ implies an increase in the range of valid $\theta$ values, \ie effectively an increase in the range of possible $MSE$ values for any given $L_k$-norm.
		\item
		Increase in the range for $\theta$ manifests into an increase in the operating region with a negative $\rho_c$, which asymptotically approaches the negative $\mathtt{Y}$-axis.
		\item 
		Consistent to findings from \Cref{sec_m2mg,fig_maxminrho}, the reduction in $MSE$ at a constant $L_k$ norm (even if assumed to be achievable to its lowest limit for the given gold standard) does not always translate to improvement in $\rho_c$.
\begin{description}
	\item
	For example, starting with $\theta=\theta_{max}=8$ at a given ${\frac{L_k}{\sqrt{N}\sigma_{G}}}(=0.75)$, the $MSE$ reduction (\ie effectively the $\theta$ reduction) causes further deterioration of $\rho_{{c}_{min'}}$ to $-1$, \ie when $\theta$ becomes equal to 
$\frac{2}{\mathtt{x}}=\frac{2}{0.75}=2.\overline{6}$ (the trajectory $C \rightarrow D$). While further reduction in $MSE$ does begin to make $\rho_{{c}_{min'}}$ more positive, the $\rho_{{c}_{min'}}$ remains overall negative until $\theta$ is reduced to $\frac{\theta_0}{2}=\frac{1}{\mathtt{x}}=\frac{1}{0.75}=1.\overline{3}$. 
In summary, $MSE$ reduction at a given $L_p$ norm does not necessarily result in a $\rho_c$ improvement.
\end{description}
\newcommand{\explainrel}{
\hspace{2.1cm}
	Let $\rho_c=\mathtt{y}=\psi(\theta_1\mathtt{x})=\psi(\theta_2\mathtt{x})
	\hspace{18pt}
	\implies
	\hspace{10pt}
	\frac{2(1-\theta_1\mathtt{x})}{1+(1-\theta_1\mathtt{x})^2}
	\quad=\quad
	\frac{2(1-\theta_2\mathtt{x})}{1+(1-\theta_2\mathtt{x})^2}
	$, where $\mathtt{x}=\frac{L_k}{\sqrt{N}\sigma_{G}}$.
	\begin{align*}
	&\therefore&
	(1-\theta_1\mathtt{x})+(1-\theta_1\mathtt{x})(1-\theta_2\mathtt{x})^2
	\quad
	&
	=
	\quad
	(1-\theta_2\mathtt{x})+(1-\theta_2\mathtt{x})(1-\theta_1\mathtt{x})^2.
	\\
	&\therefore&
	(1-\theta_1\mathtt{x})(1-\theta_2\mathtt{x})(
	(1-\theta_2\mathtt{x})-(1-\theta_1\mathtt{x})
	)
	\quad
	&
	=
	\quad
	(1-\theta_2\mathtt{x})-(1-\theta_1\mathtt{x}).
	\\
	&\therefore&
	(1-\theta_1\mathtt{x})(1-\theta_2\mathtt{x})
	=1
	\implies
	\hspace{30pt}
	\theta_1\theta_2\mathtt{x}
	\quad
	&
	=
	\quad
	\theta_1+\theta_2.
	\\
	&\therefore&
	\theta_2 
	\quad
	&
	=
	\quad
	\frac{\theta_1}{\mathtt{x}\theta_1-1}, \text{ where } \mathtt{x}=\frac{L_k}{\sqrt{N}\sigma_{G}}.
	\end{align*}
}
\item
While each point in $\mathcal{S'}$ maps to a unique $\left\{L_k,\rho_c\right\}$ pair, each maps to infinite number of $MSE$ values (except the points on the $\rho_{{c}_{max'}}$ and $\rho_{{c}_{min'}}$ curves which map to only one $MSE$, \eg the points $A$ and $D$ in \Cref{fig_maxminrhoLp}). This is because, to be in $\mathcal{S'}$, every $\left\{\frac{L_k}{\sqrt{N}\sigma_{G}},\rho_c\right\}\bigg\rvert_{MSE= \frac{\theta\cdot L_k}{\sqrt{N}\sigma_{G}}}$ needs to only satisfy the condition $\rho_c\in\left[\psi\left(\frac{\theta\cdot L_k}{\sqrt{N}\sigma_{G}}\right), \Psi\left(\frac{L_k}{\sqrt{N}\sigma_{G}}\right)\right]$ such that $\theta\in[1,\theta_{max}]$, which is true for infinite number of $\theta$ (except the points on the $\rho_{{c}_{max'}}$ and $\rho_{c_{min'}(\theta)}$ plots, where $\theta=1$ and $\theta=\theta_0$ only respectively, \ie no variation in $\theta$ is allowed). 
\begin{description}
	\item
For example, for every point on the segment $AD$ in \Cref{fig_maxminrhoLp}, $\frac{L_k}{\sqrt{N}\sigma_{G}}=0.75$. Thus, for every point on the segment $AD$ where $\rho_c<0$ except the point $D$, $\psi(\theta\times0.75)=\rho_{c_{given}}$ has two solutions $\mathtt{\theta}_1$ and $\mathtt{\theta}_2$ that are  
related by an equation \footnote{\explainrel}: $\theta_2=\frac{\theta_1}{\frac{L_k}{\sqrt{N}\sigma_{G}}\theta_1-1}=\frac{\theta_1}{0.75\theta_1-1}$. As an example, for the point $C$ which is known to simultaneously lie on the curve $\phi(\theta_{max}\times\mathtt{x})=\phi(8\times\mathtt{x})$, one of the two $\theta$ values is 8, and thus, $\left\{\theta_1,\theta_2\right\}= \left\{\theta_1,\frac{\theta_1}{0.75\theta_1-1}\right\}=\left\{8.1.6\right\}$. Thus, $\rho_c\rvert_C\in\left[\psi\left(\frac{\theta_0\cdot L_k}{\sqrt{N}\sigma_{G}}\right), \Psi\left(\frac{L_k}{\sqrt{N}\sigma_{G}}\right)\right]$, \ie 
$\psi\left(\frac{\theta_0\cdot L_k}{\sqrt{N}\sigma_{G}}\right)
\leq
\rho_c\rvert_C
\leq
 \Psi\left(\frac{L_k}{\sqrt{N}\sigma_{G}}\right)$
is true for every value of $\theta_{0} \in [\theta_1,\theta_2]=[1.6,8]$. That is, while point $C$ maps to $\left\{\frac{L_k}{\sqrt{N}\sigma_{G}}=0.75, \rho_c=\phi(0.75\times8)=-0.3846\right\}$, it can map to $\sqrt{MSE}\in[\quad0.75\sigma_{G}\times1.6, \quad 0.75\sigma_{G}\times8\quad]$, \ie $1.44\sigma_{G}^2\leq MSE\leq 36\sigma_{G}^2$.
\end{description}
	\end{itemize}
Identical observations can be made for $k<2$ as well, the only difference being $\mathtt{x}={L_k}{N^{\frac{-1}{k}}}$ (cf. \Cref{sec_app_mae}).
\end{remark}

\section{True span of valid $\left\{L_p, \rho_{c}\right\}$ pairs, given a gold standard sequence}
\label{sec_truespan}

In \Cref{sec_lp}, we noted that the \Cref{eqn_generalmaxmin} does not represent the true span of $\rho_c$. While the true $\rho_{{c}_{max}}\leq\rho_{{c}_{max'}}$ always, the true $\rho_{c_{min}}\geq \rho_{c_{min'}}$ need not necessarily be true.
This is because $\rho_{c_{max'}}\rvert_{L_{k_{given}}}$ not only assumes that $MSE=MSE_{min}$, but also that the error coefficients and the corresponding deviations of the gold standard from the mean are in equal ratio (cf. \Cref{thm41}); the two mutually contradictory assumptions in general. 
However, the $\rho_{c_{min'}}(\theta)\rvert_{L_{k_{given}}}$  is not conditioned upon $MSE=MSE_{min}$ or $MSE=MSE_{max}$, but rather at some $\theta_0\times MSE_{min}: \exists \theta_0\in[1,\theta_{max}]$. Thus, the error coefficients in this case are not constrained to simultaneous assumptions that can be mutually contradictory. 

\newcommand{\mkemseboundaries}{
	$
	\begin{aligned}
		\text{When }& e_j=0, & \exists!i, \forall j\neq i, i,j\in[1,N], & i,j\in\mathbb{N} &\implies 
	MSE={\frac{e_i^2}{N}}\quad 	{MkE}=\frac{|e_i|^k}{N}
	&\implies\mathop{\theta}_{k<2}:=\frac{\sqrt{MSE}}{MkE^{\frac{1}{k}}}=\frac{N^{\frac{1}{k}}}{N^{\frac{1}{2}}}=N^{\frac{2-k}{2k}}.\nonumber
	\\
	&&&&&\implies\mathop{\theta}_{k>2}:=\frac{\sqrt{MSE}}{N^{\frac{2-k}{2k}}\cdot MkE^{\frac{1}{k}}}=1.\nonumber
	\\
	\text{When }& e_i=\pm e_j, &\forall i,j\in[1,N], & i,j\in\mathbb{N} &\implies 
	MSE=e_i^2, \quad {MkE}=|e_i|^k
	&\implies\mathop{\theta}_{k<2}:=\frac{\sqrt{MSE}}{MkE^{\frac{1}{k}}}
	\phantom{=\frac{N^{\frac{1}{k}}}{N^{\frac{1}{2}}}}
	=1.\nonumber
	\\
	&&&&&\implies\mathop{\theta}_{k>2}:=\frac{\sqrt{MSE}}{N^{\frac{2-k}{2k}}\cdot MkE^{\frac{1}{k}}}=N^{\frac{k-2}{2k}}.
	\nonumber
	\end{aligned}
	$
}
Specifically, for $k>2$, $MSE\rvert_{L_{k_{given}}}$ is minimised to $MSE_{min}$ \emph{only when \footnote{\mkemseboundaries}} the error-set $E$ consists of all zeros except one $e_i=\pm N^\frac{1}{k}\cdot MkE^\frac{1}{k}, \exists i:1\leq i\leq N$ -- thus, $2N$ possible sequences. Likewise, $MSE\rvert_{L_{k_{given}}}$ is maximised (\ie $MSE_{max}$) \emph{only when} the $L_p$ norm is divided into errors of equal magnitudes, \ie $e_i=\pm MkE^\frac{1}{k}, \forall i:1\leq i\leq N$ -- thus, $2^N$ possible sequences. For $k<2$, $MSE\rvert_{L_k}$ is minimised and maximised at identically opposite conditions. 
%
Simultaneously, $\rho_c$ optimisation in either direction requires that the magnitudes of the errors are directly proportional to the corresponding deviations of the gold standard sequence instances from the gold standard mean 
(cf. \Cref{thm41,thm42}). These simultaneous conditions dictate that the gold standard is constant valued -- either entirely, or except at one instance -- which is not true for a gold standard as a general case. Thus, unless these conditions are satisfied for a given gold standard, the points $A$ and $C$ in \Cref{fig_maxminrhoLp}  map to very different values of $L_k$; \ie the two points are not attainable at a given $L_k$. Thus, the allowed scope of the trajectory of $MSE$ minimisation at a given $L_p$ norm is quite restricted compared to $\mathcal{S'}$. The restriction imposed (\ie the true span $\mathcal{S}$) is evidently a function of the given gold standard sequence, since that sequence alone dictates the optimised distribution and sequence of errors. 
\begin{remark}
\begin{figure}[!t]
	\centering
	\begin{tikzpicture}
		[every node/.style={scale=1.2},]
	\begin{axis}[
	scale=1.99,
	domain=0:50,
	samples=151,
	smooth,
	no markers,
	xlabel={$\sqrt{\frac{MSE}{\sigma_G^2}}
		=\theta
		\cdot
		{\frac{\sqrt{MSE_{min}}\rvert_{L_k=c_0}}{\sigma_G}}
		\quad
		(
		:= \mathtt{x}
		:= \theta\cdot\mathtt{x'}
		)
		\rightarrow$    },
	ylabel={$\rho_c
		\quad
		(:= \mathtt{y})
		\rightarrow$},
	xmin=0,xmax=9.5,
	ymin=-1.2,ymax=1.1,
	unit vector ratio*=0.5 0.95,
	x label style={below,font=\large},
	y label style={left,font=\large},
	legend columns=4, 
	legend style={
		at={(axis cs: 4.5,1.4)},
		anchor=north,
		/tikz/column 2/.style={
			column sep=2pt,
		},
		font=\fontsize{10}{5}\selectfont},
	]
	\addplot 
	+[name path=rhomax, line width=1.1pt,dashed,black!30!green] 
	{2*(1+x)/(1+(1+x)^2)}; 
	\label{graph1tf}
	\addplot 
	+[name path=rhomin,line width=1.1pt,dashed, black!15!orange] 
	{2*(1-x)/(1+(1-x)^2)};
	\label{graph2tf},
	\addplot
	[line width=1.1,draw=teal][domain=16:18] 
	{2*(1+x)/(1+(1+x)^2)}; 
	\addplot
	[line width=1.1,draw=red][domain=16:18] 
	{2*(1-x)/(1+(1-x)^2)}; 
	\node[inner sep=1.2pt,label={[black!30!green,xshift=0pt, yshift=0pt, text width=1.5cm, align=center,rotate=-3]left:		
		$\Psi(\theta\cdot\mathtt{x'})$
	}	] at (axis cs:8.0,0.3) {};
	\node[inner sep=1.2pt,label={[text width=3cm,black!15!orange, align=center,rotate=6]right:
		$\psi(\theta\cdot\mathtt{x'})
		$
	}] at (axis cs:4.52,-0.55) {};
	\addplot[shade, 
	top color=teal!15, bottom color=red!10
	] fill between[of=rhomax and rhomin,,soft clip={domain=1.5:8}];
	\addplot [black, solid, line width=0.5] 
	{0};
	\node[inner sep=1.2pt,label={[blue,xshift=0pt, yshift=0pt, text width=1.5cm, align=center]left:		$\sqrt{\frac{MSE_{min}}{\sigma_{G}^2}}$ \\[0mm]($\theta=1$)	}	] at (axis cs:1.6,0) {};
	\node[inner sep=1.2pt,label={[text width=1.55cm,blue, align=center]right:	$\sqrt{\frac{MSE_{max}}{\sigma_{G}^2}}$ \\[0mm]($\theta=\theta_{max}$) }] at (axis cs:7.9,0) {};
	\coordinate (coA) at (axis cs: 1.5,0.6897);
	\coordinate (coB) at (axis cs: 1.5,-0.8);
	\coordinate (coC) at (axis cs: 8,-0.28);
	\coordinate (coD) at (axis cs: 2,-1);
	\coordinate (coE) at (axis cs: 8,0.2195);
	\draw [line width=0.9,blue,dotted] (coA) -- (coB);
	\draw [line width=0.9,blue,dotted] (coC) -- (coE);
	\node[fill,cross=2pt,teal, inner sep=1.2pt,label={[xshift=0pt, yshift=-3pt,align=center,teal,text width=1.45cm,align=right]left:A$\rvert_{\theta=1}$\\[0mm]
		$\mathbf{\rho_c=\Psi(\mathtt{x'})}$
		\\[0mm] $\mathbf{=\rho_{{c}_{max'}}
		}$
	}] at (coA) {};
	\node[fill,cross,red, inner sep=1.2pt,label={[xshift=0pt, yshift=0pt,red,text width=1.45cm,align=right]left:B$\rvert_{\theta=1}$
		\\[0mm]
		$\mathbf{\rho_c=\psi(\mathtt{x'})}$
	}] at (coB) {};
	\node[fill,cross,violet, inner sep=1.2pt,label={[xshift=0pt, yshift=0pt,align=center, violet,text width=1.45cm,align=right]below:C$\rvert_{\theta=\theta_{max}}$\\[0mm]
		$	\mathbf{
			\rho_c=
			\psi(\theta_{max} \cdot \mathtt{x'})
		}
		$
	}] at (coC) {};
	\node[fill,cross,inner sep=1.2pt,purple,label={[xshift=18pt, yshift=4pt,purple]below:D$\rvert_{\theta=\theta_0},$\,		
		$\mathbf{
			\rho_c=
			\psi(\theta_{0}\cdot \mathtt{x'})
			=\rho_{{c}_{min'}}=-1
		}
		$
	}] at (coD) {};
	\node[fill,circle,inner sep=1.2pt,label={[xshift=0pt, yshift=0pt]right:$(0,1)$}] at (axis cs: 0,1) {};
	\draw[teal] 	(coA) circle (1.5pt);
	\draw[red] 		(coB) circle (1.5pt);
	\draw[violet] 	(coC) circle (1.5pt);
	\draw[purple] 	(coD) circle (1.5pt);
	\draw [name path=rhomaxreal1,dotted, black]
	(axis cs: 1.5, 0.26) 
	to [bend left=20]  
	(axis cs: 2.5, 0.39); 
	\draw [name path=rhominreal1,dotted, black]
	(axis cs: 1.5, 0.26) 
	to [bend right=10]  
	(axis cs: 2.0, 0.25) 
	to [bend right=10]  
	(axis cs: 1.8, 0.1) 
	to [bend right=10]  
	(axis cs: 2.5, 0.39) ;
	\draw [name path=rhomaxreal2,dotted, black!50!cyan]
	(axis cs: 1.5, 0.1) 
	to [bend right=20]  
	(axis cs: 2.0, 0.39)
	to [bend  left=40]  
	(axis cs: 3.0, 0.39)
	to [bend right=110]  
	(axis cs: 4.25, -0.26)
	to [bend right=10]  
	(axis cs: 5.5, 0.13)
	to [bend left=30]  
	(axis cs: 6.5, 0.13)
	to [bend right=20]  
	(axis cs: 7.25, 0.13)
	to [bend left=20]  
	(axis cs: 8.0, 0.13); 
	\draw [name path=rhominreal2,dotted, black!50!cyan]
	(axis cs: 1.5, 0.1) 
	to [bend right=20]  
	(axis cs: 2.25, -0.26)
	to [bend  right=10]  
	(axis cs: 3.0, -0.13)
	to [bend left=10]  
	(axis cs: 3.75, -0.39)
	to [bend right=10]  
	(axis cs: 4.25, -0.52)
	to [bend right=30]  
	(axis cs: 5.75, 0.0)
	to [bend left=120]  
	(axis cs: 6.5, -0.13)
	to [bend right=120]  
	(axis cs: 7.0, -0.13)
	to [bend left=120]  
	(axis cs: 7.5, 0)
	to [bend right=10]  
	(axis cs: 8.0, 0.13); 
	\draw [name path=rhomaxreal3,dotted, brown]
	(axis cs: 1.5, -0.52) 
	to [bend left=30]  
	(axis cs: 4.0, 0.35)
	to [bend  left=10]  
	(axis cs: 5.0, 0.13)
	to [bend right=20]  
	(axis cs: 7.0, 0.13)
	to [bend left=20]  
	(axis cs: 8.0, -0.13);
	\draw [name path=rhominreal3,dotted, brown]
	(axis cs: 1.5, -0.52) 
	to [bend right=30]  
	(axis cs: 2.25, -0.91)
	to [bend  right=50]  
	(axis cs: 3.25, -0.39)
	to [bend left=90]  
	(axis cs: 4.0, -0.13)
	to [bend right=120]  
	(axis cs: 5.5, -0.26)
	to [bend left=120]  
	(axis cs: 6.25, -0.13)
	to [bend right=120]  
	(axis cs: 6.75, 0.0)
	to [bend left=120]  
	(axis cs: 7.5, -0.13)
	to [bend right=30]  
	(axis cs: 8.0, -0.13); 
	\addplot[shade, top color=yellow!15, bottom color=yellow!15] fill between[of=rhomaxreal1 and rhominreal1];
	\addplot[shade, top color=yellow!15, bottom color=yellow!15] fill between[of=rhomaxreal2 and rhominreal2];
	\addplot[shade, top color=yellow!15, bottom color=yellow!15] fill between[of=rhomaxreal3 and rhominreal3];
	\draw [name path=rhomaxrealO,solid,line width=1.1, teal]
	(axis cs: 1.5, 0.26) 
	to [bend left=10]  
	(axis cs: 2.0, 0.385)
	to [bend  left=40]  
	(axis cs: 3.0, 0.385)
	to [bend right=40]  
	(axis cs: 2.88, 0.22)
	to [bend  left=10]  
	(axis cs: 4.0, 0.35)
	to [bend left=10] 
	(axis cs: 5.0, 0.13) 
	to [bend  left=0]  
	(axis cs: 5.37, 0.077)
	to [bend  left=33]  
	(axis cs: 6.55, 0.12)
	to [bend  right=20]  
	(axis cs: 6.9, 0.12)
	to [bend  left=15]  
	(axis cs: 7.17, 0.123)
	to [bend left=20]  
	(axis cs: 8.0, 0.13); 
	\draw [name path=rhominrealO,solid,line width=1.1,red]
	(axis cs: 1.5, -0.52) 
	to [bend right=30]  
	(axis cs: 2.25, -0.91)
	to [bend  right=50]  
	(axis cs: 3.25, -0.39)
	to [bend left=30]  
	(axis cs: 3.34, -0.215)
	to [bend left=0]  
	(axis cs: 3.75, -0.39)
	to [bend right=10]  
	(axis cs: 4.25, -0.52)
	to [bend right=10]  
	(axis cs: 5.04, -0.415)
	to [bend right=12]  
	(axis cs: 5.4, -0.39)
	to [bend right=30]  
	(axis cs: 5.5, -0.265)
	to [bend left=120]  
	(axis cs: 6.25, -0.13)
	to [bend right=55]  
	(axis cs: 6.48, -0.155)
	to [bend right=37]  
	(axis cs: 7.0, -0.155)
	to [bend right=40]  
	(axis cs: 6.98, -0.115)
	to [bend left=40]  
	(axis cs: 7.25, 0.01)
	to [bend left=30]  
	(axis cs: 7.52, -0.08)
	to [bend left=30]  
	(axis cs: 7.5, -0.12)
	to [bend right=30]  
	(axis cs: 8.0, -0.13);
	\node[fill,cross,inner sep=1.2pt,label={ [xshift=0pt, yshift=0pt] above:$\mathbf{\rho_{c_{min}}}$}] at (axis cs: 2.25,-0.91) {};
	\node[fill,cross,inner sep=1.2pt,label={ [xshift=0pt, yshift=0pt] below:$\mathbf{\rho_{c_{max}}}$}] at (axis cs: 2.5,0.48) {};
	\node[label=
	{[text width=7cm,align=center]
		\textbf{
			\textit{
				Operating region $\mathcal{S}^*$ 
				\\[0mm]
			}
		}
		that was
		implicitly assumed\\[0mm] while deriving 
		\Cref{eqn_generalmaxmin}
	}
	] 
	at (axis cs: 7.,-1.2)
	{};
	\draw[-{latex[scale=3.0]}] 
	(axis cs: 5.8,-0.72) -- 	(axis cs: 3.5,-0.5); 
	%
	\node[label=
	{
		[text width=7cm, align=center]
		\textbf{
			\textit{
				True operating region $\mathcal{S}\hspace{6pt}|$Given $\mathbf{L_p=c_0,p\neq2}$\\[0mm]
			}
		}
		(taking into account the gold standard)
	}
	] 
	at (axis cs: 4.5,0.55) 
	{};
	\draw[-{latex[scale=3.0]}] 
	(axis cs: 4.5,0.65) -- 	(axis cs: 4,0.2); 
	\node[label=
	{
		$L_p\neq c_0$ 
	}
	] 
	at (axis cs: 5.5,0.35) 
	{};
	\draw[-{latex[scale=3.0]}] (axis cs: 5.2,0.45)--(axis cs: 5,0.2);
	\legend{$\rho_{c_{max}}(MSE)$ ,$\rho_{c_{min}}(MSE)$,
		$\rho_{c_{max}}(MSE)\rvert_{L_p=c_0}$, $\rho_{c_{min}}(MSE)\rvert_{L_p=c_0}$ 
	}
	\end{axis}
	\end{tikzpicture}
	\caption{Comparison between the true $\rho_{c_{min}}$ and $\rho_{c_{max}}$ and those obtained with $MSE$ optimisation ($\rho_{c_{max'}}$ and $\rho_{c_{min'}}$ using \Cref{eqn_generalmaxmin}) for a given $L_p=c_0$.  For $k>2$, $MSE_{min}^{\frac{1}{2}}= N^{\frac{2-k}{2k}}\cdot MkE^\frac{1}{k}$ and $MSE_{max}^{\frac{1}{2}}=MkE^\frac{1}{k}$. 
		For $0<k<2$, the equations for  $MSE_{min}$ and $MSE_{max}$ merely get exchanged, but the illustration remains the same essentially.
		Because of the many-to-many mapping between $L_2$ and $L_p$, the points in true operating region $\mathcal{S}$ map to other $L_p$-norm values as well, and not just $L_p=c_0$. The $\neg \mathcal{S}\cap \mathcal{S^*}$ region maps to  $L_p\neq c_0$. The points $A,B,C,D$ refer to the points $A,B,C,D$ from \Cref{fig_maxminrhoLp}, except that here:  $\frac{{MSE_{min}}^{0.5}}{\sigma_G}=\frac{L_k}{\sqrt{N}\sigma_G}=1.5$ and $\theta_{max}=\frac{8}{1.5}=5.\overline{3}$ (instead of 0.75 and 8 respectively).
		\label{fig_mke_mse}}
\end{figure}
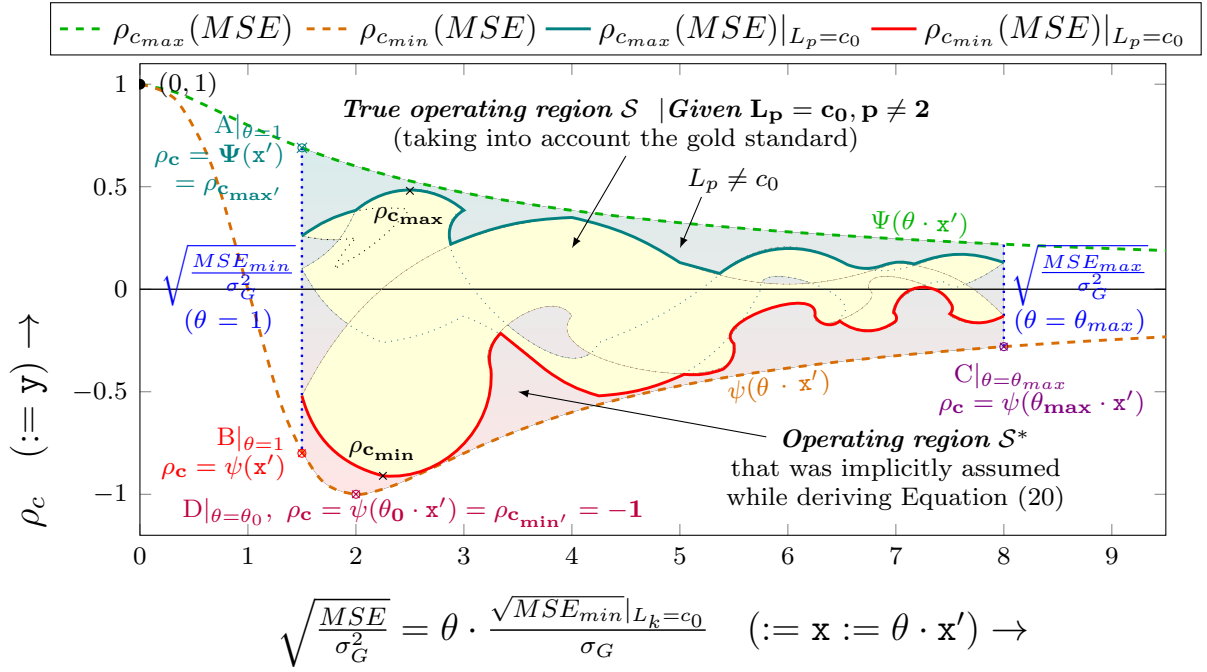

	Thus, for a two dimensional space $\mathbb{R}^2:=(\mathtt{X},\mathtt{Y})$, where $\mathtt{x}=\left|\frac{MSE}{\sigma_G}\right|$, $\mathtt{y}=\rho_c$ (cf. \Cref{fig_mke_mse}), 

\begin{itemize}
\item
There can be at the most  $2N$ and $2^N$ $\{MSE, \rho_c\}$ pairs mapped $\in \mathcal{S}$ at the two extremums (cf. the two blue lines in \Cref{fig_mke_mse}). The example \emph{true} operating region in \Cref{fig_mke_mse}, therefore, does not converge to a unique point at either $MSE$ extremum, nor can all of the infinitely many points lying on the two blue lines be part of $\mathcal{S}$ (cf. \Cref{fig_maxminrhoLp}). 
\item
The span rather starts and ends with at the most $2N$ and $2^N$ points at $MSE_{min}$ and $MSE_{max}$ extremum respectively for $k>2$ (while $2^N$ and $2N$ number of points respectively for $k<2$). 

\end{itemize}
\end{remark}
Evaluation of the properties (such as the parametric equation, shape, existence of  voids and discontinuities) of $\mathcal{S}$ in the $\{L_2,\rho_c\}$ space likewise as a function of the given gold standard for a fixed $L_p$ norm $(p\neq 2)$  (cf. \Cref{fig_mke_mse}) is a possible future research direction.

In summary, the operating regions in \Cref{fig_maxminrhoLp} are constrained by the  plots $\rho_{c_{max'}}$ and $\rho_{c_{min'}}\Bigr\rvert_{ \theta\in\left[1,\theta_{max}(N)\right]   }$ as per the  \Cref{eqn_generalmaxmin}.
While it is true that a given $MkE$ (effectively, a given $L_p$ norm) translates to a restricted range for $MSE$ (cf. \Cref{eqnl2rel,eqnl2rellek}), these $MSE$ extremities do not necessarily translate to $\rho_c$ optimisation as dictated by \Cref{eqn_generalmaxmin} (cf. \Cref{fig_mke_mse}). 
Rather, it is $\frac{MSE}{\sigma_{XY}}$ metric that needs to be optimised to find the optimised error distribution and sequence $D_{optimised}$ optimising $\rho_c$, when given \emph{any} constraint on $D:=(d_i)_1^N$ -- \eg a constant $L_p$ norm. Investigation of the shape and properties of the span of the valid $(L_p,\rho_c)$ pairs in the two dimensional $\{L_p,\rho_c\}$ space (similar to \Cref{fig_maxminrho}, but for $p \neq 2$) is yet another possible future research direction. 

We evaluate the conditions and formulation for these in the section next.
by optimising the function $\frac{MSE}{\sigma_{XY}}$ (cf. \Cref{eqn_cccmsemap}) and not just $MSE$, subject to the constraint: given $L_p=\left(\sum_{i=1}^{N} {|x_i-y_i|^p}\right)^\frac{1}{p}$.

\section{$\rho_c$ optimisation, given $L_p$ $(p=2m,\forall m\in \mathbb{N})$}
\label{sec_m2mLp}
We generalise the \Cref{sec_m2mm} to any given $L_p$ norm for every even $p$, beyond $p=2$. 
%
The problem statement is, thus: 

\textit{
	Given (1) a gold standard sequence, $G:=(g_i)_1^N$, and (2) a fixed $L_p,p=2m,\forall m\in\mathbb{N}$, find 
	the set(/s) of error values $E:=(e_i)_1^N$ that achieve maximisation and minimsaition of $\rho_c$.
}

For the prediction $X:=(x_i)_1^N$  and the gold standard sequence 
$Y:=(y_i)_1^N$ or $G:=(g_i)_1^N $, 
%
\begin{align}
%
%
%
%
\text{
	Maximise: }&f(\left\{d_i\right\}_{1}^{N})\hspace{-12pt}&&= \frac{N\cdot \sigma_{XY}}{N\cdot MSE}
=
\frac{
	\sum_{i=1}^{N}{y_{z_i}}^2
	+\sum_{i=1}^{N}d_i{y_{z_i}}}{\sum_{i=1}^{N}d_i^2},
\nonumber
\\
\text{subject to: } &g(\left\{d_i\right\}_{1}^{N})\hspace{-12pt}&&= 
\sum_{i=1}^{N}|d_i|^k-N\cdot MkE=
\sum_{i=1}^{N}d_i^k-N\cdot MkE=0
\hspace{3pt}
(\because k\hspace{-2pt}=\hspace{-2pt}2m, m\hspace{-2pt}\in\hspace{-2pt}\mathbb{N})
.
\nonumber
\end{align}
With $f$ and $g$ continuous and differentiable $\forall d_i\in\mathbb{R}$,
the auxiliary Lagrange expression $\mathcal {L}$ is:
\begin{align}
{\mathcal {L}}(d_1,d_2,\cdots,d_N,\lambda )=&f(d_1,d_2,\cdots,d_N)-\lambda \cdot g(d_1,d_2,\cdots,d_N),
\nonumber
\\
\label{eqnlagrange_funcs}
{\mathcal {L}}
=&\frac{\sigma_{XY}}{MSE}
-\lambda \left(\sum_{i=1}^{N}d_i^k-N\cdot MkE\right).\\
\therefore
\nabla_{d_1,d_2,\cdots,d_N,\lambda}
{\mathcal {L}}
=&0
\Leftrightarrow 
{\begin{cases}
	\frac{\partial }{\partial d_i}\mathcal {L} = 0
	\therefore
	\frac{\partial}{\partial d_i} 
	\left(\frac{\sigma_{XY}}{MSE}\right)
	-\lambda k d_i^{k-1}
	=0
	\forall  i \in \mathbb{N} : i \in [1,N].
	\\
	\frac{\partial}{\partial \lambda}\mathcal {L} = 0		
	\therefore
	\sum_{i=1}^{N}d_i^k-N\cdot MkE=0.
	\end{cases}}
\label{eqnlag2m}
\\
\frac{\partial}{\partial d_i} 
\left(\frac{\sigma_{XY}}{MSE}\right)
=&\frac{MSE\cdot
	\frac{\partial}{\partial d_i}
	\sigma_{XY}
	-\sigma_{XY}\cdot
	\frac{\partial}{\partial d_i}
	MSE}{MSE^2}.
\nonumber
\\
\therefore 0=
\frac{\partial}{\partial d_i} \mathcal {L}
=& \frac{MSE\cdot
	\frac{\partial}{\partial d_i}
	\sigma_{XY}
	-\sigma_{XY}\cdot
	\frac{\partial}{\partial d_i}
	MSE}{MSE^2}
- \lambda k d_i^{k-1}. 
\nonumber
\\
\therefore
\lambda k d_i^{k-1}
=&
\frac{
	MSE\cdot
	\frac{\partial}{\partial d_i}
	\sigma_{XY}
	-\sigma_{XY}\cdot
	\frac{\partial}{\partial d_i}
	MSE
}{MSE^2},
\nonumber
\\
=&
\frac{
	MSE\cdot
	\frac{\partial}{\partial d_i}
	\sum_{i=1}^{N}y_{z_i}d_i
	-\sigma_{XY}\cdot
	\frac{\partial}{\partial d_i}
	\sum_{i=1}^{N} d_j^2
}{N\cdot MSE^2}	.
\nonumber
\\
\therefore
d_i^k
=&\frac{MSE\cdot y_{z_i}d_i 
	-2 d_i^2 (\sigma_{G}^2+ \sigma_{GD})}{\lambda k  N MSE^2} \quad (\because \text{ \Cref{eqncovmax}}),
\label{eqndik}
\\
\nonumber
\text{where: }
\sigma_{GD}=&\frac{\sum_{i=1}^{N} y_{z_i}d_i}{N}:\text{ Covariance between $G$ and $D$}.\\
\therefore
\sum_{i=1}^{N}\frac{d_i^k}{N}
=MkE=&
\sum_{i=1}^{N}
\frac{MSE\cdot \frac{y_{z_i}d_i}{N} 
	-2 \frac{d_i^2}{N} (\sigma_{G}^2+ \sigma_{GD})}{\lambda \cdot k \cdot N  \cdot MSE^2}.
\nonumber
\\
\therefore 
MkE
=&
\frac{MSE\cdot \sigma_{GD}
	-2 \cdot MSE\cdot (\sigma_{G}^2+ \sigma_{GD})}{\lambda \cdot k \cdot N \cdot MSE^2},
\nonumber
=
\frac{
	-(2 \sigma_{G}^2+ \sigma_{GD}) } {\lambda \cdot k \cdot N \cdot MSE}.
\nonumber
\\
\therefore
\lambda \cdot k \cdot N \cdot MSE^2
=&
\frac{
	-(2 \sigma_{G}^2+ \sigma_{GD})\cdot MSE } {MkE}
\text{ ( = denominator in \Cref{eqndik})}.
\nonumber
\\
\therefore d_i^k
=&-MkE\frac{MSE\cdot y_{z_i}d_i 
	-2 d_i^2 (\sigma_{G}^2+ \sigma_{GD})}{(2 \sigma_{G}^2+ \sigma_{GD})\cdot MSE}.	
\nonumber
\\
\therefore
\quad d_i^k\cdot( 2 \sigma_{G}^2+\sigma_{GD})\cdot MSE
=&-MkE \cdot (MSE \cdot y_{z_i}d_i-2d_i^2\sigma_{G}^2- 2d_i^2\sigma_{GD})
\label{eqnsimpler}
\\
\therefore
	2 \sigma_{G}^2\cdot\left(\frac{d_i^k}{MkE}-\frac{d_i^2}{MSE}\right)
	&+\sigma_{GD}\cdot\left(\frac{d_i^k}{MkE}-2\frac{d_i^2}{MSE}\right)+y_{z_i}{d_i}
	=0.
\label{eqnsolvethisk}
\end{align}
\begin{align}
\nonumber
\text{Dividing \Cref{eqnsimpler} by $d_i$, and expressing $\sigma_{GD}$ and $MSE$ in terms of $d_i$, we get a polynomial in $d_i$: }
\span
\\
\label{eqnsolvethispoly}
0\quad&=\quad 2 \mathbf{d_i^{k+1}} \Big(\sum_{\forall j} y_{z_j}^2\Big)
+ 2 \mathbf{d_i^{k-1}} \Big(\sum_{\forall j} y_{z_j}^2\Big)\Big(\sum_{j\neq i} d_j^2\Big)
\\
\nonumber
&\quad + \mathbf{d_i^{k+2}} y_i 
+ \mathbf{d_i^{k} } y_i \Big(\sum_{j\neq i} d_j^2\Big)
+ \mathbf{d_i^{k+1} }  \Big(\sum_{j\neq i} y_{z_j} d_j\Big)
+ \mathbf{d_i^{k-1}}  \Big(\sum_{j\neq i} y_{z_j} d_j\Big) \Big(\sum_{j\neq i} d_j^2\Big)
\\
&\quad+N\cdot MkE \cdot 
\Big( y_{z_i} \mathbf{d_i^2}
+  y_{z_i} \sum_{j\neq i} d_j^2
-2 \mathbf{d_i} \sum_{\forall j} y_{z_j}^2
-2 \mathbf{d_i^2} y_{z_i} 
-2 \mathbf{d_i} \sum_{j\neq i} y_{z_j} d_j \Big)
\nonumber
\end{align}
Solving \Cref{eqnsolvethispoly} for $d_i$ in terms of $y_{z_i}$, \ie $d_i=\zeta(y_{z_i})$, one finds an optimal  
$\{d_i\}_{1}^{N}$ 
maximising $\rho_c$.
For example, substituting $k=2$ results in \Cref{eqn_rhocmax,eqn_rhocmin} derived in \Cref{sec_m2mm} (cf. \Cref{sec_app6_solvethisk}). 

Rewriting \Cref{eqnsolvethisk} in terms of $y_{z_i}$, we get:
\begin{alignat}{3}
\nonumber
0\quad=&\quad\sum_{j=1}^{N} y_{z_j}^2 \left(\frac{d_i^{k-1}}{MkE}-\frac{d_i}{MSE}\right)
&+&&\quad\sum_{j=1}^{N} y_{z_j}\cdot d_j\cdot&\left(\frac{d_i^{k-1}}{MkE}-2\frac{d_i}{MSE}\right)
+y_{z_i}\\
\nonumber
=&\quad\sum_{j=1}^{N} y_{z_j}^2 \left({d_i^{k-1}}\cdot{MSE} - {d_i}\cdot{MkE}\right)
+\sum_{j=1}^{N} y_{z_j}\cdot d_j\cdot\left({d_i^{k-1}}\cdot{MSE} -2{d_i}\cdot{MkE} \right)
+y_{z_i}\cdot{MSE} \cdot{MkE}
\span\span\span\span
\\
\nonumber
=&
\phantom{+\quad \sum_{j=1}^{N}}
y_{z_i}^2 \left({d_i^{k-1}}\cdot{MSE} - {d_i}\cdot{MkE}\right)
+
\phantom{\sum_{j=1}^{N}}
y_{z_i}\cdot d_i\cdot\left({d_i^{k-1}}\cdot{MSE} -2{d_i}\cdot{MkE} \right)
+y_{z_i}\cdot{MSE} \cdot{MkE} 
\span\span\span\span
\\
\nonumber
&+\quad
\sum_{j\neq i}^{} y_{z_j}^2 \left({d_i^{k-1}}\cdot{MSE} - {d_i}\cdot{MkE}\right)
+\sum_{j\neq i}^{} y_{z_j}\cdot d_j\cdot\left({d_i^{k-1}}\cdot{MSE} -2{d_i}\cdot{MkE} \right)
\span\span\span\span
\\
=&\quad \mathbf{y_{z_i}^2 }
+ \mathbf{y_{z_i}} \left(d_i + \frac{ {MSE} \cdot{MkE}-d_i^2\cdot MkE}{d_i^{k-1}\cdot{MSE} -{d_i}\cdot{MkE} } \right)
&+& \Bigg[\sum_{j\neq i}^{}  y_{z_j}\Bigg(y_{z_j} + d_j\frac{ {d_i^{k-1}}\cdot{MSE} -2{d_i}\cdot{MkE}}{d_i^{k-1}\cdot{MSE} -{d_i}\cdot{MkE} } \Bigg)\Bigg]\span\span
\label{eqnyzi2m}
\end{alignat}
Solving quadratic \Cref{eqnyzi2m} would result in expression of $y_{z_i}$ in terms of $d_i$, possibly free of $y_{z_j}, \text{for }j\neq i$. 
%
We note that this true optimisation formulation (cf. \Cref{eqnsolvethispoly,eqnyzi2m}) is different than that we obtained in \Cref{sec_lp} (cf. \Cref{eqn_generalmaxmin}), where only $MSE$ was optimised 
instead of $\frac{\sigma_{XY}}{MSE}$.


\section{$\rho_c$ optimisation, 
	given the error-set
}
\label{sec_prob}
Different conditions on the error-set  for $\rho_c$ minimisation and maximisation  in \Cref{sec_m2mm} establish the relevance of error values for $\rho_c$ evaluation, for a given $MSE$.
%
In this section, we prove that not only the values of the individual errors, but also their order directly impacts the $\rho_c$ metric evaluation.
To this end, we attempt to decouple the components of $\rho_c$ that are dependent on merely the values of errors, from those directly impacted by the `sequence'/ordering of the errors. 
The problem statement is, thus:

\textit{
	Given (1) a gold standard time series, $G:=(g_i)_1^N$, and (2) a fixed set of error values, $E:=(e_i)_1^N$ (thus a fixed $MSE$), find the distribution(s) or correspondence(s) of error values with respect to the gold standard that achieve(s) the highest possible $\rho_c$.
}

Let the prediction and the gold standard sequences be 
$X:=(x_i)_1^N$ and $Y:=(y_i)_1^N$, not necessarily in that order. Note that, 
the sequence to variable correspondence
does not affect $\rho_c$ evaluation -- since the $\rho_c$ formulation is symmetric with respect to $X$ and $Y$. 
%

\subsection{Formulation 1: Replacing $(x_i)$ with $(y_i+d_i)$}
\label{sec_form1}
\begin{align}
\nonumber
\text{From \Cref{eqn_cccmsemap}, we have: }
\mathbf{\rho_c}
&= \left(1+\frac{N\cdot MSE}{2\sum_{i=1}^{N}{(y_i-\mu_Y+d_i-\mu_D)(y_i-\mu_Y)}}\right)^{-1}.\\
\nonumber\span
\text{Note that } \sum_{i=1}^{N}\mu_D (y_{i}-\mu_Y)=\mu_D\sum_{i=1}^{N} (y_{i}-\mu_Y)=\mu_D(N\mu_Y-N\mu_Y)=0.
\\
\nonumber
\therefore \mathbf{\rho_c}
&= \left(1+\frac{N\cdot MSE}{2\sum_{i=1}^{N}(y_i-\mu_Y)^2
	+ 2\sum_{i=1}^{N}y_id_i
	-2\sum_{i=1}^{N}\mu_Yd_i}\right)^{-1},\\
\nonumber
&= \left(1+\frac{N\cdot MSE}{2N{\sigma_Y}^2
	+ 2\sum_{i=1}^{N}\mathbf{y_id_i}
	-2N\mu_Y\mu_D}\right)^{-1}.\\
\because \left(1+\frac{a}{b}\right)^{-1}
=\left(1-\frac{a}{a+b}\right)
\implies
\quad
	\mathbf{\rho_c}
	&= \Bigg(1-\frac{N\cdot MSE}{2N({\sigma_Y}^2-\mu_Y\mu_D)
		+ 2\sum_{i=1}^{N}\mathbf{y_id_i}
		+N\cdot MSE}\Bigg).
\label{eqnYdrho}
\end{align}
Given a gold standard $Y$ and \{${d_i}$\}, $\rho_c$ maximisation necessitates $\displaystyle{\sum _{i=1}^{N}\mathbf{(y_{i})d_i}}$ 
\textbf{
	maximisation
}.
\subsection{Formulation 2: Replacing $(y_i)$ with $(x_i-d_i)$}
\label{sec_form2}
\begin{flalign}
&\text{Likewise (Cf. \Cref{sec_app1}), }
	\mathbf{\rho_c}
	= \Bigg(1-\frac{N\cdot MSE}{2N({\sigma_X}^2+\mu_X\mu_D)
		-2\sum_{i=1}^{N}\mathbf{x_id_i}
		+N\cdot MSE}\Bigg).&
\label{eqnXdrho}
\end{flalign}
Given a gold standard $X$ and \{${d_i}$\}, $\rho_c$ maximisation necessitates $\displaystyle{\sum _{i=1}^{N}\mathbf{(x_{i})d_i}}$ 
\textbf{
	minimisation
}.

\subsection{Paradoxical nature of the conditions on the error-set}
\label{sec_paradox}

The concluding remarks of \Cref{sec_form1} and \Cref{sec_form2} imply that we arrive at mutually contradictory requirements in terms of the rearrangement of values in the error-set. This is only an apparent paradox, since given a set of error values ($E$) and a gold standard sequence ($G$), two different candidate prediction sequences ($P=G+E$ or $P=G-E$) may be generated yielding a high $\rho_c$,.
%
When devising a loss function in terms of the predicted sequence itself, however, the requirements implied by both formulations converge to the same conditions. We discuss next this rediscovery of consistency, arising out of the prima facie mutually contradictory insights interestingly. 
\subsubsection{Contradictory requirements in terms of the $ 
	{\left(\sum _{i=1}^{N}\mathbf{g_{i}e_i}\right)}$
	summation}
%
Specifically, to maximise $\rho_c$:
\begin{itemize}
	\item 
	\tikzmark{beg-ge1}
	The formulation in the \Cref{sec_form1} requires \emph{maximisation} of the  $ 
	{\left(\sum _{i=1}^{N}\mathbf{g_{i}e_i}\right)}$ quantity. 
	\item The formulation in the \Cref{sec_form2} requires \emph{minimisation} of the  $ 
	{\left(\sum _{i=1}^{N}\mathbf{g_{i}e_i}\right)}$ quantity. \tikzmark{end-ge1}
\end{itemize}

\subsubsection{Contradictory requirements in terms of the error-set permutation}
\begin{itemize}
	\item 
	\tikzmark{beg-ge2a}
	Maximisation of $ 
	{\left(\sum _{i=1}^{N}\mathbf{g_{i}e_i}\right)}$ (as per \Cref{sec_form1}) necessitates that the error values are in the \emph{same} sorted order as of the gold standard values 
	(cf. \Cref{sec_app2}). 
	\tikzmark{end-ge2a}
	\item That is, a bigger $e_i$ 
	corresponds to a 
	bigger $g_i$.
	\item 
	\tikzmark{beg-ge2b}
	Minimisation of $ 
	{\left(\sum _{i=1}^{N}\mathbf{g_{i}e_i}\right)}$ (as per \Cref{sec_form2}) necessitates that the error values are in the \emph{opposite} order as of the gold standard values 
	(cf. \Cref{sec_app2}).
	\tikzmark{end-ge2b}
	\item That is, a smaller $e_i$ 
	corresponds to a
	bigger $g_i$.
\end{itemize}

In other words, taking into account bot the magnitudes and the signs, the error values need to be sorted in the same order as of the elements of the time series, as dictated by the first (\Cref{sec_form1}) $\rho_c$ formulation. The second formulation (\Cref{sec_form2}) dictates to the contrary; the errors need to be sorted in exactly the opposite order as of the sorted elements of the gold-standard time series.  Thus, there exist two prediction sequences corresponding to an identical set of errors (with respect to the gold standard sequence), that maximise $\rho_c$.

\subsubsection{Consistent requirements in terms of the $
	{\left(\sum _{i=1}^{N}\mathbf{g_{i}p_i}\right)}$ summation}
\label{sec_consistent}
While there exist two distinct prediction sequences with an identical error-set (and thus an identical $MSE$), and despite the contradictory requirements in terms their permutations, we establish next that the conditions for the $\rho_c$ maximisation in terms of the product summation $\smash{\sum _{i=1}^{N}\mathbf{g_{i}p_i}}$ are consistent in both these formulations. 

According to the first formulation (\Cref{sec_form1}),
\begin{itemize}
	\item  $E=P-G$, \ie $(e_i)_1^N=(p_i)_1^N-(g_i)_1^N$, and $\smash{\left(\sum _{i=1}^{N}\mathbf{g_{i}e_i}\right)}$ needs to be maximised.
	\item 
	\tikzmark{beg-gpa}
	That is, $
	{\left(\sum _{i=1}^{N}\mathbf{g_{i}(p_i-g_i})\right)}$ needs to be maximised,
	implying that $
	{\left(\sum _{i=1}^{N}\mathbf{g_{i}(p_i)}\right)}$  needs to be \emph{maximised} -- since $
	{\left(\sum _{i=1}^{N}\mathbf{g_{i}^2}\right)}$ is constant for a given gold standard.
	\tikzmark{end-gpa}
\end{itemize} 

According to the second formulation (\Cref{sec_form2}),
\begin{itemize}
	\item  $E=G-P$, \ie $(e_i)_1^N=(g_i)_1^N-(p_i)_1^N$, and $\smash{\left(\sum _{i=1}^{N}\mathbf{g_{i}e_i}\right)}$ needs to be minimised.
	\item 
	\tikzmark{beg-gpb}
	That is,   $
	{\left(\sum _{i=1}^{N}\mathbf{g_{i}(g_i-p_i})\right)}$ needs to be minimised,
	implying  that $
	{\left(\sum _{i=1}^{N}\mathbf{g_{i}(p_i)}\right)}$ needs to be \emph{maximised} -- since $
	{\left(\sum _{i=1}^{N}\mathbf{g_{i}^2}\right)}$ is constant for a given gold standard.
	\tikzmark{end-gpb}
\end{itemize} 
\subsection{Optimal  $\rho_c$ formulation, given the error-set}
\label{sec_maxmin12}


%
%
Let  $\displaystyle \overset{\mathit{1}}{E}=(\overset{\mathit{1}}{e}_i)_1^N=$ the reordered $P-G$' error sequence (cf. \Cref{sec_form1}), and $\displaystyle \overset{\mathit{2}}{E}=(\overset{\mathit{2}}{e}_i)_1^N=$ the reordered of `$G-P$' error sequence (cf. \Cref{sec_form2}) denote the two optimal error permutations corresponding to the \emph{sorted} rearrangement of $G:=\displaystyle \bar{G}=(\bar{g}_i)_1^N$ for $\rho_c$ maximisation. From \Cref{sec_app4_chev,sec_consistent}:
\begin{align}
\nonumber
&\text{if for }\quad
&&\bar{G},\quad\quad
&&\quad 
\bar{g}_1 
&&\quad\geq\quad
\bar{g}_2 &&\quad\cdots
&&\quad\geq\quad
\bar{g}_N,\\
\label{orderErr1}
&\text{then for } \quad
&&\overset{\mathit{1}}{E}:\quad\quad
&&\quad 
\overset{\mathit{1}}{e}_1 
&&\quad\geq\quad
\overset{\mathit{1}}{e}_2 &&\quad\cdots
&&\quad\geq\quad
\overset{\mathit{1}}{e}_N,\\
\label{orderErr2}
&\text{and for }\quad
&&\overset{\mathit{2}}{E}:\quad\quad
&&\quad 
\overset{\mathit{2}}{e}_1 
&&\quad\leq\quad
\overset{\mathit{2}}{e}_2 &&\quad\cdots
&&\quad\leq\quad
\overset{\mathit{2}}{e}_N,\\
\nonumber
&\text{\ie }
&&\overset{\mathit{2}}{E}:\quad\quad
&&\quad 
\overset{\mathit{2}}{e}_{N} 
&&\quad\geq\quad
\overset{\mathit{2}}{e}_{N-1} &&\quad\cdots
&&\quad\geq\quad
\overset{\mathit{2}}{e}_1,\\
&&&\therefore &&\quad \overset{\mathit{2}}{e}_{j} 
&&\quad=\quad \overset{\mathit{1}}{e}_{N+1-j} &&\quad&&\forall\quad j \in \mathbb{N} : j \in [1,N].\label{eqn_symmetric_coef}
\end{align}
%
\begin{align}
\label{eqnrhoc1def}
\text{Thus, }
\rho_{c_{max_1}} &=\bigg(1-\frac
{
	N(MSE)}
{
	2N\sigma_G^2+2\sum _{i=1}^{N}{\bar{g_{i}}\overset{\mathit{1}}{e}_i}-2N\mu_G\mu_E+N\cdot MSE}\bigg)
\hspace{4pt} 
\because \text{ \Cref{eqnYdrho,orderErr1}},
\\
\label{eqnrhoc2def}
\rho_{c_{max_2}} &=\bigg(1-\frac
{
	N(MSE)}
{
	2N\sigma_G^2-2\sum _{i=1}^{N}{\bar{g_{i}}\overset{\mathit{2}}{e}_i}+2N\mu_G\mu_E+N\cdot MSE}\bigg)
\hspace{4pt}
\because \text{ \Cref{eqnXdrho,orderErr1}},
\\
\label{eqnrhoc1defmin}
\rho_{c_{min_1}} &=\bigg(1-\frac
{
	N(MSE)}
{
	2N\sigma_G^2+2\sum _{i=1}^{N}{\bar{g_{i}}\overset{\mathit{2}}{e}_i}-2N\mu_G\mu_E+N\cdot MSE}\bigg)
\hspace{5pt} 
\because \text{ \Cref{eqnYdrho,orderErr2}},
\\
\label{eqnrhoc2defmin}
\rho_{c_{min_2}} &=\bigg(1-\frac
{
	N(MSE)}
{
	2N\sigma_G^2-2\sum _{i=1}^{N}{\bar{g_{i}}\overset{\mathit{1}}{e}_i}+2N\mu_G\mu_E+N\cdot MSE}\bigg)
\hspace{5pt}
\because \text{ \Cref{eqnXdrho,orderErr2}}.
\end{align}
\begin{remark}
From the equations above, following observations can be made:
	\begin{itemize}
\item
The denominators in \Cref{eqnrhoc1def,eqnrhoc2def} are strictly non-negative (cf. Chebyshev's sum inequality in \Cref{sec_app4_chev}), and are strictly $\leq N\cdot MSE$. Which implies:
$0\leq\rho_{c_{max_1}},\rho_{c_{max_2}}\leq1$.
%
\item
In other words, there exists a permutation of every possible error-set that results in a non-negatively correlated prediction sequence, irrespective of the definition or formulation of the error sequence used (\ie whether $P-G$ or $G-P$), and irrespective of the $MSE$ the error sequence amounts to.
\item
$\rho_{c_{max_1}}, \rho_{c_{max_2}}$  $\to 0$ only when $MSE\to\infty$. Both these inferences are consistent with \Cref{fig_maxminrho}.
\item
As per Chebyshev's sum inequality (cf. \Cref{sec_app4_chev}) and from \Cref{orderErr1,orderErr2}: 
\begin{align}
\nonumber
2\sum _{i=1}^{N}{\bar{g_{i}}\overset{\mathit{2}}{e}_i}-2N\mu_G\mu_E \leq 0 \hspace{3pt}
\text{ and }
\hspace{3pt}
-2\sum _{i=1}^{N}{\bar{g_{i}}\overset{\mathit{1}}{e}_i}+2N\mu_G\mu_E
\leq 0
\\
\nonumber
\implies
\rho_{{c}_{min_1}}, \rho_{{c}_{min_2}} 
\leq 0 \quad \text{ for a smaller } \frac{\sigma_{G}^2}{MSE}
\quad
\because \text{ \Cref{eqnrhoc1defmin,eqnrhoc2defmin}}.
\end{align}
This observation is consistent with \Cref{fig_maxminrho}, where:
$-1\leq\rho_{c_{min}}\leq 0 \quad \text{ for } \quad \frac{MSE}{\sigma_{G}^2}\geq 1.$
\item
A generalisation, or determining conclusively $\rho_{c_{max_1}}
\gtreqqless
\rho_{c_{max_2}}$ is \emph{not} possible (cf. \Cref{sec_app5_comparerho12}). 
\end{itemize}
\end{remark}


\subsection{Dataset example with illustrations}
\label{sec_sewa}

\begin{figure}[!tb]
	\centerline{
		\includegraphics[width=13cm]{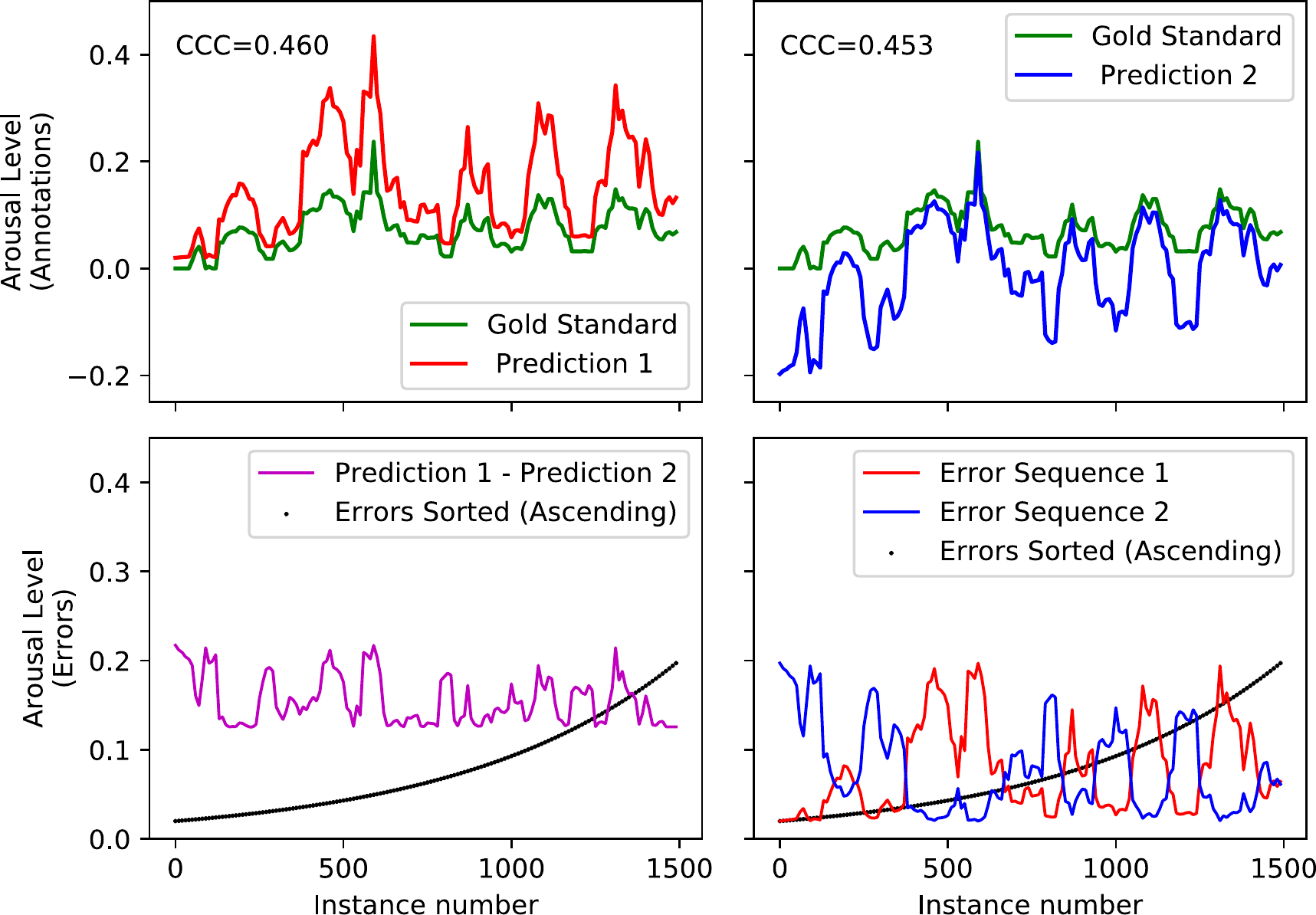}
	}
	\caption{A reproducible illustration
		of an identical error-set resulting in different performance in terms of $\rho_c$ through reordering. 
		The gold standard sequence used (in green) is
		the \textit{Train\_04} arousal annotation from the \emph{SEWA/AVEC'17}  database \citep{Kossaifi19-SDA,Ringeval17-POT}.
		The two error sequences, the sorted error sequence,
		and the difference between the two predictions are in the bottom row of the plots.
		The errors are ordered such that they maximise $\rho_c$ (as per  \Cref{orderErr1,orderErr2}), shown in red and blue respectively in the top row.  
		The resulting two $\rho_c$ are not drastically different in the example above. Because $MSE$ is large enough in this particular case,
		with a different permutation, a close-to-zero or a negative $\rho_c$  can be obtained as well (cf.
		\urlstyle{rm} \url{https://github.com/vedhasua/mse_ccc_corollary/}). 
		\label{figsewa}
	}
\end{figure}

\Cref{figsewa} illustrates  how the two different optimal permutations of the same error-set, as per the $\rho_c$ formulations from \Cref{eqnrhoc1def,eqnrhoc2def} translate to two different prediction sequences (\ie $P=G+E$ and $P=G-E$), and thus two different $\rho_{c_max}$. 
The database we used to generate 
\Cref{figsewa}
is the gold standard sequence of \textit{Train\_04}  arousal levels from the \emph{SEWA/AVEC'17} database \citep{Kossaifi19-SDA,Ringeval17-POT}. The database has recently gained a lot of popularity in the affective computing community, as it was used in all of the recent Audio/Visual Emotion Challenges (AVEC) \citep{Ringeval17-POT, Ringeval18-A2W, Ringeval19-A2W}, where $\rho_c$ was the decision criteria for recognising the winning submission. 
Note that the results presented are fully reproducible with an easy-to-use scripts we have made available at \urlstyle{rm}\url{https://github.com/vedhasua/mse_ccc_corollary/} that use a publicly available \emph{SEWA/AVEC'17} dataset, or by using \emph{any} ordinal data that one can own or can generate.

The green line-plot represents the gold standard sequence always.
Different error sequences give rise to blue, red and black line-plots in the bottom right subplot, of which the black one represents the sorted error-set . `Prediction 1' and `Prediction 2' correspond to  $\rho_{c_{max_1}}$ and $\rho_{c_{max_2}}$ (cf. \Cref{eqnrhoc1def,eqnrhoc2def}). For the sake of simplicity and clarity in the illustration, we chose the error values to be strictly positive, where the minimum of the errors is close to zero. Observing the locations of the tiniest of the errors, we note that the first prediction sequence attempts to closely follow the lower values in the gold standard, while the latter closely follows the bigger values, 
consistent to the discussion in \Cref{sec_paradox}.

We also note that the two prediction sequences are quite similar in shape to one another, but are far from being identical. The two are non-linearly stretched in the vertical direction, \ie owing to the reordering of different errors. The second row of plots provides a better insight into this vertical stretching. Comparing the magenta and the green-coloured plot, it can be seen that the difference between the two prediction sequences is the highest at the extremities of the gold standard. This is expected, 
since the smallest error is associated with either the smallest (\Cref{orderErr1}) or the biggest (\Cref{orderErr2}) sample.
\section{Additional loss functions and the interpretations}
\label{sec_cost}

\Cref{eqn_cccmsemap} dictates the need for minimisation the $L_p$ norm of the error values, \ie $MSE=N\cdot \sum _{j=1}^{N}(g_j-p_j)^{2}$,
while simultaneous maximisation of the $\operatorname{DotProduct}(G,P)=DP={\sum _{i=1}^{N}g_{i}p_i}$ (aka Hadamard product) to achieve $\rho_c$ maximisation. 
A family of loss functions, such as the following, can be easily designed.
\begin{alignat}{6}
&
\label{eqncost1}
&&f(g_i,p_i)&=&\frac{\sum _{j=1}^{N}(g_j-p_j)^{2}}{\sum _{j=1}^{N}g_{j}p_j}, 
\\
\label{eqncost2}
&\hspace{0cm}
\text{more generally, } &&f(g_i,p_i)  &=& \left|\frac{\sum _{j=1}^{N}(g_j-p_j)^{2}}{\sum _{j=1}^{N}g_{j}p_j}\right|^{\gamma}&\text{where }&\gamma>0,
\\
\label{eqncost3}
&\hspace{0cm}
\text{even more generally, } &&f(g_i,p_i)  &=& \left|\frac{\sum _{j=1}^{N}\varepsilon_j(g_j-p_j)^{2}}{\sum _{j=1}^{N}\alpha_i (g_{j}p_j)^{2\beta_j+1}}\right|^{\gamma}&\text{where }&\alpha_j,\beta_j,\varepsilon_j,\gamma>0, \beta_{j}\in\mathbb{N}.
\\
\label{eqncost4}
&\text{Or } &&f(g_i,p_i)&=&{\sum _{j=1}^{N}(g_j-p_j)^{2}-\alpha\sum _{j=1}^{N}g_{j}p_j}, &\text{where }&\alpha>0,\\
\label{eqncost5}
&\text{more generally, } &&f(g_i,p_i)&=&\left|{\sum _{j=1}^{N}(g_j-p_j)^{2}-\alpha\sum _{j=1}^{N}(g_{j}p_j)^{2\beta+1}}\right|^\gamma&\text{where }&\alpha,\beta,\gamma>0, \beta\in\mathbb{N},\\
\label{eqncost6}
&\text{even more generally, }&&f(g_i,p_i)&=&\left|{\sum _{j=1}^{N}\varepsilon_j(g_j-p_j)^{2}-\sum _{j=1}^{N}\alpha_i(g_{j}p_j)^{2\beta_{j}+1}}\right|^\gamma&\text{where }&\alpha_j,  \beta_j,\varepsilon_j,\gamma >0,  \beta_j\in\mathbb{N}.
\end{alignat}


A loss function, attempting to maximise $\sum_{i=1}^{N}g_ip_i$, \ie the dot product between the predictions and the gold standard, makes sense intuitively as well. We essentially dictate the model to raise the prediction values as large as possible when dealing with large values in the gold standard sequence, and diminish the predictions to as small as possible corresponding to the smaller values in the gold standard sequence. The $MSE$ metric as a sub-component of the derived loss function, too, attempts to achieve this. However, the inner workings differ slightly. The $MSE$ component drives the weights of the model to readjust by the amount that is proportional to the error seen, through back-propagation. The dot product readjusts the weights by an amount that is proportional to the gold standard itself. 
From another perspective, we effectively `weigh' the individual errors by the corresponding 
desired outputs.

\subsection{Role of the constants $\alpha, \beta, \alpha_j,  \beta_j,\varepsilon_j,\gamma$ in \Cref{eqncost1,eqncost2,eqncost3,eqncost4,eqncost5,eqncost6}}
While it is possible to use the deviation from
the maximally achievable $\rho_c$ (\ie $1-\rho_c$) as the loss function  \citep{Trigeorgis16-AFE,Schmitt19-CER,Pandit20-ISI}, the joint variability of the prediction and the gold standard can also be simultaneously improved using
$\frac{MSE}{\sigma_{XY}}$ as the loss function directly (cf. \Cref{eqn_cccmsemap}), \ie using \Cref{eqncost1} as the loss function. $\rho_c$ maximisation could then be expected through simultaneous $MSE$ minimisation with $\sigma_{XY}$ maximisation. However, one drawback of $\frac{MSE}{\sigma_{XY}}$  as the loss function has been witnessed \citep{Pandit19-IKH}; \ie the neural network may `cheat' by simply making $\sigma_{XY}$ more and more negative, without attempting $MSE$ minimisation. To avoid this, one can alternately use $\left|\frac{MSE}{\sigma_{XY}}\right|^\gamma,\gamma>0$, \ie \Cref{eqncost2} as the loss function. While with $\left|\frac{MSE}{\sigma_{XY}}\right|^\gamma$ as the objective function, $\sigma_{XY}$ itself is still not guaranteed to be non-negative, the network now needs to maximise $|\sigma_{XY}|$ (instead of minimising $\sigma_{XY}$), while simultaneously minimising $MSE$.
In other words, the key difference between the two functions is that, there is no lower bound for $\frac{MSE}{\sigma_{XY}}$. 
Thus, the ability of the network to `cheat' gets rather restricted when using $\left|\frac{MSE}{\sigma_{XY}}\right|^\gamma$, since a highly negative $\sigma_{XY}$ (\ie a highly negative correlation) can only be obtained with high magnitude of errors,
effectively resulting in a high $MSE$, thus a higher loss. This loss function too has been also proven to be successful in practice \citep{Pandit19-IKH}. 

Maximisation of $\rho_c$
may also be encouraged by using a difference function such as the one in \Cref{eqncost4} as the loss function, where $\alpha$ denotes relative importance assigned to maximisation of $\sigma_{XY}$ compared to minimisation of $MSE$. The dot product $g_jp_j$ can be raised to an odd integer exponent $2\beta+1$ to retard or to expedite $\sigma_{XY}$ minimisation process (cf. \Cref{eqncost5}). Finally, the individual sample-level loss components can be emphasised using scalars $\alpha_j,\beta_j,\varepsilon_j$ (cf. \Cref{eqncost3,eqncost6}).

\section{Concluding remarks}
\label{sec_concl}
	We derive the missing, yet fundamentally crucial many-to-many mapping existing between $\rho_c$ and $MSE$  -- both arguably the most popular utility functions in use today.
%
	We discover the conditions necessary achieving the minimum and the maximum possible $\rho_c$ at any given $MSE$, 
%
	and likewise, for any
	given fixed $L_p$ norm for any real-valued $p$ greater than 0, through $MSE$ optimisation. 
	As the research community has often witnessed;
while the two processes -- namely the error $L_p$ norm minimisation and the $\rho_c$ maximisation -- 
are elusively directed at the same goal, we conclusively prove that the two are neither necessarily identical, 
nor necessarily convergent 
-- except at the very extremity, where the ideal set of zero-valued errors translates to a perfect identity relationship, \ie $\rho_c=1$. 
In other words, we prove mathematically the reasons for the witnessed inconsistency; the  observation often reported in the literature lately \citep{Trigeorgis16-AFE,Pandit19-IKH,atmaja2020evaluation}.
	We generalise the $\rho_c$ optimisation strategy to any given error-$L_p$ norm, for any $p$ that is even and not necessarily $p=2$.

	While we establish that not just $L_k$-norm value, but rather the \emph{distribution} of the errors making $L_k$ dictates the $\rho_c$ metric performance, we prove further that 
	even with a full knowledge of  the
	set of error values (\ie the error distribution), 
	the $\rho_c$  metric still can not be conclusively computed. 
	The order (\ie the correspondence of the error values with respect to the ground truth) also governs $\rho_c$ metric. We establish the conditions for which $\rho_c$ is maximised and minimised when given a fixed set of errors.
	There exist two different notions of what the `error' actually means (\ie whether the prediction minus the gold standard, or the other way around), we derive the $\rho_c$ optimisation formulations for both these cases.

	Keeping the deep learning models in perspective 
	-- that are capable of mapping even complex and non-linear input-to-output relationships --
	we propose a family of new loss functions, some of which have been tested and have been proven to work. With these loss functions, the models can be aimed to maximise $\rho_c$, with simultaneous minimisation of the errors and maximisation of the joint variability of the  prediction with respect to the desired output. 
	%
%
	The proposed loss functions consist of two components. First component is the classical loss function $MSE$ that reduces the difference between the prediction and the gold standard by an amount that is proportional to the error itself. Our newly introduced dot product or covariance-based component tunes 
	the parameters of the model 
	by an amount that is proportional to the value of the gold standard itself, through backpropagation.  
%
	Through the rigorous derivation of the formula for the many-to-many mapping that exists between $MSE$ and $\rho_c$, we also propose a rather more elegant loss function, which is simply the positive power of the absolute value of the ratio of $MSE$ to $\sigma_{XY}$, \ie $\left|\frac{MSE}{\sigma_{XY}}\right|^\gamma,\gamma>0$.

While we have derived the condition for $\rho_c$ optimisation for a given $L_p$ (cf. \Cref{eqnsolvethispoly}), the investigation of properties of span of $\{L_p,\rho_c\}$ in the $(L_p,\rho_c)$ and in the $(L_2,\rho_c)$ space ($p \neq 2$) remains an unsolved problem; a possible future research direction.





\appendix

\section{Proof of \Cref{thm42}}
\label{sec_app0_rhomin}
\begin{proof}
	\noindent To minimise $\rho_c$,
	according to \Cref{eqn_cccmsemap,eqncovmax,yzdef},
	we need to 
	\begin{align}
	\text{maximise: }
	f(\{d_i\})
	= -N\sigma_{XY}
	=
	-\sum_{i=1}^{N}{y_{z_i}}^2
	-\sum_{i=1}^{N}d_i{y_{z_i}},
	\quad	\quad
	\text{subject to: }
	g(\{d_i\})
	= N\cdot MSE-\sum_{i=1}^{N}d_i^2=0.\nonumber
	\end{align}
	Auxiliary Lagrange function	${\mathcal {L}}(d_1,d_2,\cdots,d_N,\lambda )=f(d_1,d_2,\cdots,d_N)-\lambda \cdot g(d_1,d_2,\cdots,d_N)$ is given by:
	\begin{align}
	\nonumber
	\mathcal {L}&=-\sum_{i=1}^{N}{y_{z_i}}^2
	-\sum_{i=1}^{N}d_i{y_{z_i}}
	-\lambda \left(N\cdot MSE-\sum_{i=1}^{N}d_i^2\right).\\
	\therefore\nabla_{d_1,d_2,\cdots,d_N,\lambda}
	{\mathcal {L}}
	&=0 
	\Leftrightarrow 
	{\begin{cases}
		{-y_{z_i}}
		+2\lambda d_i=0 \quad \quad \forall  i \in \mathbb{N} : i \in [1,N],
		\\
		N\cdot MSE-\sum_{i=1}^{N}d_i^2=0.
		\end{cases}}\label{lagrangecondsn}
		\\
	%
%
	\therefore
	d_i=\mp\sqrt{\frac{N\cdot MSE}{\sum_{j=1}^{N}{y_{z_i}}^2}}\cdot{y_{z_j}}
	=\mp\sqrt{\frac{MSE}{\sigma_G^2}}\cdot{y_{z_j}}
	\text{ $\because$
	\cref{lagrangecondsn,lagrangecondsp} are identical $\implies$
	\cref{MSEDistr}.}
	\span\span
	\label{MSEDistrN}\\
\text{
	$N\sigma_{XY}$ is minimised when  $d_i$ and ${y_{z_j}}$ have opposite signs, and when
}\span\nonumber\\ 
	\nonumber
		d_i&=-\left|\sqrt{\frac{N\cdot MSE}{\sum_{j=1}^{N}{y_{z_i}}^2}}\right|\cdot{y_{z_j}}
		=-\left|\sqrt{\frac{MSE}{\sigma_G^2}}\right|\cdot{y_{z_j}}
\hspace{1cm} \because \text{ \Cref{eqncovmax,MSEDistrN}}.
\\
	\nonumber
	\therefore
	\sigma_{{XY}_{min}}
	&=\frac{1}{N}\left(\sum_{i=1}^{N}{y_{z_i}}^2\left(1-\sqrt{\frac{MSE}{\sigma_G^2}}\right)\right)
	=\sigma_G^2-{\sqrt{\sigma_G^2\cdot MSE}}
	\quad
	\text{ $\because$ \Cref{eqncovmax}}.\\
	%
	\nonumber
	\therefore
	\rho_{{c}_{min}}
	&=\frac{2\cdot\left(1-{\sqrt{\frac{MSE}{\sigma_G^2}}}\right)}{\frac{MSE}{\sigma_G^2}+2\left(1-{\sqrt{\frac{MSE}{\sigma_G^2}}}\right)}
	\implies
	\hspace{0.2cm}
		\rho_{{c}_{min}}
		=\frac{2\left(1-{\sqrt{\frac{MSE}{\sigma_G^2}}}\right)}{1+\left(1-{\sqrt{\frac{MSE}{\sigma_G^2}}}\right)^2}.
	\end{align}
	\vspace{-1cm}
\end{proof}
\vspace{-1cm}
\section{Proof of \Cref{eqnl2rellek}}
\label{sec_app01_kl2}
\begin{proof}
	H\"older's inequality (cf. \Cref{sec_app3}) leads to the following famous identity for $0<r<p$:
	\vspace{-12pt}
	\begin{align}
	\nonumber
	&
	L_p && \leq  L_r  && \leq  N^{\frac{p-r}{pr}} L_p .
	\\
		\nonumber
		\therefore\hspace{3pt}
		&L_2 &&\leq L_k &&\leq N^{\frac{2-k}{2k}} \cdot L_2, \\
		\nonumber
		\implies\hspace{3pt}
		&MSE^{\frac{1}{2}} &&\leq N^{\left(\frac{1}{k}-\frac{1}{2}\right)} \cdot MkE^{\frac{1}{k}} &&\leq N^{\frac{2-k}{2k}} \cdot MSE^{\frac{1}{2}},\\
		\nonumber
		\text{\ie } \hspace{3pt}
		&
		MkE^{\frac{1}{k}} 
		&& \leq MSE^{\frac{1}{2}} 
		&& \leq N^{\frac{2-k}{2k}} \cdot MkE^{\frac{1}{k}}. 
	\vspace{-1.5cm}
	\end{align}
\end{proof}

\vspace{-1cm}
\section{\Cref{fig_maxminrhoLp}-equivalent for $k<2$ (\eg for a given $MAE$, \ie for $k=1$)}
\label{sec_app_mae}
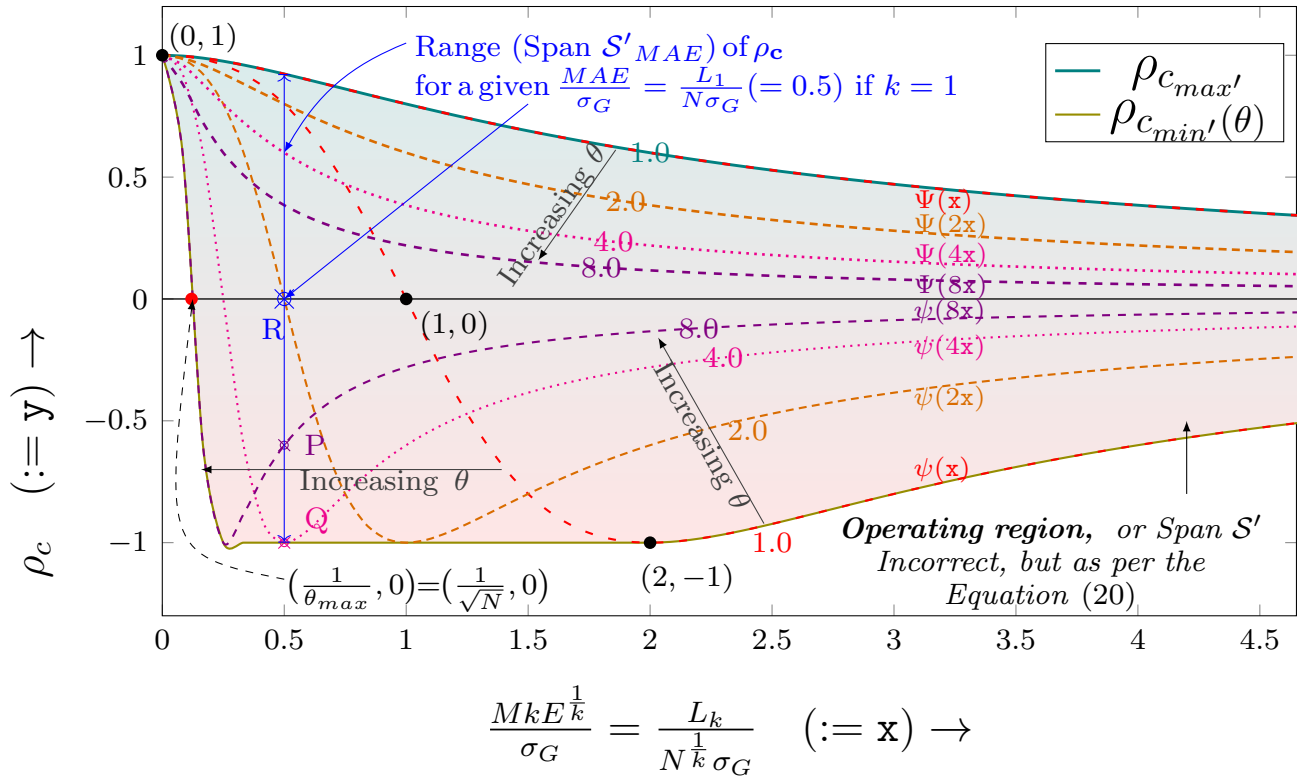
\begin{figure}[!hb]
	\centering
	\begin{tikzpicture}
	[every node/.style={scale=1.4},
	declare function={
		func(\x)= (\x <= 0.25) * 2*((1-8*x))/(1+(1-8*x)^2)   +
		and(\x > 0.25, \x < 2) * (-1)     +
		(\x >= 2) * (2*(1-x)/(1+(1-x)^2))
		;
	}]
	\begin{axis}[
	scale=2.2,
	domain=0:50,
	samples=601,
	smooth,
	no markers,
	xlabel={$
		\frac{MkE^{\frac{1}{k}}}{{\sigma_G}} 
		=\frac{L_k}{N^{\frac{1}{k}}\sigma_G}
		\quad
		(:= \mathtt{x})
		\rightarrow$   },
	ylabel={$
		\rho_c 
		\quad
		(:= \mathtt{y})
		\rightarrow$},
	xmin=0,xmax=4.65,
	ymin=-1.3,ymax=1.2,
	unit vector ratio*=0.8 0.8,
	x label style={below,font=\large},
	y label style={left,font=\large},
	legend style={
		at={(axis cs: 4.6,1.05)},
		anchor=north east, font=\fontsize{14}{5}\selectfont}, 
	]
	\addplot 
	+[name path=rhomax, line width=1.1pt,teal] 
	{2*(1+x)/(1+(1+x)^2)}; 
	\label{graphkmaxA}
	\addplot +[name path=rhominall,olive,thick, solid] {func(x)};
	\addplot 
	+[name path=rhomin1, line width=1.1pt, red, thick, loosely dashed] 
	{2*(1-x)/(1+(1-x)^2)};
	\label{graph1kA},
	\addplot 
	+[name path=rhomin2, line width=0.8pt, black!15!orange, densely dashed] 
	{2*(1-2*x)/(1+(1-2*x)^2)};
	\label{graph2kA},
	\addplot 
	+[name path=rhomin5, line width=0.8pt, magenta, dotted] 
	{2*(1-4*x)/(1+(1-4*x)^2)};
	\label{graph4kA},
	\addplot 
	+[name path=rhomin8, line width=0.8pt, violet, dashed] 
	{2*(1-8*x)/(1+(1-8*x)^2)};
	\label{graph8kA},	
	\addplot[shade, top color=teal!15, bottom color=red!10] fill between[of=rhomax and rhominall];
	\addplot 
	+[name path=rhomin2, line width=1pt, red, loosely dashed] 
	{2*(1+x)/(1+(1+x)^2)};
	\label{graph1kmaxe},
	\addplot 
	+[name path=rhomin2, line width=1pt, black!15!orange, densely dashed] 
	{2*(1+2*x)/(1+(1+2*x)^2)};
	\label{graph2kmaxe},
	\addplot 
	+[name path=rhomin2, line width=1pt, magenta, dotted] 
	{2*(1+4*x)/(1+(1+4*x)^2)};
	\label{graph5kmaxe},
	\addplot 
	+[name path=rhomin2, line width=1pt, violet, dashed] 
	{2*(1+8*x)/(1+(1+8*x)^2)};
	\label{graph10kmaxe},
	\node (E) at (axis cs: 1.9, 0.67) {}; 
	\node (F) at (axis cs: 1.5, 0.1) {}; 
	\path[-{latex[scale=3.0]}, draw=darkgray]
	(E) edge node[sloped, anchor=left, above, xshift=-6pt,yshift=-4pt, darkgray] {Increasing 
		$\theta$
	}  (F);
	\node[
	label={[violet]center
		:8.0}
	] at 
	(axis cs: 1.8,0.1293) {};   
	\node[
	label={[magenta]center
		:4.0}
	] at 
	(axis cs: 1.85,0.235) {};   
	\node[
	label={[black!15!orange]center:2.0
	}
	] at 
	(axis cs: 1.9,0.4) {};   
	\node[
	label={[teal]center
		:1.0}
	] at 
	(axis cs: 2,0.6) {};   
	\addplot [black, solid, line width=0.5]
	{0};
	\pgfplotsset{
		after end axis/.code={
			\node[fill,circle,inner sep=1.2pt,label={[xshift=11pt, yshift=-5pt]$(0,1)$}] at (axis cs: 0,1) {};
		}
	}
	\node[fill,circle,inner sep=1.2pt,label={[xshift=10pt]below:$(2,-1)$}] at (axis cs: 2,-1) {};
	\node[fill,circle,inner sep=1.2pt,label={[xshift=13pt, yshift=-17pt]$(1,0)$}] at (axis cs: 1,0) {};
	\node[fill,red, circle,inner sep=1.2pt,label={[xshift=9pt, yshift=-3pt]above:
	}] at (axis cs: 0.12	,0) {};
	\node[label={
		[xshift=0pt, yshift=0pt,rotate=0,
		align=left, black]
		$\big(\frac{1}{\theta_{max}},0\big)\hspace{-3pt}=\hspace{-3pt}
		\big(\frac{1}{\sqrt{N}},0\big)$
	}] 
	at (axis cs: 1.05,-1.4) {};
	\draw[-{latex[scale=5.0]},black,dashed] 	
	(axis cs: 0.5,-1.15)	
	.. controls 	
	(axis cs: 0,-1.05)	.. 
	(axis cs: 0.125,0); 
	\node[fill,cross=2pt,violet,inner sep=1.2pt,label={[xshift=0pt, yshift=0pt,violet]right:P}] at (axis cs: 0.5,-0.6) {};
	\node[fill,cross=2pt,magenta,inner sep=1.2pt,label={[xshift=0pt, yshift=6pt,magenta]right:Q}] at (axis cs: 0.5,-1) {};
	\node[inner sep=1.2pt,label={[xshift=-3pt, yshift=0pt,blue]below:R}] at (axis cs: 0.5,0) {};
	\node[inner sep=1.2pt,label={[xshift=0pt, yshift=0pt,black!15!orange,font=\fontsize{7pt}{6pt}\selectfont,]right:$\Psi(2\mathtt{x})$}] at (axis cs: 3,0.3) {};
	\node[inner sep=1.2pt,label={[xshift=0pt, yshift=0pt,black!15!orange,font=\fontsize{7pt}{6pt}\selectfont, ]right:$\psi(2\mathtt{x})$}] at (axis cs: 3,-0.41) {};
	\node[inner sep=1.2pt,label={[xshift=0pt, yshift=0pt,magenta,font=\fontsize{7pt}{6pt}\selectfont,]right:$\Psi(4\mathtt{x})$}] at (axis cs: 3,+0.18) {};
	\node[inner sep=1.2pt,label={[xshift=0pt, yshift=0pt,magenta,font=\fontsize{7pt}{6pt}\selectfont, ]right:$\psi(4\mathtt{x})$}] at (axis cs: 3,-0.2) {};	
	\node[inner sep=1.2pt,label={[xshift=0pt, yshift=0pt,violet,font=\fontsize{7pt}{6pt}\selectfont,]right:$\Psi(8\mathtt{x})$}] at (axis cs: 3,+0.05) {};
	\node[inner sep=1.2pt,label={[xshift=0pt, yshift=0pt,violet,font=\fontsize{7pt}{6pt}\selectfont,]right:$\psi(8\mathtt{x})$}] at (axis cs: 3,-0.05) {};	
	\node[inner sep=1.2pt,label={[xshift=0pt, yshift=0pt,red,font=\fontsize{7pt}{6pt}\selectfont,]right:$\Psi(\mathtt{x})$}] at (axis cs: 3,0.4) {};
	\node[inner sep=1.2pt,label={[xshift=0pt, yshift=0pt,red,font=\fontsize{7pt}{6pt}\selectfont,]right:$\psi(\mathtt{x})$}] at (axis cs: 3,-0.7) {};
	\draw[violet] 	(axis cs: 0.5,-0.6) circle (1.5pt);
	\draw[magenta] 	(axis cs: 0.5,-1) circle (1.5pt);
	\node (A) at 
	(axis cs: 2.5, -0.98) {}; 
	\node (B) at 
	(axis cs: 2.0, -0.1) {}; 
	\path[-{latex[scale=3.0]}, draw=darkgray]
	(A) edge node[sloped, anchor=left, below, xshift=0pt,yshift=4pt,darkgray] {Increasing 
		$\theta$	}  (B);
	\node (C) at (axis cs: 1.45, -0.7) {};
	\node (D) at (axis cs: 0.1,-0.7) {};
	\path[-{latex[scale=3.0]}, draw=darkgray]
	(C) edge node[sloped, anchor=right, above, xshift=9pt,yshift=-11pt, darkgray] {Increasing\,
		$\theta$	}  (D);
	\node[
	label={[violet]center
		:8.0}
	] at 
	(axis cs: 2.2,-0.12) {}; 
	\node[
	label={[magenta]center
		:4.0}
	] at 
	(axis cs: 2.3,-0.24) {}; 
	\node[
	label={[black!15!orange]center
		:2.0}
	] at 
	(axis cs: 2.4,-0.54) {}; 
	\node[
	label={[red]center
		:1.0}
	] at 
	(axis cs: 2.5,-1) {}; 
	\node[label=
	{
		[text width=5cm, align=center]
		\textit{
			\textbf{
				Operating region,
			}
			or Span $\mathcal{S'}$\\[0mm]
			Incorrect, but as per the\\[0mm] \Cref{eqn_generalmaxmin}
		}
	}
	] 
	at 
	(axis cs: 3.6,-1.4)
	{};  
	\draw[-{latex[scale=5.0]}] 
	(axis cs: 4.2, -0.8)--(axis cs: 4.2,-0.5); 
	%
	\node[label={
		[xshift=0pt, yshift=0pt,rotate=0,
		align=left, blue]
		Range (Span $\mathcal{S'}_{MAE}$)\,of\,$\mathbf{\rho_c}$\,\\[0mm]for\,a\,given\,$\frac{MAE}{\sigma_{G}}=\frac{L_1}{N\sigma_{G}}
		(=0.5)$ if $k=1$ 
	}] 
	at (axis cs: 2.15,0.65) {}; 
	\node[fill,cross=3pt,blue,inner sep=1.2pt] at 
	(axis cs: 0.5,0) {};
	\draw[blue] (axis cs: 0.5,0) circle (2.5pt);
	\draw[blue] 
	[|<->|] 
	(axis cs: 0.5,-1) -- (axis cs: 0.5,0.923);
	\draw[-{latex[scale=3.0]},blue] 	
	(axis cs: 1.5,0.8) to [bend right=0] (axis cs: 0.5,0);
	\draw[-{latex[scale=3.0]},blue] 	
	(axis cs: 1,1.05) to [bend right=20] (axis cs: 0.5,0.6);	
	\legend{$\rho_{c_{max'}}$ ,$\rho_{c_{min'}(\theta)}$ }
	\end{axis}
	\end{tikzpicture}
	\caption{Range of $\rho_c$ as per \Cref{eqn_generalmaxmin}, for a given $MAE$ with respect to the gold standard consisting of 
		$N$ samples, standard deviation = $\sigma_G^2$,
		$1\leq\theta\leq \sqrt{N}$.
		Note the increase in the operating region $\mathcal{S'}$ with a negative $\rho_c$ due to the allowed variation in $\rho_{c_{min'}}=\psi(\theta\cdot\mathtt{x})$ with increasing $\theta$. 
		While each point in $\mathcal{S'}$ maps to a unique $\left\{MAE,\rho_c\right\}$ pair, each maps to infinitely many $MSE$ values -- except those on the $\rho_{{c}_{max'}}$ $\rho_{{c}_{min'}}$ curves which map to only one possible $MSE$. Similar to \Cref{fig_maxminrho}, $MAE_1\leq MAE_2$ does not always translate to $\rho_{c_1}\leq \rho_{c_2}$.
		For the sake of completeness, we note further that the true $\rho_{{c}_{max}}$ and consequently, the true span $\mathcal{S}$ is less than the one shown above (cf. \Cref{sec_truespan}). Note that the x-coordinate is $\frac{L_K}{N^{\frac{1}{k}}\sigma_{G}}$ and consequently, the observations are true for any $0<k\leq2$, not necessarily for $k=1$  alone.\vspace{-12pt}
		\label{fig_maxminrhomae}}
\end{figure}

\section{Formulation 2: Replacing $\lowercase{(y_i)}$ with $\lowercase{(x_i-d_i)}$}
\label{sec_app1}
\begin{align}
\nonumber
\text{ From \Cref{eqn_cccmsemap}, we have: }
\mathbf{\rho_c}
&= \left(1+\frac{N\cdot MSE}{2\sum_{i=1}^{N}{(x_i-\mu_X)(x_i-\mu_X-d_i+\mu_D)}}\right)^{-1}.\\
\nonumber\text{Note that } \sum_{i=1}^{N}\mu_D (x_{i}-\mu_X)&=\mu_D\sum_{i=1}^{N} (X_{i}-\mu_X)=\mu_D(N\mu_X-N\mu_X)=0.
\\
\nonumber
\therefore \mathbf{\rho_c}
&= \left(1+\frac{N\cdot MSE}{2\sum_{i=1}^{N}(x_i-\mu_X)^2
	- 2\sum_{i=1}^{N}x_id_i
	+2\sum_{i=1}^{N}\mu_Xd_i}\right)^{-1},\\
\nonumber
&= \left(1+\frac{N\cdot MSE}{2N{\sigma_X}^2
	- 2\sum_{i=1}^{N}\mathbf{x_id_i}
	+2N\mu_X\mu_D}\right)^{-1}.\\
\nonumber
\because \left(1+\frac{a}{b}\right)^{-1}=\left(1-\frac{a}{a+b}\right)
\implies
\nonumber
\quad
	\mathbf{\rho_c}
	&= \Bigg(1-\frac{N\cdot MSE}{2N({\sigma_X}^2+\mu_X\mu_D)
		- 2\sum_{i=1}^{N}\mathbf{x_id_i}
		+N\cdot MSE}\Bigg).
\end{align}

\section{The Rearrangement Inequality}
\label{sec_app2}
The rearrangement inequality is a theorem concerning the rearrangements of two sets, to maximise and minimise the sum of element-wise products \citep{hardy1988inequalities}. 

Denoting the two sets $(a)=\{a_1,a_2,\cdots a_n\}$ and $(b)=\{b_1,b_2,\cdots b_n\}$, let $(\bar{a})$ and $(\bar{b})$ be permutations of $(a)$ and $(b)$ sorted respectively, such that
\begin{align}
\nonumber
&\bar{a}_1 &&\leq \bar{a}_2 &&\leq \cdots &&\leq \bar{a}_n,\\
\nonumber
\text{and } \quad &\bar{b}_1 &&\leq \bar{b}_2 &&\leq \cdots &&\leq \bar{b}_n.
\end{align}

The rearrangement inequality states that 
\begin{align}
\nonumber
\tikzmark{beg-rearr}
&\hspace{-12pt}\sum_{j=1}^n \bar{a}_{j}\bar{b}_{n+1-j}
&&\hspace{-12pt}\leq
\sum_{j=1}^n a_{j}b_{j}
&&\hspace{-12pt}\leq
\sum_{j=1}^n \bar{a}_{j}\bar{b}_{j}.
\tikzmark{end-rearr}
\\
\text{More explicitly, }\nonumber\\
\nonumber
&\hspace{-12pt}\bar{a}_{n}\bar{b}_{1}+\cdots +\bar{a}_{1}\bar{b}_{n}
&&\hspace{-12pt}\leq \bar{a}_{\sigma (1)}\bar{b}_{1}+\cdots +\bar{a}_{\sigma (n)}\bar{b}_{n}
&&\hspace{-12pt}\leq \bar{a}_{1}\bar{b}_{1}+\cdots +\bar{a}_{n}\bar{b}_{n},\\
\rlap{for every permutation $\{\bar{a}_{\sigma (1)},\bar{a}_{\sigma (2)},\cdots ,\bar{a}_{\sigma (n)}\}$ of  $\{\bar{a}_1, \cdots,\bar{a}_n\}$.}\nonumber
\end{align}
%
For the unsorted ordered sets $(a)$ and $(b)$, the two sets are said to be `similarly ordered' or `monotonic in the same sense' if $(a_{\mu}-a_{\nu})(a_{\mu}-a_{\nu})\geq0$ for all $\mu,\nu$, and `oppositely ordered' or `monotonic in the opposite sense' if the inequality is always reversed.
With this notion of `similar' and `opposite' ordering, the maximum summation corresponds to the similar ordering of $(a)$ and $(b)$. The minimum corresponds to the opposite ordering of $(a)$ and $(b)$.

\section{Relationship between p-norms}
\label{sec_app3}
According to H\"older's inequality,
\begin{alignat*}{5}
\sum\limits_{i=1}^n |a_i||b_i|\leq
\left(\sum\limits_{i=1}^n|a_i|^t\right)^{\frac{1}{t}}\left(\sum\limits_{i=1}^n|b_i|^{\frac{t}{t-1}}\right)^{1-\frac{1}{t}}.
\end{alignat*}
Let $|a_i|=|x_i|^r$, $|b_i|=1$
and $t=p/r>1$.
\begin{align*}
&
&\sum\limits_{i=1}^n |x_i|^r& \leq
\left(\sum\limits_{i=1}^n (|x_i|^r)^{\frac{p}{r}}\right)^{\frac{r}{p}}
\Bigg(\sum\limits_{i=1}^n 1^{\frac{p}{p-r}}\Bigg)^{1-\frac{r}{p}} 
&& =
\Bigg(\sum\limits_{i=1}^n |x_i|^p\Bigg)^{\frac{r}{p}} n^{1-\frac{r}{p}} .\\
\therefore 
\Vert x\Vert_r & =
&\Bigg(\sum\limits_{i=1}^n |x_i|^r\Bigg)^{\frac{1}{r}} & \leq
\Bigg(\left(\sum\limits_{i=1}^n |x_i|^p\right)^{\frac{r}{p}} n^{1-\frac{r}{p}}\Bigg)^{\frac{1}{r}}  
&&=
\Bigg(\sum\limits_{i=1}^n |x_i|^p\Bigg)^{\frac{1}{p}} n^{\frac{1}{r}-\frac{1}{p}}
\quad
=
n^{1/r-1/p}\Vert x\Vert_p.
\\
&
&&\hspace{2cm}\text{Also, when $r < p$, }\quad
\Vert x\Vert_p  &&\leq \quad \Vert x\Vert_r. \\
&	
&&\hspace{2cm}
\therefore 
\hspace{2.6cm}
\quad
	\Vert x\Vert_p  && \leq \quad 
	\Vert x\Vert_r
	\quad  
	\leq \quad 
	n^{1/r-1/p}\Vert x\Vert_p.
\end{align*}

\section{Chebyshev's Sum Inequality}
\label{sec_app4_chev}
The Chebyshev's sum inequality \citep{hardy1988inequalities} states that
\begin{alignat*}{5}
\tikzmark{beg-che1}
&&&\nonumber\text{if }  
&& a_{1}\geq a_{2}&&\geq \cdots \geq a_{n} 
&&\quad\text{and }\quad
{
	b_{1}\geq b_{2}\geq \cdots \geq b_{n},}\\
%
&&&\text{then }\quad
&& \quad &&{1 \over n}\sum _{k=1}^{n}a_{k}\cdot b_{k}
&&\quad\geq \quad
\left({1 \over n}\sum _{k=1}^{n}a_{k}\right)\left({1 \over n}\sum _{k=1}^{n}b_{k}\right), 
\tikzmark{end-che1}
\\
\tikzmark{beg-che2}
&\text{and }\quad&&\text{if }  
&&a_{1}\leq a_{2}&&\leq \cdots \leq a_{n}
&&\quad\text{and }  \quad
b_{1}\geq b_{2}\geq \cdots \geq b_{n},\nonumber
\\
%
&&&\text{then }\quad
&&\quad && {1 \over n}\sum _{k=1}^{n}a_{k}b_{k}
&&\quad\leq \quad
\left({1 \over n}\sum _{k=1}^{n}a_{k}\right)\left({1 \over n}\sum _{k=1}^{n}b_{k}\right).
\tikzmark{end-che2}
\end{alignat*}

\section{Comparison between 
	\lowercase{$\rho_{c_{max_1}}$} AND \lowercase{$\rho_{c_{max_2}}$} 
}
\label{sec_app5_comparerho12}
\vspace{-12pt}
\begin{align}
\nonumber
\text{Suppose}& 
\hspace{3cm}
\rho_{c_{max_1}}
&&\geq
\rho_{c_{max_2}},
\\
\nonumber
\span	
\text{true if }
\hspace{20pt}
2\sum _{i=1}^{N}{\bar{g_{i}}\overset{\mathit{1}}{e}_i}-2N\mu_G\mu_E
&&\geq
-2\sum _{i=1}^{N}{\bar{g_{i}}\overset{\mathit{2}}{e}_i}+2N\mu_G\mu_E,\\
\nonumber
\span
\text{true if }	
\hspace{34pt}
\sum _{i=1}^{N}{\bar{g_{i}}\overset{\mathit{1}}{e}_i}+\sum _{i=1}^{N}{\bar{g_{i}}\overset{\mathit{2}}{e}_i}
&&\geq
2N\mu_G\mu_E,\\
\nonumber
\span
\text{true if}
\hspace{9pt}
\frac{1}{N}\sum _{i=1}^{N}{\bar{g_{i}}\overset{\mathit{1}}{e}_i}+
\frac{1}{N}\sum _{i=1}^{N}{\bar{g_{i}}\overset{\mathit{2}}{e}_i}
&&\geq
\left({1 \over N}\sum _{k=1}^{N}\bar{g_{i}}\right)\left({1 \over N}\sum _{k=1}^{N}
\left(\overset{\mathit{1}}{e}_i+
\overset{\mathit{2}}{e}_i\right)\right),\\
\label{eqnNoconcl2}
\span
\text{true if}
\hspace{12pt}
\frac{1}{N}\sum _{i=1}^{N}{\bar{g_{i}}
	\Big(
	\overset{\mathit{1}}{e}_i+
	\overset{\mathit{1}}{e}_{N-i+1}\Big)}
&&\geq
\Bigg({1 \over N}\sum _{k=1}^{N}\bar{g_{i}}\Bigg)\Bigg({1 \over N}\sum _{k=1}^{N}
\Big(\overset{\mathit{1}}{e}_i+
\overset{\mathit{1}}{e}_{N-i+1}\Big)\Bigg).
\end{align}

We note that
$\frac{1}{N}\sum _{i=1}^{N}{\bar{g_{i}}\overset{\mathit{1}}{e}_i}
\geq
\left({1 \over N}\sum _{k=1}^{N}\bar{g_{i}}\right)\left({1 \over N}\sum _{k=1}^{N}
\overset{\mathit{1}}{e}_i\right)$, while $\left({1 \over N}\sum _{k=1}^{N}\bar{g_{i}}\right) \cdot \left({1 \over N}\sum _{k=1}^{N}
\overset{\mathit{2}}{e}_i\right)\geq \frac{1}{N}\sum _{i=1}^{N}{\bar{g_{i}}\overset{\mathit{2}}{e}_i}$ according to the Chebyshev's sum inequality (cf. \Cref{sec_app4_chev}). Also, $\left(\overset{\mathit{1}}{e}_i+\overset{\mathit{1}}{e}_{N-i+1}\right)$ 
is symmetric with respect to $i=\frac{N+1}{2}$ (cf.  \Cref{eqn_symmetric_coef}).

Because the sequence $\smash{(\overset{\mathit{1}}{e}_i+
	\overset{\mathit{1}}{e}_{N-i+1})}$ features both the similarly ordered and the oppositely ordered error components, no guarantees can be made in terms of veracity of \Cref{eqnNoconcl2} using any of the two equalities we have used so far, \ie the Chebyshev's sum inequality (\Cref{sec_app4_chev}) and the rearrangement inequality (\Cref{sec_app2}). 
One needs to compute both the sides of the \Cref{eqnNoconcl2} to determine which permutation of the errors results in the prediction with the higher $\rho_c$.

\section{Consistency checks / validations of the formulations}
\label{sec_app6_solvethisk}

In this section, we cross-check the derived formulations, whether they are consistent with the formulations derived elsewhere independently for $k=2$.

\subsection*{\Cref{eqnsolvethisk}:}
\vspace{-12pt}
\begin{align}
\nonumber
\therefore
0&=\mathbf{2 \sigma_{G}^2}\cdot
\left(0\right)
+\frac{\sum_{j=1}^{N}{ \mathbf{y_{z_j}}d_j}}{N}\cdot
\left(\frac{-d_i^2}{\mathbf{MSE}}\right)
+\mathbf{y_{z_i}}{d_i}
\hspace{6pt}
\because \text{ \Cref{eqnsolvethisk} and $MkE=MSE$ if $k=2$} . 
\\
\therefore
\mathbf{y_{z_i}}{d_i}&=\frac{d_i^2}{N\cdot \mathbf{MSE}}\sum_{j=1}^{N}{\mathbf{y_{z_j}}d_j}
\implies
\mathbf{y_{z_i}}= d_i  \frac{\sigma_{GD}}{\mathbf{MSE}} 
\hspace{6pt}
\forall i\in [1, N].
\label{eqnyidi}\\
\therefore
\sum_{i=1}^{N}\mathbf{y_{z_i}^2}&=\sum_{i=1}^{N}\mathbf{y_{z_i}}d_i\Bigg(\frac{\sigma_{GD}}{\mathbf{MSE}}\Bigg) \implies N\mathbf{\sigma_{G}^2} = N\frac{\sigma_{GD}^2}{\mathbf{MSE} }
\hspace{3pt}
\therefore
\sigma_{GD}=\pm\sqrt\frac{MSE}{\sigma_{G}^2}.
\label{eqnvalidk2}
\\
\nonumber
\span\text{Substituting $\sigma_{GD}$ in \Cref{eqnyidi} for a positive covariance: }\\
\nonumber
\therefore d_i&=\left|\sqrt\frac{MSE}{\sigma_{G}^2}\right|\cdot y_{z_i} 
\hspace{3pt}
\forall i\in [1, N],
\hspace{3pt} \text{ (identical to \Cref{eqn_rhocmax} from \Cref{sec_m2mm}).}
\end{align}
Likewise, for $\rho_c$ minimisation, use  $f=-\frac{\sigma_{XY}}{MSE}$ and $g=N\cdot MkE-\sum_{i=1}^{N} d_i^k$ results in \Cref{eqnlagrange_funcs}. This leads to identical constraints given by \Cref{eqnlag2m}, leading to the same solution \Cref{eqnsolvethisk}, and thus, \Cref{eqnvalidk2}. However, in this case, the ratio 
$\frac{y_{z_i}}{d_i}$ needs to negative for the solution to correspond to $\rho_{c_{min}}$, resulting in \Cref{eqn_rhocmin} from \Cref{sec_m2mm}. 

\subsection*{\Cref{eqnl2rel,eqnl2rellek}:}
 Substituting $k=2$ in  \Cref{eqnl2rel,eqnl2rellek}), gives us $\theta_{max}=1$. Consequently,
\Cref{fig_maxminrhoLp} gets transformed into \Cref{fig_maxminrho}, consistent with \Cref{sec_m2mm} findings.

\section{Resources in the interest of reproducibility}
The findings and the results presented are fully reproducible with an easy-to-use scripts and data made available at \urlstyle{rm}\url{https://github.com/vedhasua/mse_ccc_corollary/}. 
The provided script uses a sample sequence from a publicly available \emph{SEWA/AVEC'17} dataset. One may also use any other ordinal dataset as an input to the provided script, and witness the results consistent to the findings presented in the manuscript.

\vskip 0.2in
\bibliography{jmlr_ccc_mse_loss.bbl}

\begin{thebibliography}{70}
\providecommand{\natexlab}[1]{#1}
\providecommand{\url}[1]{\texttt{#1}}
\expandafter\ifx\csname urlstyle\endcsname\relax
  \providecommand{\doi}[1]{doi: #1}\else
  \providecommand{\doi}{doi: \begingroup \urlstyle{rm}\Url}\fi

\bibitem[Atkinson and Nevill(1998)]{atkinson1998statistical}
Greg Atkinson and Alan~M Nevill.
\newblock Statistical methods for assessing measurement error (reliability) in
  variables relevant to sports medicine.
\newblock \emph{Sports medicine}, 26\penalty0 (4):\penalty0 217--238, 1998.

\bibitem[Atmaja and Akagi(2020)]{atmaja2020evaluation}
Bagus~Tris Atmaja and Masato Akagi.
\newblock Evaluation of error and correlation-based loss functions for
  multitask learning dimensional speech emotion recognition.
\newblock \emph{arXiv preprint arXiv:2003.10724}, 2020.

\bibitem[Banerjee et~al.(1999)Banerjee, Capozzoli, McSweeney, and
  Sinha]{banerjee1999beyond}
Mousumi Banerjee, Michelle Capozzoli, Laura McSweeney, and Debajyoti Sinha.
\newblock Beyond kappa: A review of interrater agreement measures.
\newblock \emph{Canadian journal of statistics}, 27\penalty0 (1):\penalty0
  3--23, 1999.

\bibitem[Barnhart and Williamson(2001)]{barnhart2001modeling}
Huiman~X Barnhart and John~M Williamson.
\newblock {Modeling concordance correlation via GEE to evaluate
  reproducibility}.
\newblock \emph{Biometrics}, 57\penalty0 (3):\penalty0 931--940, 2001.

\bibitem[Barnhart et~al.(2002)Barnhart, Haber, and Song]{barnhart2002overall}
Huiman~X Barnhart, Michael Haber, and Jingli Song.
\newblock {Overall concordance correlation coefficient for evaluating agreement
  among multiple observers}.
\newblock \emph{Biometrics}, 58\penalty0 (4):\penalty0 1020--1027, 2002.

\bibitem[Bennett and Parrado-Hern{\'a}ndez(2006)]{bennett2006interplay}
Kristin~P Bennett and Emilio Parrado-Hern{\'a}ndez.
\newblock The interplay of optimization and machine learning research.
\newblock \emph{Journal of Machine Learning Research}, 7\penalty0
  (Jul):\penalty0 1265--1281, 2006.

\bibitem[Berger(2013)]{berger2013statistical}
James~O Berger.
\newblock \emph{Statistical Decision Theory and Bayesian Analysis}.
\newblock Springer Science \& Business Media, 2013.

\bibitem[Brockmeier et~al.(2017)Brockmeier, Mu, Ananiadou, and
  Goulermas]{brockmeier2017quantifying}
Austin~J Brockmeier, Tingting Mu, Sophia Ananiadou, and John~Y Goulermas.
\newblock Quantifying the informativeness of similarity measurements.
\newblock \emph{The Journal of Machine Learning Research}, 18\penalty0
  (1):\penalty0 2592--2652, 2017.

\bibitem[Carrasco and Jover(2003)]{carrasco2003estimating}
Josep~L Carrasco and Lluis Jover.
\newblock Estimating the generalized concordance correlation coefficient
  through variance components.
\newblock \emph{Biometrics}, 59\penalty0 (4):\penalty0 849--858, 2003.

\bibitem[Carrasco et~al.(2013)Carrasco, Phillips, Puig-Martinez, King, and
  Chinchilli]{carrasco2013estimation}
Josep~L Carrasco, Brenda~R Phillips, Josep Puig-Martinez, Tonya~S King, and
  Vernon~M Chinchilli.
\newblock {Estimation of the concordance correlation coefficient for repeated
  measures using SAS and R}.
\newblock \emph{{Computer Methods and Programs in Biomedicine}}, 109\penalty0
  (3):\penalty0 293--304, 2013.

\bibitem[Cohen(1960)]{cohen1960coefficient}
Jacob Cohen.
\newblock A coefficient of agreement for nominal scales.
\newblock \emph{{Educational and Psychological Measurement}}, 20\penalty0
  (1):\penalty0 37--46, 1960.

\bibitem[Collier and Dalalyan(2016)]{collier2016minimax}
Olivier Collier and Arnak~S Dalalyan.
\newblock Minimax rates in permutation estimation for feature matching.
\newblock \emph{The Journal of Machine Learning Research}, 17\penalty0
  (1):\penalty0 162--192, 2016.

\bibitem[Conroy et~al.(2003)Conroy, Py{\"o}r{\"a}l{\"a}, Fitzgerald, Sans,
  Menotti, De~Backer, De~Bacquer, Ducimetiere, Jousilahti, Keil,
  et~al.]{conroy2003estimation}
Ron{\'a}n~Michael Conroy, K~Py{\"o}r{\"a}l{\"a}, AP~el Fitzgerald, S~Sans,
  A~Menotti, Gui De~Backer, Dirk De~Bacquer, P~Ducimetiere, P~Jousilahti,
  U~Keil, et~al.
\newblock Estimation of ten-year risk of fatal cardiovascular disease in
  europe: the score project.
\newblock \emph{European heart journal}, 24\penalty0 (11):\penalty0 987--1003,
  2003.

\bibitem[Crawford et~al.(2007)Crawford, Kosinski, Lin, Williamson, and
  Barnhart]{crawford2007computer}
Sara~B Crawford, Andrzej~S Kosinski, Hung-Mo Lin, John~M Williamson, and
  Huiman~X Barnhart.
\newblock Computer programs for the concordance correlation coefficient.
\newblock \emph{Computer Methods and Programs in Biomedicine}, 88\penalty0
  (1):\penalty0 62--74, 2007.

\bibitem[Deyo et~al.(1991)Deyo, Diehr, and Patrick]{deyo1991reproducibility}
Richard~A Deyo, Paula Diehr, and Donald~L Patrick.
\newblock Reproducibility and responsiveness of health status measures
  statistics and strategies for evaluation.
\newblock \emph{Controlled clinical trials}, 12\penalty0 (4):\penalty0
  S142--S158, 1991.

\bibitem[Feinstein and Cicchetti(1990)]{feinstein1990high}
Alvan~R Feinstein and Domenic~V Cicchetti.
\newblock {High agreement but low kappa: I. The problems of two paradoxes}.
\newblock \emph{{Journal of Clinical Epidemiology}}, 43\penalty0 (6):\penalty0
  543--549, 1990.

\bibitem[Fisher(1925)]{fisher1925statistical}
Ronald~A Fisher.
\newblock \emph{Statistical methods for research workers}.
\newblock Edinburgh: Oliver and Boyd, 1925.

\bibitem[Fisher et~al.(1920)]{fisher1920012}
Ronald~Aylmer Fisher et~al.
\newblock {A Mathematical Examination of the Methods of Determining the
  Accuracy of an Observation by the Mean Error, and by the Mean Square Error}.
\newblock 1920.

\bibitem[Galton(1877{\natexlab{a}})]{galton1877typical}
Francis Galton.
\newblock Typical laws of heredity.
\newblock \emph{Nature}, 15:\penalty0 492--5, 512--4, 532--3,
  1877{\natexlab{a}}.

\bibitem[Galton(1877{\natexlab{b}})]{galton1877typicala}
Francis Galton.
\newblock Typical laws of heredity.
\newblock \emph{{Proceedings of the Royal Institution}}, 8:\penalty0 282--301,
  1877{\natexlab{b}}.

\bibitem[Galton(1892)]{galton1892finger}
Francis Galton.
\newblock \emph{Fingerprints}.
\newblock Macmillan and Company, 1892.

\bibitem[Gauss(1860)]{gauss1860briefwechsel}
Carl~Friedrich Gauss.
\newblock \emph{Briefwechsel zwischen CF Gauss und HC Schumacher}, volume~4.
\newblock G. Esch, 1860.

\bibitem[Hardy et~al.(1988)Hardy, Littlewood, and
  P{\'o}lya]{hardy1988inequalities}
Godfrey~Harold Hardy, John~Edensor Littlewood, and George P{\'o}lya.
\newblock \emph{Inequalities}.
\newblock {Cambridge University Press}, 1988.

\bibitem[Hern{\'a}ndez-Orallo et~al.(2012)Hern{\'a}ndez-Orallo, Flach, and
  Ferri]{hernandez2012unified}
Jos{\'e} Hern{\'a}ndez-Orallo, Peter Flach, and C{\`e}sar Ferri.
\newblock A unified view of performance metrics: translating threshold choice
  into expected classification loss.
\newblock \emph{The Journal of Machine Learning Research}, 13\penalty0
  (1):\penalty0 2813--2869, 2012.

\bibitem[Kendall(1938)]{kendall1938new}
Maurice~G Kendall.
\newblock A new measure of rank correlation.
\newblock \emph{Biometrika}, 30\penalty0 (1/2):\penalty0 81--93, 1938.

\bibitem[King and Chinchilli(2001{\natexlab{a}})]{king2001generalized}
Tonya~S King and Vernon~M Chinchilli.
\newblock A generalized concordance correlation coefficient for continuous and
  categorical data.
\newblock \emph{{Statistics in Medicine}}, 20\penalty0 (14):\penalty0
  2131--2147, 2001{\natexlab{a}}.

\bibitem[King and Chinchilli(2001{\natexlab{b}})]{king2001robust}
Tonya~S King and Vernon~M Chinchilli.
\newblock Robust estimators of the concordance correlation coefficient.
\newblock \emph{{Journal of Biopharmaceutical Statistics}}, 11\penalty0
  (3):\penalty0 83--105, 2001{\natexlab{b}}.

\bibitem[Koch(1982)]{koch2004intraclass}
Gary~G Koch.
\newblock Intraclass correlation coefficient.
\newblock \emph{{Encyclopedia of Statistical Sciences}}, 4:\penalty0 213, 1982.

\bibitem[Kossaifi et~al.(2019)Kossaifi, Walecki, Panagakis, Shen, Schmitt,
  Ringeval, Han, Pandit, Schuller, Star, Hajiyev, and Pantic]{Kossaifi19-SDA}
Jean Kossaifi, Robert Walecki, Yannis Panagakis, Jie Shen, Maximilian Schmitt,
  Fabien Ringeval, Jing Han, Vedhas Pandit, Bj\"orn Schuller, Kam Star, Elnar
  Hajiyev, and Maja Pantic.
\newblock {SEWA DB: A Rich Database for Audio-Visual Emotion and Sentiment
  Research in the Wild}.
\newblock \emph{IEEE Transactions on Pattern Analysis and Machine
  Intelligence}, 41, 2019.

\bibitem[Krippendorff(2013)]{krippendorff2013commentary}
Klaus Krippendorff.
\newblock {Commentary: A dissenting view on so-called paradoxes of reliability
  coefficients}.
\newblock \emph{{Annals of the International Communication Association}},
  36\penalty0 (1):\penalty0 481--499, 2013.

\bibitem[Lange et~al.(1999)Lange, Strother, Anderson, Nielsen, Holmes, Kolenda,
  Savoy, and Hansen]{lange1999plurality}
Nicholas Lange, Stephen~C Strother, Jon~R Anderson, Finn~{\AA} Nielsen,
  Andrew~P Holmes, Thomas Kolenda, Robert Savoy, and Lars~Kai Hansen.
\newblock {Plurality and resemblance in fMRI data analysis}.
\newblock \emph{NeuroImage}, 10\penalty0 (3):\penalty0 282--303, 1999.

\bibitem[Lee~Rodgers and Nicewander(1988)]{lee1988thirteen}
Joseph Lee~Rodgers and W~Alan Nicewander.
\newblock Thirteen ways to look at the correlation coefficient.
\newblock \emph{{The American Statistician}}, 42\penalty0 (1):\penalty0 59--66,
  1988.

\bibitem[Lehmann and Casella(2006)]{lehmann2006theory}
Erich~L Lehmann and George Casella.
\newblock \emph{{Theory of Point Estimation}}.
\newblock Springer Science \& Business Media, 2006.

\bibitem[Lin et~al.(2002)Lin, Hedayat, Sinha, and Yang]{lin2002statistical}
Lawrence Lin, AS~Hedayat, Bikas Sinha, and Min Yang.
\newblock Statistical methods in assessing agreement: Models, issues, and
  tools.
\newblock \emph{Journal of the American Statistical Association}, 97\penalty0
  (457):\penalty0 257--270, 2002.

\bibitem[Lin(1989)]{lin1989concordance}
Lawrence I-Kuei Lin.
\newblock A concordance correlation coefficient to evaluate reproducibility.
\newblock \emph{Biometrics}, pages 255--268, 1989.

\bibitem[Lin(2000)]{lin2000total}
Lawrence I-Kuei Lin.
\newblock Total deviation index for measuring individual agreement with
  applications in laboratory performance and bioequivalence.
\newblock \emph{Statistics in medicine}, 19\penalty0 (2):\penalty0 255--270,
  2000.

\bibitem[Lombard et~al.(2002)Lombard, Snyder-Duch, and
  Bracken]{lombard2002content}
Matthew Lombard, Jennifer Snyder-Duch, and Cheryl~Campanella Bracken.
\newblock Content analysis in mass communication: Assessment and reporting of
  intercoder reliability.
\newblock \emph{Human communication research}, 28\penalty0 (4):\penalty0
  587--604, 2002.

\bibitem[Ma et~al.(2013)Ma, Gulani, Seiberlich, Liu, Sunshine, Duerk, and
  Griswold]{ma2013magnetic}
Dan Ma, Vikas Gulani, Nicole Seiberlich, Kecheng Liu, Jeffrey~L Sunshine,
  Jeffrey~L Duerk, and Mark~A Griswold.
\newblock Magnetic resonance fingerprinting.
\newblock \emph{Nature}, 495\penalty0 (7440):\penalty0 187, 2013.

\bibitem[McGraw and Wong(1996)]{mcgraw1996forming}
Kenneth~O McGraw and Seok~P Wong.
\newblock Forming inferences about some intraclass correlation coefficients.
\newblock \emph{{Psychological Methods}}, 1\penalty0 (1):\penalty0 30, 1996.

\bibitem[Murtaza et~al.(2013)Murtaza, Dawson, Tsui, Gale, Forshew, Piskorz,
  Parkinson, Chin, Kingsbury, Wong, et~al.]{murtaza2013non}
Muhammed Murtaza, Sarah-Jane Dawson, Dana~WY Tsui, Davina Gale, Tim Forshew,
  Anna~M Piskorz, Christine Parkinson, Suet-Feung Chin, Zoya Kingsbury,
  Alvin~SC Wong, et~al.
\newblock {Non-invasive analysis of acquired resistance to cancer therapy by
  sequencing of plasma DNA}.
\newblock \emph{Nature}, 497\penalty0 (7447):\penalty0 108, 2013.

\bibitem[Nickerson(1997)]{nickerson1997note}
Carol~AE Nickerson.
\newblock {A note on ``A concordance correlation coefficient to evaluate
  reproducibility"}.
\newblock \emph{Biometrics}, pages 1503--1507, 1997.

\bibitem[Nishizuka et~al.(2003)Nishizuka, Charboneau, Young, Major, Reinhold,
  Waltham, Kouros-Mehr, Bussey, Lee, Espina, et~al.]{nishizuka2003proteomic}
Satoshi Nishizuka, Lu~Charboneau, Lynn Young, Sylvia Major, William~C Reinhold,
  Mark Waltham, Hosein Kouros-Mehr, Kimberly~J Bussey, Jae~K Lee, Virginia
  Espina, et~al.
\newblock {Proteomic profiling of the NCI-60 cancer cell lines using new
  high-density reverse-phase lysate microarrays}.
\newblock \emph{{Proceedings of the National Academy of Sciences}},
  100\penalty0 (24):\penalty0 14229--14234, 2003.

\bibitem[Pandit et~al.(2018{\natexlab{a}})Pandit, Cummins, Schmitt, Hantke,
  Graf, Paletta, and Schuller]{Pandit18-TAA}
Vedhas Pandit, Nicholas Cummins, Maximilian Schmitt, Simone Hantke, Franz Graf,
  Lucas Paletta, and Bj\"orn Schuller.
\newblock Tracking authentic and in-the-wild emotions using speech.
\newblock In \emph{{Proc. 1st ACII Asia 2018}}, Beijing, P.\,R.\ China, May
  2018{\natexlab{a}}. AAAC, IEEE.

\bibitem[Pandit et~al.(2018{\natexlab{b}})Pandit, Schmitt, Cummins, Graf,
  Paletta, and Schuller]{Pandit18-HGI}
Vedhas Pandit, Maximilian Schmitt, Nicholas Cummins, Franz Graf, Lucas Paletta,
  and Bj\"orn Schuller.
\newblock {How Good Is Your Model `Really'? On `Wildness' of the In-the-wild
  Speech-based Affect Recognisers}.
\newblock In \emph{{Proceedings 20th International Conference on Speech and
  Computer, SPECOM 2018}}, Leipzig, Germany, September 2018{\natexlab{b}}.
  ISCA, Springer.

\bibitem[Pandit et~al.(2019)Pandit, Schmitt, Cummins, and
  Schuller]{Pandit19-IKH}
Vedhas Pandit, Maximilian Schmitt, Nicholas Cummins, and Bj\"orn Schuller.
\newblock {I know how you feel now, and here's why!: Demystifying
  Time-continuous High Resolution Text-based Affect Predictions In the Wild}.
\newblock In \emph{{Proceedings of the 32nd IEEE International Symposium on
  Computer-Based Medical Systems (CBMS 2019)}}, pages 465--470, C\'ordoba,
  Spain, June 2019. IEEE, IEEE.

\bibitem[Pandit et~al.(2020)Pandit, Schmitt, Cummins, and
  Schuller]{Pandit20-ISI}
Vedhas Pandit, Maximilian Schmitt, Nicholas Cummins, and Bj\"orn~W.\ Schuller.
\newblock {I see it in your eyes: Training the shallowest-possible CNN to
  recognise emotions and pain from muted web-assisted in-the-wild video-chats
  in real-time}.
\newblock \emph{Information Processing and Management}, 57, 2020.

\bibitem[Pearson(1895)]{pearson1895note}
Karl Pearson.
\newblock Note on regression and inheritance in the case of two parents.
\newblock \emph{{Proceedings of the Royal Society of London}}, 58:\penalty0
  240--242, 1895.

\bibitem[Quan and Shih(1996)]{quan1996assessing}
Hui Quan and Weichung~J Shih.
\newblock Assessing reproducibility by the within-subject coefficient of
  variation with random effects models.
\newblock \emph{Biometrics}, pages 1195--1203, 1996.

\bibitem[Rider(1930)]{rider1930survey}
Paul~R Rider.
\newblock A survey of the theory of small samples.
\newblock \emph{Annals of Mathematics}, pages 577--628, 1930.

\bibitem[Rieck and Laskov(2008)]{rieck2008linear}
Konrad Rieck and Pavel Laskov.
\newblock Linear-time computation of similarity measures for sequential data.
\newblock \emph{Journal of Machine Learning Research}, 9\penalty0
  (Jan):\penalty0 23--48, 2008.

\bibitem[Ringeval et~al.(2015)Ringeval, Schuller, Valstar, Cowie, and
  Pantic]{Ringeval15-POT}
Fabien Ringeval, Bj\"orn Schuller, Michel Valstar, Roddy Cowie, and Maja
  Pantic, editors.
\newblock \emph{{Proceedings of the 5th International Workshop on Audio/Visual
  Emotion Challenge, AVEC'15, co-located with MM 2015}}, Brisbane, Australia,
  October 2015. ACM, ACM.

\bibitem[Ringeval et~al.(2017)Ringeval, Valstar, Gratch, Schuller, Cowie, and
  Pantic]{Ringeval17-POT}
Fabien Ringeval, Michel Valstar, Jonathan Gratch, Bj\"orn Schuller, Roddy
  Cowie, and Maja Pantic, editors.
\newblock \emph{{Proceedings of the 7th International Workshop on Audio/Visual
  Emotion Challenge, AVEC'17, co-located with MM 2017}}, Mountain View, CA,
  October 2017. ACM, ACM.

\bibitem[Ringeval et~al.(2018{\natexlab{a}})Ringeval, Schuller, Valstar, Cowie,
  Kaya, Schmitt, Amiriparian, Cummins, Lalanne, Michaud, Ciftci, G\"ulec,
  Salah, and Pantic]{Ringeval18-A2W}
Fabien Ringeval, Bj\"orn Schuller, Michel Valstar, Roddy Cowie, Heysem Kaya,
  Maximilian Schmitt, Shahin Amiriparian, Nicholas Cummins, Dennis Lalanne,
  Adrien Michaud, Elvan Ciftci, H\"useyin G\"ulec, Albert~Ali Salah, and Maja
  Pantic.
\newblock {AVEC 2018 Workshop and Challenge: Bipolar Disorder and
  Cross-Cultural Affect Recognition}.
\newblock In \emph{{Proceedings of the 8th International Workshop on
  Audio/Visual Emotion Challenge, AVEC'18, co-located with the 26th ACM
  International Conference on Multimedia, MM 2018}}, Seoul, South Korea,
  October 2018{\natexlab{a}}. ACM, ACM.

\bibitem[Ringeval et~al.(2018{\natexlab{b}})Ringeval, Schuller, Valstar,
  Cummins, Cowie, Soleymani, Schmitt, Amiriparian, Messner, Tavabi, Song,
  Alisamir, Lui, Zhao, and Pantic]{Ringeval19-A2W}
Fabien Ringeval, Bj\"orn Schuller, Michel Valstar, Nicholas Cummins, Roddy
  Cowie, Mohammad Soleymani, Maximilian Schmitt, Shahin Amiriparian, Eva-Maria
  Messner, Leili Tavabi, Siyang Song, Sina Alisamir, Shuo Lui, Ziping Zhao, and
  Maja Pantic.
\newblock {AVEC 2019 Workshop and Challenge: State-of-Mind, Depression with AI,
  and Cross-Cultural Affect Recognition}.
\newblock In Fabien Ringeval, Bj\"orn Schuller, Michel Valstar, Nicholas
  Cummins, Roddy Cowie, and Maja Pantic, editors, \emph{{Proceedings of the 9th
  International Workshop on Audio/Visual Emotion Challenge, AVEC'19, co-located
  with the 27th ACM International Conference on Multimedia, MM 2019}}, Niece,
  France, October 2018{\natexlab{b}}. ACM, ACM.

\bibitem[Schmitt and Schuller(2017)]{Schmitt17-OIT}
M.~Schmitt and B.~Schuller.
\newblock {openXBOW -- Introducing the Passau Open-Source Crossmodal
  Bag-of-Words Toolkit}.
\newblock \emph{Journal of Machine Learning Research}, 18\penalty0
  (96):\penalty0 1--5, 2017.

\bibitem[Schmitt et~al.(2019)Schmitt, Cummins, and Schuller]{Schmitt19-CER}
Maximilian Schmitt, Nicholas Cummins, and Bj\"orn~W.\ Schuller.
\newblock {Continuous Emotion Recognition in Speech -- Do We Need Recurrence?}
\newblock In \emph{{Proceedings INTERSPEECH 2019, 20th Annual Conference of the
  International Speech Communication Association}}, pages 2808--2812, Graz,
  Austria, September 2019. ISCA, ISCA.

\bibitem[Sheynin(1979)]{sheynin1979cf}
Oscar~B Sheynin.
\newblock Cf gauss and the theory of errors.
\newblock \emph{Archive for history of exact sciences}, 20\penalty0
  (1):\penalty0 21--72, 1979.

\bibitem[Shrout and Fleiss(1979)]{shrout1979intraclass}
Patrick~E Shrout and Joseph~L Fleiss.
\newblock Intraclass correlations: uses in assessing rater reliability.
\newblock \emph{{Psychological Bulletin}}, 86\penalty0 (2):\penalty0 420, 1979.

\bibitem[Smeeton(1985)]{smeeton1985early}
Nigel~C Smeeton.
\newblock Early history of the kappa statistic.
\newblock \emph{Biometrics}, 41:\penalty0 795, 1985.

\bibitem[Spearman(1961)]{spearman1961proof}
Charles Spearman.
\newblock The proof and measurement of association between two things.
\newblock 1961.

\bibitem[St.~Laurent(1998)]{st1998evaluating}
Roy~T St.~Laurent.
\newblock {Evaluating agreement with a gold standard in method comparison
  studies}.
\newblock \emph{Biometrics}, pages 537--545, 1998.

\bibitem[Sz{\'e}kely et~al.(2007)Sz{\'e}kely, Rizzo, Bakirov,
  et~al.]{szekely2007measuring}
G{\'a}bor~J Sz{\'e}kely, Maria~L Rizzo, Nail~K Bakirov, et~al.
\newblock Measuring and testing dependence by correlation of distances.
\newblock \emph{The annals of statistics}, 35\penalty0 (6):\penalty0
  2769--2794, 2007.

\bibitem[Taylor(1990)]{taylor1990interpretation}
Richard Taylor.
\newblock Interpretation of the correlation coefficient: A basic review.
\newblock \emph{{Journal of Diagnostic Medical Sonography}}, 6\penalty0
  (1):\penalty0 35--39, 1990.

\bibitem[Trigeorgis et~al.(2016)Trigeorgis, Ringeval, Br\"uckner, Marchi,
  Nicolaou, Schuller, and Zafeiriou]{Trigeorgis16-AFE}
George Trigeorgis, Fabien Ringeval, Raymond Br\"uckner, Erik Marchi, Mihalis
  Nicolaou, Bj\"orn Schuller, and Stefanos Zafeiriou.
\newblock {Adieu Features? End-to-End Speech Emotion Recognition using a Deep
  Convolutional Recurrent Network}.
\newblock In \emph{{Proceedings 41st IEEE International Conference on
  Acoustics, Speech, and Signal Processing, ICASSP 2016}}, pages 5200--5204,
  Shanghai, P.\,R.\ China, March 2016. IEEE, IEEE.

\bibitem[Valstar et~al.(2016)Valstar, Gratch, Schuller, Ringeval, Cowie, and
  Pantic]{Valstar16-POT}
Michel Valstar, Jonathan Gratch, Bj\"orn Schuller, Fabien Ringeval, Roddy
  Cowie, and Maja Pantic, editors.
\newblock \emph{{Proceedings of the 6th International Workshop on Audio/Visual
  Emotion Challenge, AVEC'16, co-located with MM 2016}}, Amsterdam, The
  Netherlands, October 2016. ACM, ACM.

\bibitem[Weida(1927)]{weida1927various}
Frank~M Weida.
\newblock On various conceptions of correlation.
\newblock \emph{Annals of Mathematics}, pages 276--312, 1927.

\bibitem[Weninger et~al.(2016)Weninger, Ringeval, Marchi, and
  Schuller]{Weninger16-DTR}
Felix Weninger, Fabien Ringeval, Erik Marchi, and Bj\"orn Schuller.
\newblock {Discriminatively trained recurrent neural networks for continuous
  dimensional emotion recognition from audio}.
\newblock In \emph{Proceedings of the 25th International Joint Conference on
  Artificial Intelligence, IJCAI 2016}, pages 2196--2202, New York City, NY,
  July 2016. IJCAI/AAAI.

\bibitem[Willmott and Matsuura(2005)]{willmott2005advantages}
Cort~J Willmott and Kenji Matsuura.
\newblock {Advantages of the mean absolute error (MAE) over the root mean
  square error (RMSE) in assessing average model performance}.
\newblock \emph{Climate research}, 30\penalty0 (1):\penalty0 79--82, 2005.

\bibitem[Zhang et~al.(2008)]{zhang2008quotient}
Zhengjun Zhang et~al.
\newblock Quotient correlation: A sample based alternative to pearson's
  correlation.
\newblock \emph{The Annals of Statistics}, 36\penalty0 (2):\penalty0
  1007--1030, 2008.

\bibitem[Zhao et~al.(2013)Zhao, Liu, and Deng]{zhao2013assumptions}
Xinshu Zhao, Jun~S Liu, and Ke~Deng.
\newblock Assumptions behind intercoder reliability indices.
\newblock \emph{{Annals of the International Communication Association}},
  36\penalty0 (1):\penalty0 419--480, 2013.

\end{thebibliography}
\end{document}